\theoremstyle{plain}
\newtheorem{theorem}{Theorem}[section]
\newtheorem{lemma}[theorem]{Lemma}
\newtheorem{corollary}[theorem]{Corollary}
\theoremstyle{definition}
\newtheorem{definition}[theorem]{Definition}
\theoremstyle{remark}
\newtheorem{fact}[theorem]{\bf Fact}
\begin{document}

\twocolumn[
\icmltitle{Fast Locality Sensitive Hashing with Theoretical Guarantee}



\icmlsetsymbol{equal}{*}

\begin{icmlauthorlist}
\icmlauthor{Zongyuan Tan}{equal,yy}
\icmlauthor{Hongya Wang}{equal,yy}
\icmlauthor{Bo Xu}{yy}
\icmlauthor{Minjie Luo}{yyy}
\icmlauthor{Ming Du}{yy}
\end{icmlauthorlist}

\icmlaffiliation{yy}{School of Computer Science and Technology, Donghua University, Shanghai, China}
\icmlaffiliation{yyy}{College of Science, Donghua University, Shanghai, China}

\icmlcorrespondingauthor{Hongya Wang}{hywang@dhu.edu.cn}

\icmlkeywords{Machine Learning, ICML}

\vskip 0.3in
]



\printAffiliationsAndNotice{\icmlEqualContribution} 

\begin{abstract}
Locality-sensitive hashing (LSH) is an effective randomized technique widely used in many machine learning tasks. The cost of hashing is proportional to data dimensions, and thus often the performance bottleneck when dimensionality is high and the number of hash functions involved is large. Surprisingly, however, little work has been done to improve the efficiency of LSH computation. In this paper, we design a simple yet efficient LSH scheme, named FastLSH, under $l_{2}$ norm. By combining random sampling and random projection, FastLSH reduces the time complexity from $O(n)$ to $O(m)$ ($m<n$), where $n$ is the data dimensionality and $m$ is the number of sampled dimensions. Moreover, FastLSH has provable LSH property, which distinguishes it from the non-LSH fast sketches. We conduct comprehensive experiments over a collection of real and synthetic datasets for the nearest neighbor search task. Experimental results demonstrate that FastLSH is \emph{on par with} the state-of-the-arts in terms of answer quality, space occupation and query efficiency, while enjoying up to 80x speedup in hash function evaluation. We believe that FastLSH is a promising alternative to the classic LSH scheme.
\end{abstract}

\section{Introduction}
\label{sec:intro}
Nearest neighbor search (NNS) is an essential problem in machine learning, which has numerous applications such as face recognition, information retrieval and duplicate detection. The purpose of nearest neighbor search is to find the point in the dataset $\mathbf{D} =\{\textbf{\emph{v}}_{1},\textbf{\emph{v}}_{2},\ldots,\textbf{\emph{v}}_{N}\}$ that is most similar (has minimal distance) to the given query $\textbf{\emph{u}}$. For low dimensional spaces ($<$10), popular tree-based index structures such as KD-tree~\cite{KD-Tree}, SR-tree~\cite{SR-Tree}, etc. deliver exact answers and provide satisfactory performance. For high dimensional spaces, however, these index structures suffer from the well-known curse of dimensionality, that is, their performance is even worse than that of linear scans~\cite{Linear-Scan}. To address this issue, one feasible way is to offer approximate results by trading accuracy for efficiency~\cite{tradeoffForANN}.

Locality-sensitive hashing (LSH) is an effective randomized technique to solve the problem of approximate nearest neighbor (ANN) search in high dimensional spaces~\cite{lsh,pslsh,near-lsh}. The basic idea of LSH is to map high dimensional points into buckets in low dimensional spaces using random hash functions, by which similar points have higher probability to end up in the same bucket than dissimilar points. A number of LSH functions are proposed for different similarity (distance) measures such as Hamming distance~\cite{lsh}, $l_2$ norm~\cite{pslsh}, angular similarity~\cite{simHash}, Jaccard similarity~\cite{minHash} and maximum inner product~\cite{MIPS1}.

The LSH scheme for $l_2$ norm (E2LSH) is originally proposed in~\cite{pslsh,E2LSH-package} based on $p$-stable distributions. Owing to the sub-linear time complexity and theoretical guarantee on query accuracy, E2LSH is arguably one of the most popular ANN search algorithms both in theory and practice, and many variants have been proposed to achieve much better space occupation and query response time~\cite{mplsh,LSB-Tree,c2lsh,SRS,qalsh,R2LSH,PDA-LSH,PM-LSH,DB-LSH}. Throughout this article, we focus on the LSH scheme under $l_2$ norm, and the extension for angular similarity and maximum inner product will be discussed in Section~\ref{sec:extension-of-fastlsh}.

In addition to answer ANN queries, LSH finds applications in many other domains. To name a few, LSH is utilized in hierarchical clustering to approximate the distances and reduce the time complexity of clustering~\cite{hierarchinal-cluster}.
~\cite{more-cluster} uses LSH to remove outliers from the initial samples in order to perform association rule mining on large datasets in a reasonable amount of time. Recently, LSH is recognized as an effective sampler for efficient large-scale neural network training~\cite{SLIDE}.


All of these LSH applications involve computation of LSH functions over the entire dataset. Take the widely used E2LSH as an example, computing $k$ hashes of a vector $\textbf{\emph{v}}$ takes $O(nk)$ computation, where $n$ is the dimensionality of $\textbf{\emph{v}}$ and $k$ is the number of hash functions. For typical ANN search task, $k$ commonly ranges from few hundreds to several thousands, and keeps growing with the cardinality of the dataset ($N$) since the number of hashes required by E2LSH is $O(N^\rho)$~\cite{pslsh}. Hence, hashing is the main computational and resource bottleneck step in almost all LSH-based applications, especially when data come in a streaming fashion and/or LSH data structures have to be constructed repeatedly~\cite{PDA-LSH,STREAM}.



 One notable work for speeding up the computation of LSH functions is the densified one permutation hashing~\cite{densifying}, which requires only $O(n+k)$ computations instead of $O(nk)$ of Minhashing~\cite{minHash}.
 Surprisingly enough, little endeavor has been made on more efficient LSH schemes under $l_2$ norm. The only known technique, termed as ACHash~\cite{DHHash}, exploits fast Hadamard transform to estimate the distance distortion of two points in the Euclidean space. This method, like other fast JL sketches~\cite{JL-Lemma1,JL-Lemma2}, does not owns the provable LSH property. Thus, it is not a desirable alternative to E2LSH because there are substantial empirical evidence that using these (non-LSH) sketches for indexing leads to a drastic bias in the expected behavior, leading to poor accuracy~\cite{minHash}.

\textbf{Our Contribution:} We develop a simple yet efficient LSH scheme (FastLSH) under $l_{2}$ norm. FastLSH involves two basic operations - random sampling and random projection, and offers (empirically) near constant time complexity instead of $O(nk)$ of E2LSH. Also, we derive the expression of the probability of collision (equity of hash values) for FastLSH and prove the asymptotic equivalence w.r.t $m$ (the number of sampled dimensions) between FastLSH and E2LSH, which means that our proposal owns the desirable LSH property. We also rigidly analyze how $m$ affects the probability of collision when $m$ is relatively small. To further validate our claims, we conduct comprehensive experiments over various publicly available datasets for the ANN search task.  Experimental results demonstrate that FastLSH has comparable answer accuracy and query efficiency with the classic LSH scheme, while significantly reducing the cost of hashing.
Considering its simplicity and efficiency, we believe that FastLSH is a promising alternative to E2LSH in practical applications.

\section{Prelimianries}
\label{sec:prelimianries}
In this section, we introduce notations and background knowledge used in this article. Let $\mathbf{D}$ be the dataset of size $N$ in $\mathbb{R}^{n}$ and $\textbf{\emph{v}} \in \mathbf{D}$ be a data point (vector) and $\textbf{\emph{u}} \in \mathbb{R}^{n}$ be a query vector. We denote $\phi(x)=\frac{1}{\sqrt{2\pi}}\exp(-\frac{x^{2}}{2})$ and $\Phi(x)=\int_{-\infty }^{x}\frac{1}{\sqrt{2\pi}}\exp(-\frac{x^{2}}{2})dx  $ as the probability density function (PDF) and cumulative distribution function (CDF) of the standard normal distribution $\mathcal{N}(0,1)$, respectively.

\subsection{Locality Sensitive Hashing}
\label{sec:lsh}
\begin{definition}
($c$-Approximate $R$-Near Neighbor or $(c,R)$-NN problem) Given a set $\mathbf{D}$ in $n$-dimensional Euclidean space $\mathbb{R}^{n}$ and parameters $R>0$, $\delta>0$, construct a data structure which, for any given query point $\textbf{\emph{u}}$, does the following with probability $1-\delta$: if there exists an $R$-near neighbor of $\textbf{\emph{u}}$ in $\mathbf{D}$, it reports some $cR$-near neighbor of $\textbf{\emph{u}}$ in $\mathbf{D}$.
\end{definition}

Formally, an $R$-near neighbor of $\textbf{\emph{u}}$ is a point $\textbf{\emph{v}}$ such that $\left \| \textbf{\emph{v}}-\textbf{\emph{u}} \right \|_{2}\le R$. Locality Sensitive Hashing is an effectively randomized technique to solve the $(c,R)$-NN problem. It is a family of hash functions that can hash points into buckets, where similar points have higher probability to end up in the same bucket than dissimilar points. Consider a family $\mathcal{H}$ of hash functions mapping $\mathbb{R}^{n}$ to some universe $U$.

\begin{definition}
(Locality Sensitive Hashing) A hash function family $\mathcal{H}=\{h: \mathbb{R}^{n} \rightarrow U\}$ is called $(R,cR,p_{1},p_{2})$-sensitive if for any $\textbf{\emph{v}}, \textbf{\emph{u}} \in \mathbb{R}^{n}$
\begin{itemize}
\item[$\bullet$]
if $\left \| \textbf{\emph{v}}-\textbf{\emph{u}} \right \|_{2}\leq R$ then $Pr_{\mathcal{H}}[h(\textbf{\emph{v}})=h(\textbf{\emph{u}})]\geq p_{1}$;

\item[$\bullet$]
if $\left \| \textbf{\emph{v}}-\textbf{\emph{u}} \right \|_{2}\geq cR$ then $Pr_{\mathcal{H}}[h(\textbf{\emph{v}})=h(\textbf{\emph{u}})]\leq p_{2}$;
\end{itemize}
In order for the LSH family to be useful, it has to satisfy $c > 1$ and $p_{1} > p_{2}$. Please note that only hashing schemes with such a property are qualified locality sensitive hashing and can enjoy the theoretical guarantee of LSH.
\end{definition}

\subsection{LSH for $l_{2}$ Norm}
\label{sec:lsh-for-l2-norn}
\cite{pslsh} presents an LSH family that can be employed for $l_{p}$ $(p\in(0,2])$ norms based on $p$-stable distribution. When $p=2$, it yields the well-known LSH family for $l_{2}$ norm (E2LSH). The hash function is defined as follows:
\begin{equation}
\label{equ:e2lsh}
h_{\textbf{\emph{a}},b} (\textbf{\emph{v}})=\left \lfloor \frac{\textbf{\emph{a}}^{T}\textbf{\emph{v}} +b}{w} \right \rfloor
\end{equation}
where $\lfloor \rfloor$ is the floor operation, $\textbf{\emph{a}}$ is a $n$-dimensional vector with each entry chosen independently from $\mathcal{N}(0,1)$ and $b$ is a real number chosen uniformly from the range $[0,w]$. $w$ is an important parameter by which one could tune the performance of E2LSH.

For E2LSH, the probability of collision for any pair $(\textbf{\emph{v}},\textbf{\emph{u}})$ under $h_{\textbf{\emph{a}} ,b}(\cdot)$ can be computed as:
\begin{equation}
\label{eqn:prob.-of-collision-e2lsh}
p(s)=Pr[h_{\textbf{\emph{a}} ,b}(\textbf{\emph{v}})=h_{\textbf{\emph{a}} ,b}(\textbf{\emph{u}})]=\int_{0}^{w}f_{|sX|} (t)(1-\frac{t}{w})dt
\end{equation}
where $s=\|\textbf{\emph{v}}-\textbf{\emph{u}}\|_{2}$ is the Euclidean distance between $(\textbf{\emph{v}},\textbf{\emph{u}})$, and $f_{|sX|}(t)$ is the PDF of the absolute value of normal distribution $sX$ ($X$ is a random variable following the standard normal distribution). Given $w$, $p(s)$ is a monotonically decreasing function of $s$, which means $h_{\textbf{\emph{a}} ,b}(\cdot)$ satisfies the LSH property.

\begin{fact}\cite{pslsh}
Given the LSH family, a data structure can be built to solve the $(c,R)$-NN problem with $O(N^{\rho} \log N)$ query time and $O(N^{1+\rho})$ space, where $\rho = \frac{\log(1/p_{1})}{\log(1/p_{2})}$.
\end{fact}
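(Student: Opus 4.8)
The plan is to recover this as the classic amplification theorem of Indyk--Motwani applied to the $(R,cR,p_{1},p_{2})$-sensitive family $\mathcal{H}$ supplied by \cref{eqn:prob.-of-collision-e2lsh}. A single $h_{\textbf{\emph{a}},b}$ only separates near pairs (collision probability $\ge p_{1}$) from far pairs ($\le p_{2}$) by a constant gap $p_{1}>p_{2}$, which is far too weak to localize candidates. First I would sharpen this gap by \emph{AND-amplification}: concatenate $k$ independent draws from $\mathcal{H}$ into a compound function $g(\textbf{\emph{v}})=(h_{1}(\textbf{\emph{v}}),\ldots,h_{k}(\textbf{\emph{v}}))$. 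By independence a far pair collides under $g$ with probability at most $p_{2}^{k}$ and an $R$-near pair with probability at least $p_{1}^{k}$. Choosing $k=\lceil \log_{1/p_{2}}N\rceil$ forces $p_{2}^{k}\le 1/N$, so the expected number of far points sharing the query's bucket in one table is at most $N\cdot p_{2}^{k}\le 1$.

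Raising to the $k$-th power also shrinks the near-collision probability, so next I would hedge with \emph{OR-amplification}: build $L$ independent tables $g_{1},\ldots,g_{L}$ and call two points candidates if they collide in at least one table. The crucial algebraic step is the identity
\[
p_{1}^{k}=p_{1}^{\log_{1/p_{2}}N}=N^{-\log(1/p_{1})/\log(1/p_{2})}=N^{-\rho},
\]
so a true $R$-near neighbor $\textbf{\emph{v}}^{*}$ of $\textbf{\emph{u}}$ misses $\textbf{\emph{u}}$'s bucket in a single table with probability at most $1-N^{-\rho}$. Setting $L=N^{\rho}$ makes the chance of missing $\textbf{\emph{v}}^{*}$ in \emph{all} tables at most $(1-N^{-\rho})^{N^{\rho}}\le 1/e$.

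For correctness two events must hold at once: $\textbf{\emph{v}}^{*}$ is retrieved, and few spurious far points are examined. The expected total number of far collisions across all tables is at most $L\cdot N\cdot p_{2}^{k}\le L$, so by Markov's inequality it exceeds $3L$ with probability at most $1/3$; I would therefore interrupt the scan after $3L$ candidates and return any examined point within distance $cR$. A union bound over the two failure events leaves a constant success probability, which is boosted to $1-\delta$ by $O(\log(1/\delta))$ independent repetitions. The query evaluates $L$ compound hashes, each costing $k=O(\log N)$ single evaluations, and examines $O(L)$ candidates, giving the stated $O(N^{\rho}\log N)$; the $L=N^{\rho}$ tables each hold $N$ pointers, giving $O(N^{1+\rho})$ space.

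The hard part is that $k$ and $L$ pull in opposite directions: a larger $k$ suppresses far-point collisions (good for time) but \emph{exponentially} suppresses near-point retrieval (bad, forcing a larger $L$ and more space). The whole content of the theorem is balancing these two effects simultaneously, and it is precisely the identity $p_{1}^{\log_{1/p_{2}}N}=N^{-\rho}$ that pins the exponent $\rho$ at the unique value where both the number of examined candidates and the failure probability stay under control.
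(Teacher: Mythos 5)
The paper states this result as a \textbf{Fact} imported from \cite{pslsh} and gives no proof of its own, so the only meaningful comparison is with the argument in that cited reference; your proposal reproduces that standard Indyk--Motwani amplification argument correctly, including the choice $k=\lceil \log_{1/p_2}N\rceil$, the identity $p_1^{k}=N^{-\rho}$ fixing $L=N^{\rho}$, the Markov bound justifying the $3L$-candidate cutoff, and the union bound giving constant success probability. Your derivation is sound and is essentially the same proof the paper is implicitly relying on.
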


\subsection{Truncated Normal Distribution}
\label{sec:truncated-normal}






The truncated normal distribution is suggested if one need to use the normal distribution to describe the random variation of a quantity that, for physical
reasons, must be strictly in the range of a truncated interval instead of $(-\infty, +\infty)$~\cite{truncated-normal}. The truncated normal distribution is the probability distribution derived from that of normal random variables by bounding the values from either below or above (or both). Assume that the interval $(a_{1}, a_{2})$ is the truncated interval, then the probability density function can be written as:
\begin{equation}
\label{eqn:pdf-of-truncated-normal}
\psi(x;\mu,\sigma^{2},a_{1},a_{2}) =
\left \{
\begin{array}{cc}
0 & x \leq a_{1} \\
\frac{\phi(x;\mu,\sigma^{2})}{\Phi(a_{2};\mu,\sigma^{2})-\Phi(a_{1};\mu,\sigma^{2})} & a_{1} < x < a_{2} \\
0  & a_{2} \leq x
\end{array}
\right.
\end{equation}
The cumulative distribution function is:
\begin{equation}
\label{eqn:cdf-of-truncated-normal}
\Psi(x;\mu,\sigma^{2},a_{1},a_{2}) =
\left \{
\begin{array}{cc}
0 & x \leq a_{1} \\
\frac{\Phi(x;\mu,\sigma^{2}) - \Phi(a_{1};\mu,\sigma^{2})}{\Phi(a_{2};\mu,\sigma^{2})-\Phi(a_{1};\mu,\sigma^{2})} & a_{1} < x < a_{2} \\
1  & a_{2} \leq x
\end{array}
\right.
\end{equation}




\section{Fast LSH via Random Sampling}
\label{sec:fast-lsh-via-random-sampling}
\subsection{The Proposed LSH Function Family}
\label{sec:proposed-lsh-function-family}
The cost of hashing defined in Eqn.~\eqref{equ:e2lsh} is dominated by the inner product $\textbf{\emph{a}}^T\textbf{\emph{v}}$, which takes $O(n)$ multiplication and addition operations. As mentioned in Section~\ref{sec:intro}, hashing is one of the main computational bottleneck in almost all LSH applications when the number of hashes is large and the amount of data keeps increasing. To address this issue, we propose a novel family of locality sensitive hashing termed as FastLSH. Computing hash values with FastLSH involves two steps, i.e., random sampling and random projection.

In the first step, we do random sampling from $n$ dimensions. Particularly, we draw $m$ \emph{i.i.d.} samples in the range of 1 to $n$ uniformly to form a multiset $S$. For every $\textbf{\emph{v}}=\{v_1, v_2, \cdots, v_n\}$, we concatenate all $v_i$ to form a $m$-dimensional vector $\tilde{\textbf{\emph{v}}}=\{\tilde{v}_1, \tilde{v}_2, \cdots, \tilde{v}_m\}$ if $i \in S$. As a quick example, suppose $\textbf{\emph{v}}=\{1, 3, 5, 7, 9\}$ is a 5-dimensional vector and $S=\{2, 4, 2\}$. Then we can get a 3-dimensional vector $\tilde{\textbf{\emph{v}}}=\{3, 7, 3\}$ under $S$. It is easy to see that each entry in $\textbf{\emph{v}}$ has equal probability $\frac{m}{n}$ of being chosen. Next, we slightly overuse notation $S$ and denote by $S(\cdot)$ the random sampling operator. Thus, $\tilde{\textbf{\emph{v}}}\in \mathbb{R}^{m}=S(\textbf{\emph{v}})$ for $\textbf{\emph{v}}\in \mathbb{R}^{n}$ $(m < n)$.



In the second step, the hash value is computed in the same way as Eqn.~\eqref{equ:e2lsh} using $\tilde{\textbf{\emph{v}}}$ instead of \textbf{\emph{v}}, and then the overall hash function is formally defined as follows:
\begin{equation}
\label{eqn:fastlsh}
h_{\tilde{\textbf{\emph{a}}},\tilde{b}}(\textbf{\emph{v}}) =\left \lfloor \frac{\tilde{\textbf{\emph{a}}}^{T}S(\textbf{\emph{v}})+\tilde{b}}{\tilde{w}} \right \rfloor
\end{equation}
where $\tilde{\textbf{\emph{a}}}\in \mathbb{R}^{m}$ is the random projection vector of which each entry is chosen independently from $\mathcal{N}(0,1)$, $\tilde{w}$ is a user-specified constant and $\tilde{b}$ is a real number uniformly drawn from $[0,\tilde{w}]$. The hash function $h_{\tilde{\textbf{\emph{a}}},\tilde{b}}(\textbf{\emph{v}})$ maps a $n$-dimensional vector $\textbf{\emph{v}}$ onto the set of integers.

Compared with E2LSH, FastLSH reduces the cost of hashing from $O(n)$ to $O(m)$. As will be discussed in Section~\ref{sec:experiment}, a relatively small $m<n$ suffices to provide competitive performance against E2LSH, which leads to significant performance gain in hash function evaluation.

\subsection{ANN Search Using FastLSH}
\label{sec:ann-search-using-fastlsh}
FastLSH can be easily plugged into any existing LSH applications considering its simplicity. This section gives a brief discussion about how to use FastLSH for ANN search.

The canonical LSH index structure for ANN search is built as follows. A hash code $H(\textbf{\emph{v}}) = (h_{1}(\textbf{\emph{v}}), h_{2}(\textbf{\emph{v}}),\ldots, h_{k}(\textbf{\emph{v}})$ is computed using $k$ independent LSH functions (i.e., $H(\textbf{\emph{v}})$ is the concatenation of $k$ elementary LSH codes). Then a hash table is constructed by adding the 2-tuple $ \langle H(\textbf{\emph{v}}), \textit{id of } \textbf{\emph{v}} \rangle$ into corresponding bucket. To boost accuracy, $L$ groups of hash functions $H_i(\cdot), i=1, \cdots, L$ are drawn independently and uniformly at random from the LSH family, resulting in $L$ hash tables. To use FastLSH in such a ANN search framework, the only modification is to replace hash function definitin in Eqn.~\eqref{equ:e2lsh} with that of Eqn.~\eqref{eqn:fastlsh}.



To answer a query $\textbf{\emph{u}}$, one need to first compute $H_1(\textbf{\emph{u}}), \cdots, H_L(\textbf{\emph{u}})$ and then search all these $L$ buckets to obtain the combined set of candidates. There exists two ways (approximate and exact) to process these candidates. In the approximate version, we only evaluate no more than $3L$ points in the candidate set. The LSH theory ensures that the $(c, R)$-NN is found with a constant probability. In practice, however, the exact one is widely used since it offers better accuracy at the cost of evaluating all points in the candidate set~\cite{pslsh}. The search time consists of both the hashing time and the
time taken to prune the candidate set for the exact version. In this paper, we use the exact method to process a query similar to~\cite{pslsh,E2LSH-package}.

\section{Theoretical Analysis}
\label{sec:theoretical-analysis}
While FastLSH is easy to comprehend and simple to implement, it is non-trivial to show that the proposed LSH function meets the LSH property, i.e., the probability of collision for ($\textbf{\emph{v}},\textbf{\emph{u}}$) decreases with their $l_2$ norm increasing. In this section, we will first derive the probability of collision for FastLSH, and prove that its asymptotic behavior is equivalent to E2LSH. Then, by using both rigid analysis and the numerical method, we demonstrate that FastLSH still owns desirable LSH property when $m$ is relatively small.

\subsection{Probability of Collision}
\label{sec:probability-of-collision}
For given vector pair $(\textbf{\emph{v}},\textbf{\emph{u}})$, let $s =\left \| \textbf{\emph{v}}-\textbf{\emph{u}} \right \|_{2}$. For our purpose,  the collection of $n$ entries $(v_{i}-u_{i})^{2}$ $\{i=1,2,\ldots,n\}$ is viewed as a population, which follows an unknown distribution with finite mean $\mu =( {\textstyle \sum_{i=1}^{n}} (v_{i}-u_{i})^{2}) /n$ and variance $\sigma^{2}=({\textstyle \sum_{i=1}^{n}} ((v_{i}-u_{i})^{2}-\mu)^{2}) / n $. After performing the sampling operator $S(\cdot)$ of size $m$, $\textbf{\emph{v}}$ and $\textbf{\emph{u}}$ are transformed into $\tilde{\textbf{\emph{v}}}=S(\textbf{\emph{v}})$ and $\tilde{\textbf{\emph{u}}} =S(\textbf{\emph{u}})$, and the squared distance of $(\tilde{\textbf{\emph{v}}}, \tilde{\textbf{\emph{u}}})$ is $\tilde{s}^{2}= {\textstyle \sum_{i=1}^{m}} (\tilde{v}_{i}-\tilde{u}_{i})^{2}$. By Central Limit Theorem, we have the following lemma:

\begin{lemma}
\label{lemma:clt}
If $m$ is sufficiently large, then the sum $\tilde{s}^{2}$ of $m$ \emph{i.i.d.} random samples $(\tilde{v}_{i}-\tilde{u}_{i})^{2}$ $(i\in{1,2,\ldots,m})$ converges asymptotically to the normal distribution with mean $m\mu$ and variance $m\sigma^{2}$, i.e., $\tilde{s}^{2}\sim \mathcal{N}(m\mu,m\sigma^{2})$.
\end{lemma}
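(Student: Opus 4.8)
The plan is to recognize this as a direct instance of the classical Lindeberg--L\'evy Central Limit Theorem, so that the real work is simply verifying its hypotheses for the quantities at hand rather than proving any convergence from scratch. First I would set $X_i = (\tilde{v}_i - \tilde{u}_i)^2$ for $i = 1, \ldots, m$ and observe that, because the multiset $S$ is built by drawing $m$ indices independently and uniformly from $\{1, \ldots, n\}$ (that is, sampling \emph{with replacement}), each $X_i$ is obtained by selecting one element uniformly at random from the fixed finite population $\{(v_j - u_j)^2 : j = 1, \ldots, n\}$, and these selections are mutually independent. Hence the $X_i$ are i.i.d., and $\tilde{s}^2 = \sum_{i=1}^m X_i$ is exactly a sum of i.i.d. summands.

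The second step is to identify the per-sample moments and match them to the population quantities defined in the text. Since $X_i$ takes each value $(v_j - u_j)^2$ with probability $1/n$, its expectation is $\mathbb{E}[X_i] = \frac{1}{n}\sum_{j=1}^n (v_j - u_j)^2 = \mu$ and its variance is $\mathrm{Var}(X_i) = \frac{1}{n}\sum_{j=1}^n \bigl((v_j - u_j)^2 - \mu\bigr)^2 = \sigma^2$. These coincide precisely with the population mean and variance, and both are finite because the population itself is finite, which is the integrability condition the CLT requires.

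With i.i.d. summands of finite mean $\mu$ and finite variance $\sigma^2$ in hand, I would then invoke the Lindeberg--L\'evy CLT to conclude that the standardized sum $\frac{1}{\sigma\sqrt{m}}\bigl(\sum_{i=1}^m X_i - m\mu\bigr)$ converges in distribution to $\mathcal{N}(0,1)$ as $m \to \infty$. Undoing the standardization gives that $\tilde{s}^2$ is asymptotically distributed as $\mathcal{N}(m\mu, m\sigma^2)$, which is exactly the claim.

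Candidly, there is no deep obstacle here: the statement is a textbook application of the CLT, and the only point demanding genuine care is the i.i.d. verification of the first step. I would emphasize that independence rests entirely on sampling \emph{with replacement} --- the multiset construction of $S$ is precisely what renders the summands independent. Had the indices been sampled without replacement, the $X_i$ would be exchangeable but dependent, and one would instead need a finite-population CLT carrying a variance correction factor; thus the ``with replacement'' detail is the quiet load-bearing assumption of the entire argument and deserves to be stated explicitly.
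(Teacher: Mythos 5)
Your proposal is correct and follows essentially the same route as the paper, which simply invokes the Central Limit Theorem for the i.i.d.\ samples $(\tilde{v}_i-\tilde{u}_i)^2$ drawn with replacement from the finite population $\{(v_j-u_j)^2\}_{j=1}^n$ with mean $\mu$ and variance $\sigma^2$; your write-up merely makes explicit the moment identification and the with-replacement independence that the paper leaves implicit.
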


Lemma~\ref{lemma:clt} states that the squared distance between $\tilde{\textbf{\emph{v}}}$ and $\tilde{\textbf{\emph{u}}}$ follows a normal distribution for large $m$. Practically, a small $m$ (say 30) often suffices to make the sampling distribution of the sample mean approaches the normal in real-life applications~\cite{CLT,CLT1}.

Random variable $\tilde{s}^{2}$, however, does not follow exactly the normal distribution since $\tilde{s}^{2} \geq 0$ whereas the range of definition of the normal distribution is $(-\infty,+\infty)$. A mathematically defensible way to preserve the main features of the normal distribution while avoiding negative values involves \textit{the truncated normal distribution}, in which the range of definition is made finite at one or both ends of the interval.

Particularly, $\tilde{s}^{2}$ can be modeled by normal distribution $\tilde{s}^{2}\sim \mathcal{N}(m\mu,m\sigma^{2})$ over the truncation interval $[0,+\infty)$, that is, the singly-truncated normal distribution $\psi(x; \tilde{\mu},\tilde{\sigma}^{2},0,+\infty)$. Considering the fact that $\tilde{s}\geq 0$, we have $Pr[\tilde{s} < t]=Pr[\tilde{s}^{2}  < t^{2}]$ for any $t > 0$. Therefore, the CDF of $\tilde{s}$, denoted by $F_{\tilde{s}}$, can be computed as follows:
\begin{equation}
\label{eqn:cdf-of-s}
\begin{aligned}
F_{\tilde{s}}(t) &= Pr[\tilde{s} < t] = Pr[\tilde{s}^{2}  < t^{2}] \\ &=\int_{0}^{t^{2}}\psi(x; \tilde{\mu},\tilde{\sigma}^{2},0,\infty)dx
\end{aligned}
\end{equation}
where $\tilde{\mu} = m\mu$ and $\tilde{\sigma}^{2} = m\sigma^{2}$. Due to the fact that the PDF is the derivative of the CDF, the PDF of $\tilde{s}$, denoted by $f_{\tilde{s}}$, is derived as follows:
\begin{equation}
\label{eqn:pdf-of-s}
f_{\tilde{s}}(t) = \frac{d}{dt}[F_{\tilde{s}}(t)] = 2t\psi (t^{2}; \tilde{\mu},\tilde{\sigma}^{2},0,\infty)
\end{equation}

Recall that $\textbf{\emph{a}}$ is a projection vector with entries being \emph{i.i.d} samples drawn from $\mathcal{N}(0,1)$. It follows from the $p$-stability that the distance between projections $(\textbf{\emph{a}}^{T}\textbf{\emph{v}}-\textbf{\emph{a}}^{T}\textbf{\emph{u}})$ for two vectors $\textbf{\emph{v}}$ and $\textbf{\emph{u}}$ is distributed as $\left \| \textbf{\emph{v}}-\textbf{\emph{u}} \right \|_{2}X$, i.e., $sX$, where $X \sim \mathcal{N}(0,1)$~\cite{stable-distribution,pslsh}. Similarly, the projection distance between $\tilde{\textbf{\emph{v}}}$ and $\tilde{\textbf{\emph{u}}}$ under $\tilde{\textbf{\emph{a}}}$ $(\tilde{\textbf{\emph{a}}}^{T}\tilde{\textbf{\emph{v}}}-\tilde{\textbf{\emph{a}}}^{T}\tilde{\textbf{\emph{u}}})$ follows the distribution $\tilde{s}X$. Note that the PDF of $sX$, i.e., $f_{sX} (x) = \frac{1}{s}\phi(\frac{x}{s})$, is an important factor in calculating the probability of collision in Eqn.~\eqref{eqn:prob.-of-collision-e2lsh}. Hence, we need to get the PDF of $\tilde{s}X$ first in order to derive the probability of collision for FastLSH.

Although we know the distributions of both $\tilde{s}$ and $X$, it is not straight-forward to figure out the distribution of their product $\tilde{s}X$. Fortunately, Lemma~\ref{lemma:characteristic-func-of-w} gives the characteristic function of random variable $W=XY$, where $X$ and $Y$ are two independent random variables, one following a standard normal distribution and the other following a distribution with mean $\mu$ and variance $\sigma^{2}$.

\begin{lemma}
\label{lemma:characteristic-func-of-w}
The characteristic function of the product of two independent random variables $W=XY$ is as follows:
\begin{equation}
\nonumber
    \varphi_{W}(x)=E_Y\{\exp(-\frac{x^{2}Y^{2}}{2})\}
\end{equation}
where $X$ is a standard normal random variable and $Y$ is an independent random variable with mean $\mu$ and variance $\sigma^{2}$.
\end{lemma}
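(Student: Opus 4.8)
The plan is to compute the characteristic function of $W = XY$ directly from its definition $\varphi_W(x) = E[\exp(ixW)] = E[\exp(ixXY)]$ by conditioning on $Y$ and exploiting the fact that $X$ is standard normal. First I would write the expectation as an iterated expectation, using independence of $X$ and $Y$:
\begin{equation}
\nonumber
\varphi_W(x) = E_Y\bigl\{ E_X[\exp(ixXY) \mid Y] \bigr\}.
\end{equation}
The inner conditional expectation is, for a fixed value of $Y$, simply the characteristic function of the normal random variable $X$ evaluated at the point $xY$.

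The key step is to recall the well-known characteristic function of a standard normal variable, namely $E_X[\exp(itX)] = \exp(-t^2/2)$ for any real $t$. Substituting $t = xY$ gives $E_X[\exp(ixXY)\mid Y] = \exp(-x^2 Y^2/2)$, and taking the outer expectation over $Y$ yields exactly the claimed formula
\begin{equation}
\nonumber
\varphi_W(x) = E_Y\bigl\{\exp(-\tfrac{x^2 Y^2}{2})\bigr\}.
\end{equation}
Notice that the resulting expression is real-valued and does not contain the imaginary unit, which is what makes it convenient for the subsequent step of recovering the PDF of $\tilde{s}X$ via Fourier inversion.

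I do not expect any serious obstacle here, since the argument reduces to the Gaussian characteristic function together with a conditioning (tower) argument. The only points requiring care are justifying the interchange of the two expectations — which is immediate because the integrand $\exp(ixXY)$ is bounded in modulus by $1$, so Fubini's theorem applies without any integrability concern — and being explicit that $Y = \tilde{s}$ need only have a finite mean and variance, so no distributional assumption on $\tilde{s}$ beyond what Lemma~\ref{lemma:clt} supplies is needed. In the application I would then set $Y = \tilde{s}$ and $X \sim \mathcal{N}(0,1)$, so that $\varphi_{\tilde{s}X}(x) = E_{\tilde{s}}\{\exp(-x^2\tilde{s}^2/2)\}$, which can be combined with the truncated-normal density of $\tilde{s}$ from Eqn.~\eqref{eqn:pdf-of-s} and inverted to obtain $f_{\tilde{s}X}$, the final ingredient needed for the collision probability.
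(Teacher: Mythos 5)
Your proof is correct and follows essentially the same route as the paper's: write $\varphi_W(x)=E_Y\{E_X[\exp(ixXY)\mid Y]\}$ by conditioning on $Y$, then evaluate the inner expectation via the standard normal characteristic function $E_X[\exp(itX)]=\exp(-t^{2}/2)$ at $t=xY$. Your explicit appeal to Fubini (via boundedness of $|\exp(ixXY)|$) is a small tidiness bonus over the paper's version, which silently interchanges the expectations, but the argument is the same.
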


\begin{proof}
See Appendix~\ref{lemma:characteristic-func-of-w-in-appendix}
\end{proof}


Note that the distribution of a random variable is determined uniquely by its characteristic function. With respect to $\tilde{s}X$, the characteristic function can be obtained by Lemma~\ref{lemma:characteristic-func-of-w} since $X$ follows the standard normal:

\begin{lemma}
\label{lemma:characteristic-func-of-tildesx}
The characteristic function of $\tilde{s}X$ is as follows:
\begin{equation}
\nonumber
\begin{aligned}
\varphi_{\tilde{s}X}(x)& =\frac{1}{2(1-\Phi(\frac{-\tilde{\mu}}{\tilde{\sigma}}))} \exp(\frac{1}{8}x^4\tilde{\sigma}^2 -\frac{1}{2}\tilde{\mu}x^2) \\ & \cdot\operatorname{erfc}(\frac{\frac{1}{2} x^2 \tilde{\sigma}^2-\tilde{\mu}}{\sqrt{2} \tilde{\sigma}}) \quad (-\infty <x< +\infty)
\end{aligned}
\end{equation}
where $\operatorname{erfc}(t) = \frac{2}{\sqrt{\pi}}\int_{t}^{+\infty}\exp(-x^{2})dx$ $(-\infty <t< +\infty)$ is the complementary error function.
\end{lemma}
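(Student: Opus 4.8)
The plan is to reduce the claim to the expectation furnished by Lemma~\ref{lemma:characteristic-func-of-w} and then evaluate that expectation as an integral against the truncated-normal density of $\tilde{s}^{2}$. Since $X$ is standard normal and $\tilde{s}$ is independent of $X$, I would apply Lemma~\ref{lemma:characteristic-func-of-w} with $Y=\tilde{s}$ to get
\begin{equation}
\nonumber
\varphi_{\tilde{s}X}(x)=E_{\tilde{s}}\Bigl\{\exp\bigl(-\tfrac{1}{2}x^{2}\tilde{s}^{2}\bigr)\Bigr\}.
\end{equation}
Changing the variable of integration to $z=\tilde{s}^{2}$, which by assumption follows the singly-truncated normal $\psi(z;\tilde{\mu},\tilde{\sigma}^{2},0,\infty)$ of Eqn.~\eqref{eqn:pdf-of-truncated-normal}, this expectation becomes the Laplace-type integral
\begin{equation}
\nonumber
\varphi_{\tilde{s}X}(x)=\frac{1}{1-\Phi(-\tilde{\mu}/\tilde{\sigma})}\int_{0}^{\infty}\exp\bigl(-\tfrac{1}{2}x^{2}z\bigr)\,\frac{1}{\sqrt{2\pi}\,\tilde{\sigma}}\exp\Bigl(-\frac{(z-\tilde{\mu})^{2}}{2\tilde{\sigma}^{2}}\Bigr)\,dz,
\end{equation}
where the normalizing constant is $\Phi(+\infty)-\Phi(0;\tilde{\mu},\tilde{\sigma}^{2})=1-\Phi(-\tilde{\mu}/\tilde{\sigma})$, obtained by standardizing the untruncated CDF at the truncation point $0$.

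The core computation is completing the square in the combined exponent $-\tfrac{1}{2}x^{2}z-\frac{(z-\tilde{\mu})^{2}}{2\tilde{\sigma}^{2}}$. I would rewrite it as $-\frac{(z-\nu)^{2}}{2\tilde{\sigma}^{2}}+\bigl(\tfrac{1}{8}x^{4}\tilde{\sigma}^{2}-\tfrac{1}{2}\tilde{\mu}x^{2}\bigr)$, where the shifted centre is $\nu=\tilde{\mu}-\tfrac{1}{2}x^{2}\tilde{\sigma}^{2}$; the $z$-independent factor $\exp\bigl(\tfrac{1}{8}x^{4}\tilde{\sigma}^{2}-\tfrac{1}{2}\tilde{\mu}x^{2}\bigr)$ pulls out of the integral and already matches the exponential in the target expression. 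What remains is the Gaussian tail integral $\int_{0}^{\infty}\exp\bigl(-\frac{(z-\nu)^{2}}{2\tilde{\sigma}^{2}}\bigr)\,dz$; after the substitution $t=(z-\nu)/\tilde{\sigma}$ this equals $\tilde{\sigma}\sqrt{2\pi}\,\bigl(1-\Phi(-\nu/\tilde{\sigma})\bigr)$, with the truncation surviving as the finite lower limit $-\nu/\tilde{\sigma}$ rather than the $-\infty$ that would give a full Gaussian mass of one.

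Finally I would convert the standard-normal tail into the complementary error function via $1-\Phi(t)=\tfrac{1}{2}\operatorname{erfc}(t/\sqrt{2})$, which turns $1-\Phi(-\nu/\tilde{\sigma})$ into $\tfrac{1}{2}\operatorname{erfc}\bigl(\frac{\frac{1}{2}x^{2}\tilde{\sigma}^{2}-\tilde{\mu}}{\sqrt{2}\,\tilde{\sigma}}\bigr)$, since $-\nu/\tilde{\sigma}=(\tfrac{1}{2}x^{2}\tilde{\sigma}^{2}-\tilde{\mu})/\tilde{\sigma}$. Collecting the $\tfrac{1}{\sqrt{2\pi}\tilde{\sigma}}$ prefactor, the $\tilde{\sigma}\sqrt{2\pi}$ from the Gaussian integral, and the $\tfrac{1}{2}$ from the erfc identity against the normalizing constant reproduces the claimed formula exactly. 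I do not anticipate a genuine difficulty; the only steps requiring care are the bookkeeping of constants during the completion of the square and the correct standardization of both CDF evaluations (at the truncation point and at the shifted centre $\nu$). It is also worth recording that, because the exponent $-\tfrac{1}{2}x^{2}z$ is non-positive on the domain of integration, the integral converges absolutely for every real $x$, so the derived expression is valid on the whole line $(-\infty,+\infty)$ as stated.
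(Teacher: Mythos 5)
Your proposal is correct and follows essentially the same route as the paper's own proof: both invoke Lemma~\ref{lemma:characteristic-func-of-w} with $Y=\tilde{s}$, rewrite the resulting expectation as an integral against the truncated-normal density of $\tilde{s}^{2}$ (the paper's change of variable $dy^{2}$ is exactly your substitution $z=\tilde{s}^{2}$), complete the square to extract $\exp(\frac{1}{8}x^{4}\tilde{\sigma}^{2}-\frac{1}{2}\tilde{\mu}x^{2})$, and express the remaining Gaussian tail via $\operatorname{erfc}$ together with the normalizing constant $1-\Phi(-\tilde{\mu}/\tilde{\sigma})$. Your bookkeeping of the constants checks out, so nothing further is needed.
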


\begin{proof}
See Appendix~\ref{lemma:characteristic-func-of-tildesx-in-appendix}
\end{proof}

Given the characteristic function, the probability density function can be obtained through the inverse Fourier transformation. Thus, the PDF of $\tilde{s}X$, denoted by $f_{\tilde{s}X}(t)$, is as follows:
\begin{equation}
\label{eqn:pdf-of-FastLSH}
f_{\tilde{s}X}(t)=\frac{1}{2\pi}\int_{-\infty }^{+\infty }\exp(-itx)\varphi_{\tilde{s}X}(x)dx
\end{equation}
where the symbol $i=\sqrt{-1}$ represents the imaginary unit.


Now we are in the position to derive the probability of collision for vector pair $(\textbf{\emph{v}},\textbf{\emph{u}})$ under the proposed LSH function in Eqn.~\eqref{eqn:fastlsh}. Let $f_{|\tilde{s}X|}(t)$ represent the PDF of the absolute value of $\tilde{s}X$. It is easy to obtain the collision probability $p(s, \sigma)$ for FastLSH by replacing $f_{|sX|}(t)$ in Eqn.~\eqref{eqn:prob.-of-collision-e2lsh} with $f_{|\tilde{s}X|}(t)$.

\begin{theorem}
\label{theorem:collision-prob.-of-FastLSH}
\begin{equation}
\label{eqn:pro.-of-FastLSH}
\begin{aligned}
p(s, \sigma)&=Pr[h_{\tilde{\textbf{a}},\tilde{b}}(\textbf{v})=h_{\tilde{\textbf{a}},\tilde{b}}(\textbf{u})]\\ &=\int_{0}^{\tilde{w}}f_{|\tilde{s}X|}(t)(1-\frac{t}{\tilde{w}})dt
\end{aligned}
\end{equation}
\end{theorem}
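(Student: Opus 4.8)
The plan is to mirror the classical E2LSH collision-probability derivation (Eqn.~\eqref{eqn:prob.-of-collision-e2lsh}), the key substitution being that the projected gap between $\tilde{\textbf{\emph{v}}}$ and $\tilde{\textbf{\emph{u}}}$ is governed by $\tilde{s}X$ rather than $sX$. Concretely, I would compute the collision probability by conditioning on the single scalar that controls whether the two points share a bucket, namely the absolute projected distance $T=|\tilde{\textbf{\emph{a}}}^{T}\tilde{\textbf{\emph{v}}}-\tilde{\textbf{\emph{a}}}^{T}\tilde{\textbf{\emph{u}}}|=|\tilde{\textbf{\emph{a}}}^{T}(\tilde{\textbf{\emph{v}}}-\tilde{\textbf{\emph{u}}})|$, and then integrating the conditional collision probability against the density of $T$.

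First I would pin down the distribution of $T$. Conditioned on the sampling operator $S(\cdot)$ (equivalently, on $\tilde{s}=\|\tilde{\textbf{\emph{v}}}-\tilde{\textbf{\emph{u}}}\|_{2}$), $p$-stability of $\mathcal{N}(0,1)$ gives that $\tilde{\textbf{\emph{a}}}^{T}(\tilde{\textbf{\emph{v}}}-\tilde{\textbf{\emph{u}}})$ is distributed as $\tilde{s}X$ with $X\sim\mathcal{N}(0,1)$ independent of the sampling. Marginalizing over the randomness of $S(\cdot)$, which makes $\tilde{s}$ a random variable with the truncated-normal-based density of Eqn.~\eqref{eqn:pdf-of-s}, the signed projected gap has the product density $f_{\tilde{s}X}$ of Eqn.~\eqref{eqn:pdf-of-FastLSH}; hence $T$ has density $f_{|\tilde{s}X|}$. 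Note that, unlike E2LSH, this density depends on $\sigma$ as well as $s$ (through $\tilde{\mu}=m\mu=ms^{2}/n$ and $\tilde{\sigma}^{2}=m\sigma^{2}$), which is precisely why the collision probability carries both arguments $p(s,\sigma)$. This step reduces the whole problem to a one-dimensional question parametrized by $t=T$.

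Next I would carry out the geometric collision argument for a fixed gap $t$, over the remaining randomness of $\tilde{b}\sim\mathrm{Unif}[0,\tilde{w}]$. Writing $x_{\textbf{\emph{u}}}=\tilde{\textbf{\emph{a}}}^{T}\tilde{\textbf{\emph{u}}}$ and $x_{\textbf{\emph{v}}}=\tilde{\textbf{\emph{a}}}^{T}\tilde{\textbf{\emph{v}}}$ with $|x_{\textbf{\emph{v}}}-x_{\textbf{\emph{u}}}|=t$, the two points collide iff $\lfloor(x_{\textbf{\emph{v}}}+\tilde{b})/\tilde{w}\rfloor=\lfloor(x_{\textbf{\emph{u}}}+\tilde{b})/\tilde{w}\rfloor$, i.e., iff no integer multiple of $\tilde{w}$ separates $x_{\textbf{\emph{u}}}+\tilde{b}$ from $x_{\textbf{\emph{v}}}+\tilde{b}$. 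Since $(x_{\textbf{\emph{u}}}+\tilde{b})\bmod\tilde{w}$ is uniform on $[0,\tilde{w})$ as $\tilde{b}$ ranges over $[0,\tilde{w}]$, an interval of length $t$ avoids every bucket boundary with probability $1-t/\tilde{w}$ when $t<\tilde{w}$, and with probability $0$ when $t\ge\tilde{w}$ (an interval of length at least $\tilde{w}$ always contains a boundary). Thus the conditional collision probability is $\max\{1-t/\tilde{w},0\}$, which is continuous at $t=\tilde{w}$.

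Finally I would combine the two pieces by the law of total probability, integrating $\max\{1-t/\tilde{w},0\}$ against $f_{|\tilde{s}X|}(t)$ over $t\in[0,\infty)$; the truncation to $t<\tilde{w}$ collapses the domain to $[0,\tilde{w}]$ and yields exactly Eqn.~\eqref{eqn:pro.-of-FastLSH}. I expect the only genuine subtlety to be the bookkeeping of the two independent sources of randomness — the sampling $S(\cdot)$ governing $\tilde{s}$ and the projection/offset pair $(\tilde{\textbf{\emph{a}}},\tilde{b})$ — and in justifying that after marginalization the absolute projected gap has density exactly $f_{|\tilde{s}X|}$. Once that identification is in hand, the collision geometry is identical to the E2LSH case and the integral follows immediately.
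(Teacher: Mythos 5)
Your proposal is correct and follows essentially the same route as the paper's own proof: identify the projected gap as $|\tilde{s}X|$ (with $\tilde{s}$ random through the sampling operator and $X$ the stable projection variable), observe that for a fixed gap $t$ the uniform offset $\tilde{b}$ yields conditional collision probability $1-t/\tilde{w}$ for $t<\tilde{w}$ and $0$ otherwise, and integrate. If anything, your treatment of the conditional step (the $\max\{1-t/\tilde{w},0\}$ boundary-avoidance argument and the explicit separation of the two sources of randomness) is stated more carefully than the paper's rather terse appendix proof, which loosely describes the result as a ``product'' of the two probabilities before integrating.
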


\begin{proof}
See Appendix~\ref{theorem:collision-prob.-of-FastLSH-in-appendix}
\end{proof}

We can see that, unlike E2LSH, the probability of collision $p(s, \sigma)$ depends on both $s$ and $\sigma$. From this point of view, FastLSH can be regarded as a generalized version of E2LSH by considering one additional impact factor, i.e., the variation in the squared distance of each dimension for vector pair $(\textbf{\emph{v}}, \textbf{\emph{u}})$, making it is more difficult to prove its LSH property. The influence of $\sigma$ on $p(s, \sigma)$ will be discussed in the next section.


\subsection{The LSH Property of FastLSH}
\label{sec:the-lsh-property-of-fastlsh}
In this section, we first prove that the asymptotic behavior of FastLSH is identical to E2LSH, and then demonstrate by both theoretical analysis and numerical computation that FastLSH owns the LSH property even for limited $m$.

\begin{fact}\cite{pslsh}
\label{fact:relation-of-w-and-s}
For E2LSH, $f_{sX}(t)$ follows the normal distribution $\mathcal{N}(0,s^{2})$ and the collision probability $p(s)$ with bucket width $w$ is equal to $p(\alpha s)$ under the bucket width $\alpha w$, i.e., $p_{w}(s) = p_{\alpha w}(\alpha s)$ where $\alpha > 0$.
\end{fact}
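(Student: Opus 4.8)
The plan is to treat the two assertions separately, since both reduce to elementary properties of the Gaussian. For the first assertion, I would invoke the standard scaling rule for normal random variables: because $X \sim \mathcal{N}(0,1)$, the dilated variable $sX$ is distributed as $\mathcal{N}(0,s^{2})$, with density $f_{sX}(t) = \frac{1}{s}\phi(t/s)$. Folding this density about the origin yields the density of the absolute value, $f_{|sX|}(t) = \frac{2}{s}\phi(t/s)$ for $t \geq 0$ (and $0$ otherwise). This explicit closed form is the only ingredient the second assertion needs.

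For the scaling identity $p_{w}(s) = p_{\alpha w}(\alpha s)$, I would substitute the explicit density into the collision-probability formula of Eqn.~\eqref{eqn:prob.-of-collision-e2lsh} and then remove the dependence on $\alpha$ by a change of variables. Concretely, starting from
\[
p_{\alpha w}(\alpha s) = \int_{0}^{\alpha w} \frac{2}{\alpha s}\,\phi\!\left(\frac{t}{\alpha s}\right)\left(1 - \frac{t}{\alpha w}\right)dt,
\]
I would apply the substitution $t = \alpha \tau$, so that $dt = \alpha\, d\tau$ and the upper limit $\alpha w$ maps to $w$. The factor $\alpha$ from $d\tau$ cancels the $1/\alpha$ in the normalization $1/(\alpha s)$, the argument of $\phi$ collapses to $\tau/s$, and the penalty term $1 - t/(\alpha w)$ becomes $1 - \tau/w$. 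What survives is exactly $\int_{0}^{w} \frac{2}{s}\,\phi(\tau/s)\,(1 - \tau/w)\, d\tau = p_{w}(s)$, which is the claim.

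The structural fact being exploited is the self-similarity of the Gaussian family under dilation: scaling both the distance $s$ and the bucket width $w$ by the same factor $\alpha$ leaves the collision probability invariant. I do not anticipate a genuine obstacle here; the only point requiring care is the bookkeeping of the two independent factors of $\alpha$ — one from the Jacobian of the substitution and one from the density normalization — together with the matching rescaling of the upper integration limit, and verifying that all three effects cancel cleanly. Since $\alpha > 0$, the substitution is orientation-preserving and the limits transform without sign issues, so the identity holds for every $\alpha > 0$.
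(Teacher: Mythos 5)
Your argument is correct. Note that the paper itself offers no proof of this Fact---it is stated with a citation to \cite{pslsh}---so there is nothing internal to compare against; your derivation is the standard one and matches how the cited source establishes the property. Both halves check out: $sX\sim\mathcal{N}(0,s^{2})$ is the Gaussian scaling rule, and substituting $f_{|sX|}(t)=\frac{2}{s}\phi(t/s)$ into Eqn.~\eqref{eqn:prob.-of-collision-e2lsh} and changing variables $t=\alpha\tau$ makes the three factors of $\alpha$ (Jacobian, density normalization, upper limit) cancel exactly as you describe, showing that $p$ depends only on the ratio $s/w$.
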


From Eqn.~\eqref{eqn:prob.-of-collision-e2lsh} and Eqn.~\eqref{eqn:pro.-of-FastLSH}, one can see that the expressions of the probability of collision for E2LSH and FastLSH are quite similar. Actually, if $f_{\tilde{s}X}(t)$ also follows normal distribution with zero mean, we can always scale $\tilde{w}$ to make $p_{w}(s) = p_{\tilde{w}}(s,\sigma)$ based on Fact~\ref{fact:relation-of-w-and-s}. The following theorem gives the asymptotic behavior of the characteristic function of $\tilde{s}X$.


\begin{theorem}
\label{theorem:equivalence-of-characteristic-function}
\begin{equation}
\nonumber
\lim_{m \to +\infty} \frac{\varphi_{\tilde{s}X}(x)}{\exp(-\frac{ms^{2}x^{2}}{2n})} = 1
\end{equation}
where $|x|\leq O(m^{-1/2})$ and  $\exp(-\frac{ms^{2}x^{2}}{2n})$ is the characteristic function of $\mathcal{N}(0,\frac{ms^{2}}{n})$.
\end{theorem}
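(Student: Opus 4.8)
The plan is to take the exact characteristic function of $\tilde{s}X$ from Lemma~\ref{lemma:characteristic-func-of-tildesx} and substitute the specific values $\tilde{\mu} = m\mu$ and $\tilde{\sigma}^{2} = m\sigma^{2}$, then track the behavior of each factor as $m \to +\infty$ under the scaling restriction $|x| \leq O(m^{-1/2})$. Recall that $\mu = (\sum_{i=1}^{n}(v_i - u_i)^2)/n = s^2/n$, so $\tilde{\mu} = ms^2/n$, which is precisely the mean that appears in the target Gaussian characteristic function $\exp(-\frac{ms^2 x^2}{2n}) = \exp(-\frac{1}{2}\tilde{\mu}x^2)$. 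This already identifies the dominant exponential factor, so the whole argument reduces to showing that the remaining pieces of $\varphi_{\tilde{s}X}(x)$ converge to $1$ after cancelling $\exp(-\frac{1}{2}\tilde{\mu}x^2)$.

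First I would form the ratio explicitly, which equals
\begin{equation}
\nonumber
\frac{\varphi_{\tilde{s}X}(x)}{\exp(-\frac{1}{2}\tilde{\mu}x^2)} = \frac{\exp(\frac{1}{8}x^4\tilde{\sigma}^2)\operatorname{erfc}\!\left(\frac{\frac{1}{2}x^2\tilde{\sigma}^2 - \tilde{\mu}}{\sqrt{2}\tilde{\sigma}}\right)}{2\left(1 - \Phi(\frac{-\tilde{\mu}}{\tilde{\sigma}})\right)}.
\end{equation}
I would then argue the three surviving factors separately. For the prefactor $\exp(\frac{1}{8}x^4\tilde{\sigma}^2)$, note $\tilde{\sigma}^2 = m\sigma^2$ and $x^4 \leq O(m^{-2})$, so the exponent is $O(m^{-1}) \to 0$ and this factor tends to $1$. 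For the denominator, $\frac{-\tilde{\mu}}{\tilde{\sigma}} = \frac{-m\mu}{\sqrt{m}\sigma} = -\sqrt{m}\,\mu/\sigma \to -\infty$, hence $\Phi(\frac{-\tilde{\mu}}{\tilde{\sigma}}) \to 0$ and the denominator $2(1 - \Phi(\cdot)) \to 2$. For the $\operatorname{erfc}$ factor, the argument is $\frac{\frac{1}{2}x^2\tilde{\sigma}^2 - \tilde{\mu}}{\sqrt{2}\tilde{\sigma}}$; since $\tilde{\mu} = m\mu$ grows like $m$ while $\frac{1}{2}x^2\tilde{\sigma}^2 = O(m^{-1})\cdot m = O(1)$ and $\sqrt{2}\tilde{\sigma} = O(\sqrt{m})$, the argument behaves like $-\sqrt{m}\,\mu/(\sqrt{2}\sigma) \to -\infty$, so $\operatorname{erfc}(\cdot) \to \operatorname{erfc}(-\infty) = 2$. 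Multiplying the limits gives $\frac{1 \cdot 2}{2} = 1$, as claimed.

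The main obstacle is making the $\operatorname{erfc}$ estimate rigorous rather than just pointwise, because its argument drifts to $-\infty$ while the prefactor $\exp(\frac{1}{8}x^4\tilde{\sigma}^2)$ simultaneously drifts (toward $1$), and the restriction $|x| \leq O(m^{-1/2})$ is exactly what keeps these two tendencies from fighting. I would therefore be careful to use the scaling bound uniformly: write $x = y/\sqrt{m}$ with $y$ bounded, so that $x^2\tilde{\sigma}^2 = y^2\sigma^2$ stays $O(1)$ and $x^4\tilde{\sigma}^2 = y^4\sigma^2/m \to 0$ uniformly, and verify that the $\operatorname{erfc}$ argument $\to -\infty$ uniformly in $y$. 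The remaining care is to confirm that $\mu > 0$ (equivalently $s > 0$, i.e. $\textbf{\emph{v}} \neq \textbf{\emph{u}}$) and $\sigma > 0$, since the degenerate cases would make $\frac{-\tilde{\mu}}{\tilde{\sigma}}$ ill-behaved; these are the only genuinely nontrivial points, as the three factor limits themselves are elementary once the $O(m^{-1/2})$ window is exploited.
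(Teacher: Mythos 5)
Your proposal is correct and follows essentially the same route as the paper's proof: both cancel the dominant factor $\exp(-\tfrac{1}{2}\tilde{\mu}x^{2})=\exp(-\tfrac{ms^{2}x^{2}}{2n})$ and then show, using the window $|x|\leq O(m^{-1/2})$, that the prefactor $\exp(\tfrac{1}{8}x^{4}\tilde{\sigma}^{2})\to 1$ while the $\operatorname{erfc}$ numerator and the denominator $2(1-\Phi(-\tilde{\mu}/\tilde{\sigma}))=\operatorname{erfc}(-a\sqrt{m})$ each tend to $2$. Your explicit substitution $x=y/\sqrt{m}$ and the remark about needing $s>0$ and $\sigma>0$ are, if anything, slightly more careful than the paper's own treatment.
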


\begin{proof}
See Appendix~\ref{theorem:equivalence-of-characteristic-function-in-appendix}.
\end{proof}

Note that $\exp(-\frac{ms^{2}x^{2}}{2n})$, the characteristic function of $\mathcal{N}(0,\frac{ms^{2}}{n})$, is a bell-shaped function like the PDF of normal distributions. As a result, Theorem~\ref{theorem:equivalence-of-characteristic-function} implies that $\varphi_{\tilde{s}X}(x)$ is asymptotically identical to $\exp(-\frac{ms^{2}x^{2}}{2n})$ within interval $[-\mathcal{K}\sqrt{\frac{n}{ms^{2}}}, +\mathcal{K}\sqrt{\frac{n}{ms^{2}}}]$, that is, 2$\mathcal{K}$ ``standard deviations", where $\mathcal{K}$ is an arbitrarily large constant. Because a probability distribution is uniquely determined by its characteristic function and both $\varphi_{\tilde{s}X}(x)$ and $\exp(-\frac{ms^{2}x^{2}}{2n})$ approach 0 when $x$ tends to infinity, we immediately have the following Corollary.

\begin{corollary}
\label{corollary:equivalence-of-pdf}
$f_{\tilde{s}X}(t) \sim \text{the PDF of~} \mathcal{N}(0,\frac{ms^{2}}{n})$ as $m$ approaches infinity.
\end{corollary}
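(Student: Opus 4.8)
The plan is to convert the convergence of characteristic functions in Theorem~\ref{theorem:equivalence-of-characteristic-function} into convergence of densities by means of the Fourier inversion formula \eqref{eqn:pdf-of-FastLSH} together with the uniqueness of the characteristic function. The cleanest route is to standardize: set $\sigma_{m}=s\sqrt{m/n}$, the standard deviation of the target law $\mathcal{N}(0,\frac{ms^{2}}{n})$, and study $Z_{m}=\tilde{s}X/\sigma_{m}$, whose characteristic function is $\varphi_{Z_{m}}(x)=\varphi_{\tilde{s}X}(x/\sigma_{m})$. Under this rescaling the window $|y|\le O(m^{-1/2})$ on which Theorem~\ref{theorem:equivalence-of-characteristic-function} is valid for $\varphi_{\tilde{s}X}(y)$ becomes a \emph{fixed} window $|x|\le\mathcal{K}$, and substituting $y=x/\sigma_{m}$ into the theorem gives, for each fixed $x$,
\begin{equation}
\nonumber
\varphi_{Z_{m}}(x)=\varphi_{\tilde{s}X}\!\left(\frac{x}{\sigma_{m}}\right)\longrightarrow \exp\!\left(-\frac{x^{2}}{2}\right)\qquad(m\to\infty),
\end{equation}
the characteristic function of $\mathcal{N}(0,1)$. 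By L\'evy's continuity theorem this already yields convergence of $Z_{m}$ in distribution to $\mathcal{N}(0,1)$; what remains is to strengthen it to convergence of the densities.

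To move from characteristic functions to densities I would apply the inversion formula \eqref{eqn:pdf-of-FastLSH}, which is legitimate because $\varphi_{\tilde{s}X}$ is integrable. Writing both $f_{Z_{m}}$ and the standard normal density $\phi$ as inverse transforms, their difference is controlled uniformly in $t$ by
\begin{equation}
\nonumber
\left|f_{Z_{m}}(t)-\phi(t)\right|\le\frac{1}{2\pi}\int_{-\infty}^{+\infty}\left|\varphi_{\tilde{s}X}\!\left(\frac{x}{\sigma_{m}}\right)-\exp\!\left(-\frac{x^{2}}{2}\right)\right|dx.
\end{equation}
I would split this integral into the bulk $|x|\le\mathcal{K}$ and the tail $|x|>\mathcal{K}$. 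On the bulk the integrand tends to $0$ pointwise by the display above and is dominated by the constant $2$ (every characteristic function has modulus at most $1$), so dominated convergence over the fixed interval $[-\mathcal{K},\mathcal{K}]$ sends the bulk contribution to $0$. Undoing the scaling $t=z/\sigma_{m}$ finally transfers the resulting uniform estimate on $f_{Z_{m}}$ to the asserted asymptotic equality $f_{\tilde{s}X}(t)\sim$ the PDF of $\mathcal{N}(0,\frac{ms^{2}}{n})$.

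The hard part will be the tail $|x|>\mathcal{K}$, where Theorem~\ref{theorem:equivalence-of-characteristic-function} no longer applies; the Gaussian piece $\exp(-x^{2}/2)$ is negligible there, so one must bound $|\varphi_{\tilde{s}X}(x/\sigma_{m})|$ by something integrable whose tail mass is uniformly small in $m$ and shrinks as $\mathcal{K}$ grows. I would read this off the closed form of Lemma~\ref{lemma:characteristic-func-of-tildesx}: with $\tilde{\mu}=m\mu$ and $\tilde{\sigma}^{2}=m\sigma^{2}$, a direct computation shows that the prefactor exponent $\frac{1}{8}x^{4}\tilde{\sigma}^{2}-\frac{1}{2}\tilde{\mu}x^{2}$ and the squared argument of $\operatorname{erfc}$ differ only by the constant $-\tilde{\mu}^{2}/(2\tilde{\sigma}^{2})=-m\mu^{2}/(2\sigma^{2})$, so the large-argument asymptotics $\operatorname{erfc}(z)\sim\exp(-z^{2})/(z\sqrt{\pi})$ leave $|\varphi_{\tilde{s}X}(x)|$ with an integrable $O(1/x^{2})$ decay carrying an overall factor $\exp(-m\mu^{2}/(2\sigma^{2}))$ that only helps as $m$ grows, while the normalizing constant $1/[2(1-\Phi(-\tilde{\mu}/\tilde{\sigma}))]$ stays bounded since $\tilde{\mu}/\tilde{\sigma}=\sqrt{m}\,\mu/\sigma\to+\infty$. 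Turning these heuristics into estimates that are uniform in $m$ across the whole tail, and handling the intermediate range where $\operatorname{erfc}$ is not yet in its asymptotic regime, is the only genuine technical obstacle. An alternative, and perhaps cleaner, route would bypass the explicit formula entirely by invoking a quantitative (Berry--Esseen type) sharpening of the central limit argument behind Lemma~\ref{lemma:clt}, thereby controlling the whole characteristic function at once.
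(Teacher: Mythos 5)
Your route is the same one the paper takes --- pass from the characteristic-function asymptotics of Theorem~\ref{theorem:equivalence-of-characteristic-function} to the density statement via Fourier inversion and uniqueness --- but you are considerably more careful than the paper itself, which offers no formal proof of this corollary: it argues only inline that $\varphi_{\tilde{s}X}$ agrees with $\exp(-\frac{ms^{2}x^{2}}{2n})$ on a window of $2\mathcal{K}$ ``standard deviations'' and that both characteristic functions vanish at infinity, and then declares the corollary immediate. Your standardization step is sound: after substituting $y=x/\sigma_{m}$ the window $|y|\leq \mathcal{K}m^{-1/2}$ becomes the fixed window $|x|\leq \mathcal{K}s/\sqrt{n}$, and since $\mathcal{K}$ is arbitrary every fixed $x$ is eventually covered, so L\'evy continuity does apply; the bulk estimate by dominated convergence on $[-\mathcal{K},\mathcal{K}]$ is likewise fine.

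The gap you flag is real, and it is exactly the step the paper silently skips: without a bound on $\int_{|x|>\mathcal{K}}|\varphi_{\tilde{s}X}(x/\sigma_{m})|\,dx$ that is uniform in $m$ and small in $\mathcal{K}$, the inversion argument does not close, and ``both characteristic functions tend to $0$ at infinity'' is not a substitute. Your cancellation computation is correct --- the prefactor exponent and the square of the $\operatorname{erfc}$ argument differ by the constant $-\tilde{\mu}^{2}/(2\tilde{\sigma}^{2})=-ms^{4}/(2n^{2}\sigma^{2})$, the normalizer $2(1-\Phi(-\tilde{\mu}/\tilde{\sigma}))\to 2$, and the far tail decays like $1/x^{2}$ --- so the plan is completable. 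But note where the remaining work sits: after rescaling, the excluded set $|x|>\mathcal{K}$ corresponds to $|y|>\mathcal{K}\sqrt{n/(ms^{2})}$, which creeps toward the origin as $m$ grows; in that intermediate range the $\operatorname{erfc}$ factor is close to $2$ rather than in its asymptotic regime, and the decay must come instead from the prefactor $\exp\bigl(m(\tfrac{1}{8}y^{4}\sigma^{2}-\tfrac{1}{2}\mu y^{2})\bigr)$, whose exponent is negative for $0<y^{2}<4\mu/\sigma^{2}$ and yields, via a Laplace-type estimate, a contribution of order $e^{-\mathcal{K}^{2}/2}$. Until that estimate (or the Berry--Esseen alternative you mention) is written out, your proof --- like the paper's --- is a correct outline rather than a complete argument.
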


By Corollary~\ref{corollary:equivalence-of-pdf}, $p(s) = p(s, \sigma)$ asymptotically if $\tilde{w} = \frac{m}{n}w$, meaning that $\sigma$ has no effect on the probability of collision.
In practical scenarios, $m$ is often limited. Next, we study the relation between $f_{\tilde{s}X}(t)$ and the PDF of $\mathcal{N}(0,\frac{ms^{2}}{n})$ when $m$ is relatively small ($m < n$). Particularly, we derive the first four moments of $\tilde{s}X$ and $\mathcal{N}(0,\frac{ms^{2}}{n})$, and analyze how $m$ and $\sigma$ affect their similarity. While in general the first four moments, or even the whole moment sequence may not determine a distribution~\cite{moment1}, practitioners find that distributions near the normal can be decided very well given the first four moments~\cite{moment2,moment3,moment4}.


\begin{lemma}
\label{lemma:four-moment-of-FastLSH}
\begin{equation}
\nonumber
\begin{cases}
\ \ E(\tilde{s}X) \ \ \ = 0
 \\
E((\tilde{s}X)^{2}) = \frac{ms^{2}}{n}(1+\epsilon)
    \\
E((\tilde{s}X)^{3})  = 0
\\
E((\tilde{s}X)^{4}) = \frac{3m^{2}s^{4}}{n^{2}}(1+\lambda)
\end{cases}
\end{equation}
where $\epsilon=\frac{\tilde{\sigma} \exp(\frac{-\tilde{\mu}^{2}}{2\tilde{\sigma}^{2}})}{\sqrt{2\pi}\tilde{\mu} (1-\Phi(\frac{-\tilde{\mu}}{\tilde{\sigma}}))}$ and $\lambda = \frac{\tilde{\sigma}^{2}}{\tilde{\mu}^{2}} + \epsilon$.
\end{lemma}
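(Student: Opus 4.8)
The plan is to reduce all four moments to the first two moments of $\tilde{s}^{2}$, and then compute those from the truncated normal model introduced around Eqn.~\eqref{eqn:pdf-of-truncated-normal}. The starting observation is that the projection variable factors as $\tilde{s}X$ with $X\sim\mathcal{N}(0,1)$ \emph{independent} of $\tilde{s}$. Hence $E\bigl((\tilde{s}X)^{k}\bigr)=E(\tilde{s}^{k})\,E(X^{k})$, and since the standard normal has $E(X)=E(X^{3})=0$, $E(X^{2})=1$, $E(X^{4})=3$, the two odd moments vanish immediately and the even ones collapse to $E\bigl((\tilde{s}X)^{2}\bigr)=E(\tilde{s}^{2})$ and $E\bigl((\tilde{s}X)^{4}\bigr)=3\,E(\tilde{s}^{4})$. (Equivalently, one may Taylor-expand the characteristic function $\varphi_{\tilde{s}X}(x)=E[\exp(-x^{2}\tilde{s}^{2}/2)]$ of Lemma~\ref{lemma:characteristic-func-of-w}/\ref{lemma:characteristic-func-of-tildesx} in powers of $x$ and read the moments off the coefficients; matching $x^{2}$ and $x^{4}$ gives the same reduction, with the $x^{4}$ coefficient supplying the factor $3$.) Recalling from Section~\ref{sec:probability-of-collision} that $s^{2}=\sum_{i=1}^{n}(v_{i}-u_{i})^{2}=n\mu$, so $\tilde{\mu}=m\mu=\tfrac{ms^{2}}{n}$, it only remains to evaluate $E(\tilde{s}^{2})$ and $E(\tilde{s}^{4})$, i.e.\ the first and second moments of the random variable $\tilde{s}^{2}$.

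By Lemma~\ref{lemma:clt} and the truncation discussion preceding Eqn.~\eqref{eqn:cdf-of-s}, $\tilde{s}^{2}$ is modeled as $\mathcal{N}(\tilde{\mu},\tilde{\sigma}^{2})$ singly truncated to $[0,+\infty)$, with density $\psi(x;\tilde{\mu},\tilde{\sigma}^{2},0,+\infty)$. I would compute its moments directly by the substitution $z=(x-\tilde{\mu})/\tilde{\sigma}$, which sends the lower limit to $\alpha=-\tilde{\mu}/\tilde{\sigma}$, and then integrate against $\phi$ using the elementary identity $\phi'(z)=-z\phi(z)$. This gives $\int_{\alpha}^{\infty} z\phi(z)\,dz=\phi(\alpha)$ and, after one integration by parts, $\int_{\alpha}^{\infty} z^{2}\phi(z)\,dz=\alpha\phi(\alpha)+\bigl(1-\Phi(\alpha)\bigr)$. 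Writing the inverse Mills ratio $\eta=\phi(\alpha)/\bigl(1-\Phi(\alpha)\bigr)$, these yield
\begin{equation}
\nonumber
E(\tilde{s}^{2})=\tilde{\mu}+\tilde{\sigma}\eta,\qquad
E(\tilde{s}^{4})=\tilde{\mu}^{2}+\tilde{\sigma}^{2}+\bigl(2\tilde{\mu}\tilde{\sigma}+\tilde{\sigma}^{2}\alpha\bigr)\eta .
\end{equation}

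The final step is bookkeeping. Using $\alpha=-\tilde{\mu}/\tilde{\sigma}$ gives $\tilde{\sigma}^{2}\alpha=-\tilde{\mu}\tilde{\sigma}$, so the bracket in $E(\tilde{s}^{4})$ simplifies to $\tilde{\mu}\tilde{\sigma}$, and since $\phi$ is even, $\phi(\alpha)=\tfrac{1}{\sqrt{2\pi}}\exp(-\tilde{\mu}^{2}/2\tilde{\sigma}^{2})$, which makes $\tilde{\sigma}\eta/\tilde{\mu}$ exactly the quantity $\epsilon$ in the statement. Factoring out $\tilde{\mu}$ and $\tilde{\mu}^{2}$ then turns the two displays into $E(\tilde{s}^{2})=\tilde{\mu}(1+\epsilon)$ and $E(\tilde{s}^{4})=\tilde{\mu}^{2}\bigl(1+\tfrac{\tilde{\sigma}^{2}}{\tilde{\mu}^{2}}+\epsilon\bigr)=\tilde{\mu}^{2}(1+\lambda)$; substituting $\tilde{\mu}=ms^{2}/n$ and multiplying the fourth moment by $3$ recovers the claimed expressions. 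I expect the only genuinely delicate point to be the second-moment computation: getting the $z^{2}\phi$ integration-by-parts term right and tracking the sign through $\alpha=-\tilde{\mu}/\tilde{\sigma}$ so that the $2\tilde{\mu}\tilde{\sigma}\eta$ and $\tilde{\sigma}^{2}\alpha\eta$ contributions combine into the single clean term $\tilde{\mu}^{2}\epsilon$; everything else is the routine reduction via independence and the standard normal moments.
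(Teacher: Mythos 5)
Your proposal is correct, and it reaches the stated formulas by a genuinely different route from the paper. The paper works entirely through the closed-form characteristic function of Lemma~\ref{lemma:characteristic-func-of-tildesx}: it computes the first four derivatives of $\varphi_{\tilde{s}X}$ explicitly (each a lengthy product of an exponential, an $\operatorname{erfc}$ term, and a polynomial), evaluates them at zero via $E(Z^{r})=i^{-r}\varphi^{(r)}_{Z}(0)$, and then simplifies using the identity $\operatorname{erfc}(-\tilde{\mu}/(\sqrt{2}\tilde{\sigma}))=2\bigl(1-\Phi(-\tilde{\mu}/\tilde{\sigma})\bigr)$. You instead exploit the independence of $\tilde{s}$ and $X$ to factor $E\bigl((\tilde{s}X)^{k}\bigr)=E(\tilde{s}^{k})E(X^{k})$, kill the odd moments via $E(X)=E(X^{3})=0$, and reduce the even ones to the first two moments of the truncated normal $\tilde{s}^{2}$, which you compute directly by the standard $z=(x-\tilde{\mu})/\tilde{\sigma}$ substitution and one integration by parts. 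I verified your intermediate formulas: $E(\tilde{s}^{2})=\tilde{\mu}+\tilde{\sigma}\eta$ and $E(\tilde{s}^{4})=\tilde{\mu}^{2}+\tilde{\sigma}^{2}+(2\tilde{\mu}\tilde{\sigma}+\tilde{\sigma}^{2}\alpha)\eta$ with $\alpha=-\tilde{\mu}/\tilde{\sigma}$ are the correct truncated-normal moments, the bracket does collapse to $\tilde{\mu}\tilde{\sigma}$, and $\tilde{\sigma}\eta/\tilde{\mu}$ is exactly the paper's $\epsilon$, so both even moments match after multiplying the fourth by $E(X^{4})=3$. Your approach is considerably shorter and less error-prone than differentiating $\varphi_{\tilde{s}X}$ four times, and it makes the provenance of $\epsilon$ and $\lambda$ transparent as inverse-Mills-ratio corrections; the paper's approach has the modest advantage of reusing only objects already derived (the characteristic function) and of not needing to separately justify the independence factorization, though that independence is already implicit in the conditioning argument of Lemma~\ref{lemma:characteristic-func-of-w}. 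The one point you should state explicitly rather than in passing is that $\tilde{\textbf{\emph{a}}}$ is drawn independently of the sampling operator $S$, so $X$ and $\tilde{s}$ are indeed independent.
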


\begin{proof}
See Appendix~\ref{lemma:four-moment-of-FastLSH-in-appendix}
\end{proof}


\begin{figure*}[t]	
    \centering
    \begin{minipage}{0.22\textwidth}
		\centering
		\centerline{\includegraphics[width=\textwidth]{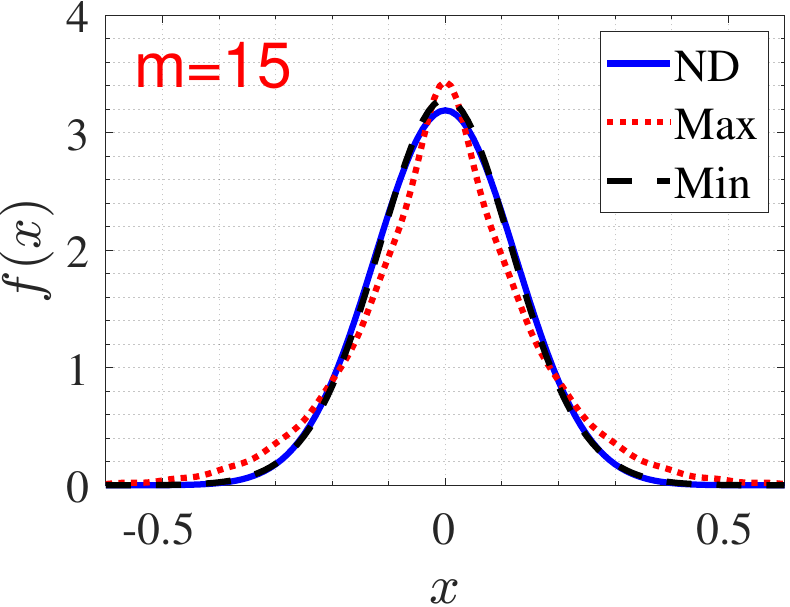}}
        \centerline{\scriptsize (a-1) Gist: $m=15$}
	\end{minipage}
    \quad
	\begin{minipage}{0.22\textwidth}
        \centering
		\centerline{\includegraphics[width=\textwidth]{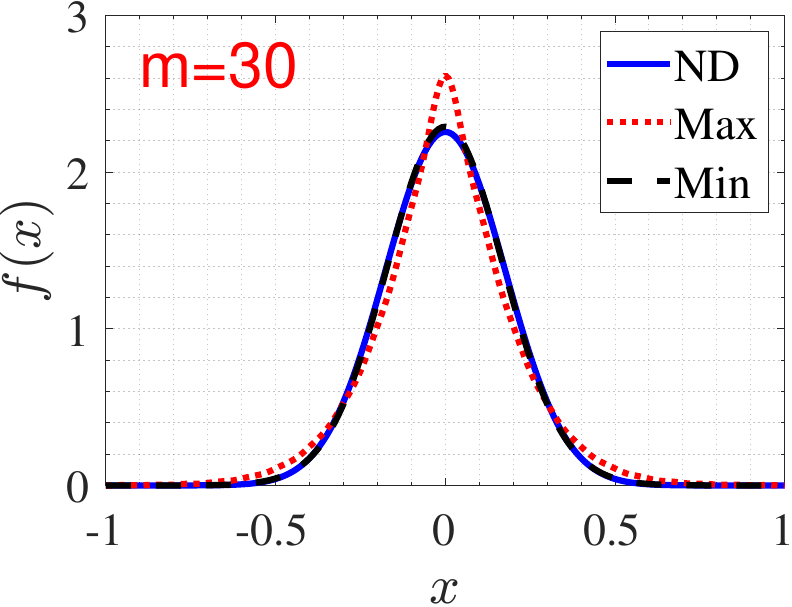}}
        \centerline{\scriptsize (a-2) Gist: $m=30$}
	\end{minipage}
    \quad
    \begin{minipage}{0.22\textwidth}
        \centering
		\centerline{\includegraphics[width=\textwidth]{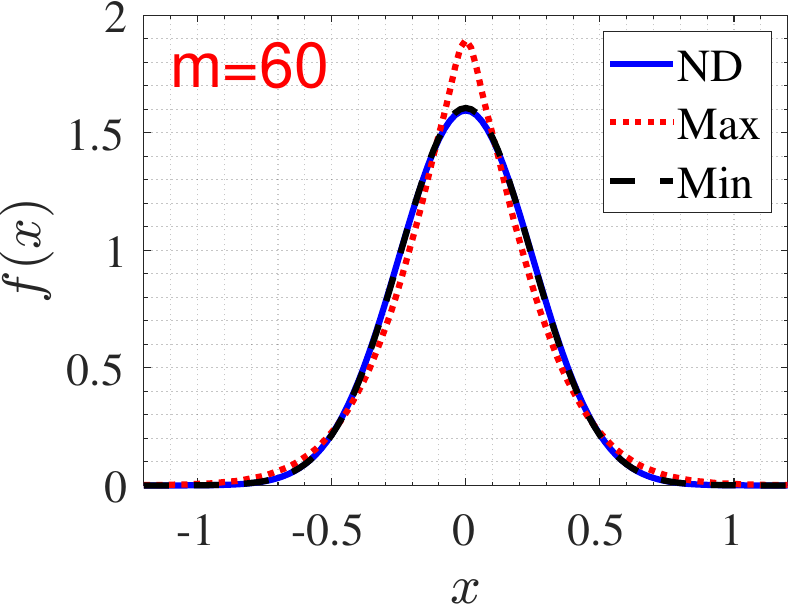}}
        \centerline{\scriptsize (a-3) Gist: $m=60$}
	\end{minipage}
     \quad
    \begin{minipage}{0.22\textwidth}
        \centering
		\centerline{\includegraphics[width=\textwidth]{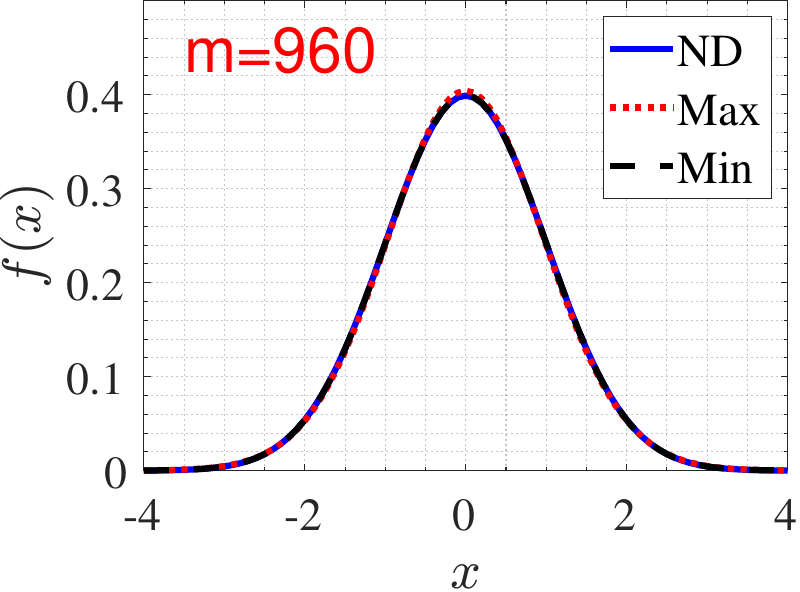}}
        \centerline{\scriptsize (a-4) Gist: $m=n$}
	\end{minipage}
    \quad
    \begin{minipage}{0.22\textwidth}
		\centering
		\centerline{\includegraphics[width=\textwidth]{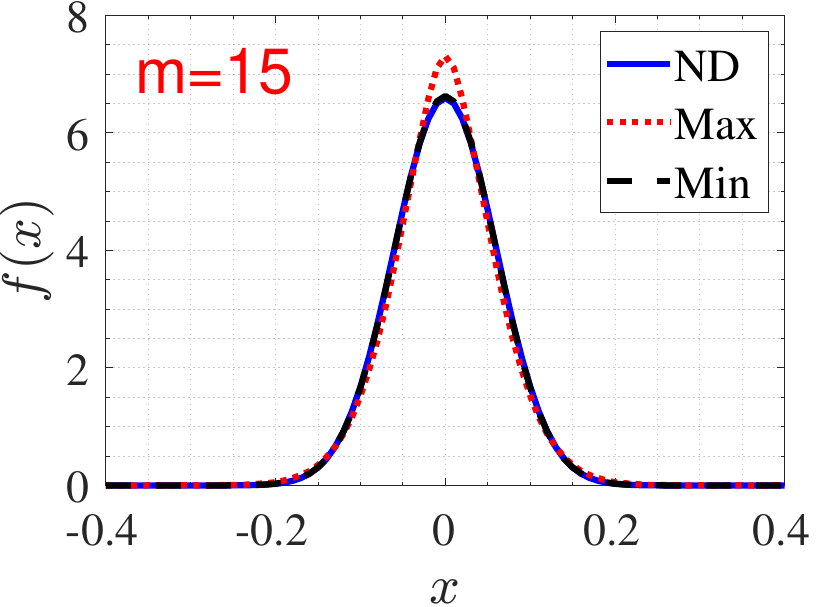}}
        \centerline{\scriptsize (b-1) Trevi: $m=15$}
	\end{minipage}
    \quad
	\begin{minipage}{0.22\textwidth}
        \centering
		\centerline{\includegraphics[width=\textwidth]{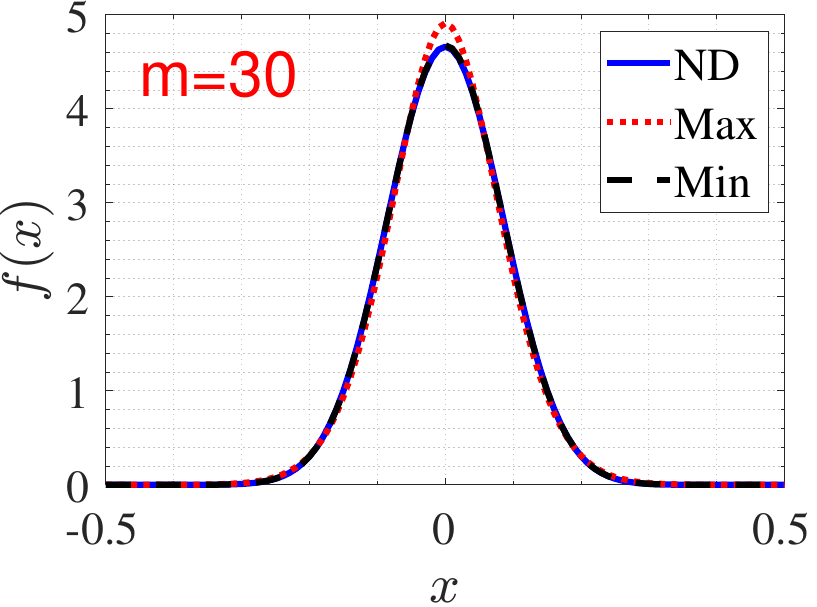}}
        \centerline{\scriptsize (b-2) Trevi: $m=30$}
	\end{minipage}
    \quad
    \begin{minipage}{0.22\textwidth}
        \centering
		\centerline{\includegraphics[width=\textwidth]{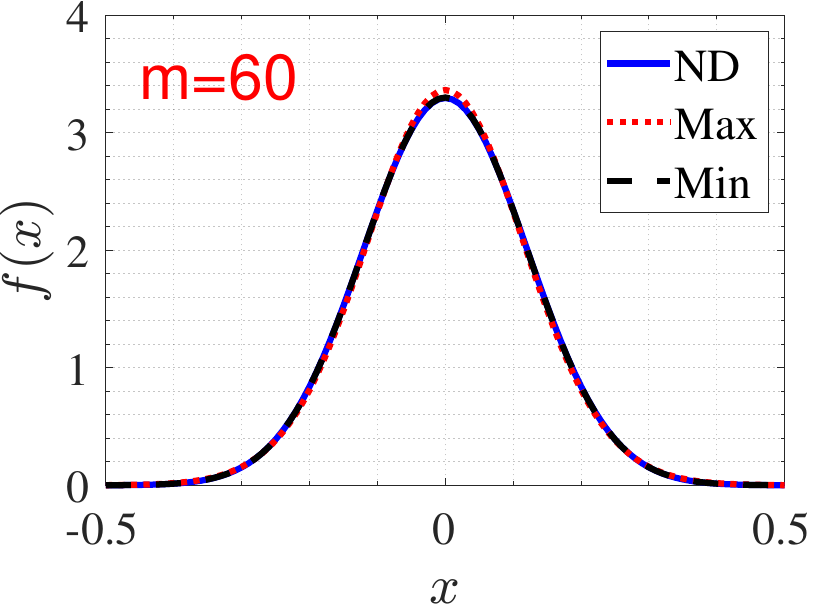}}
        \centerline{\scriptsize (b-3) Trevi: $m=60$}
	\end{minipage}
     \quad
    \begin{minipage}{0.22\textwidth}
        \centering
		\centerline{\includegraphics[width=\textwidth]{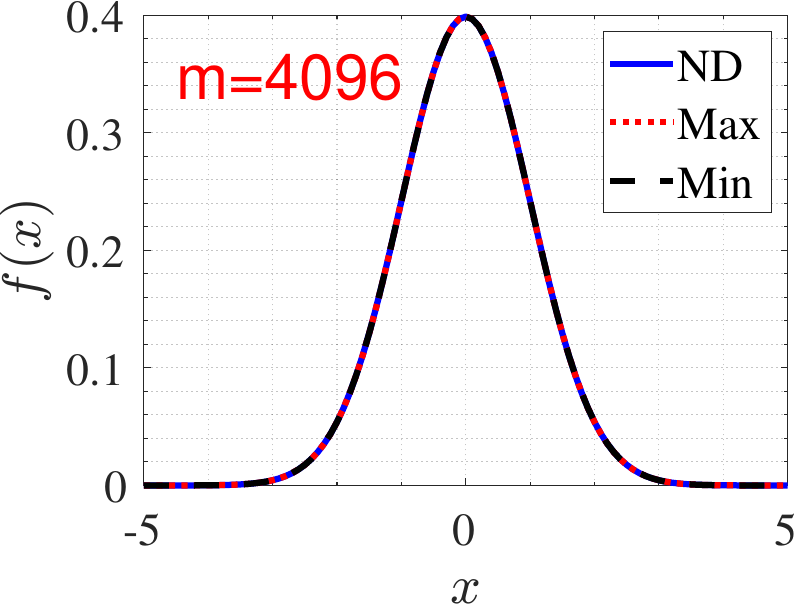}}
        \centerline{\scriptsize (b-4) Trevi: $m=n$}
	\end{minipage}
    \caption{Comparison of probability density curves of $\mathcal{N}(0,\frac{ms^{2}}{n})$ (ND) and $\tilde{s}X$ under different $m$ over datasets \emph{Gist} and \emph{Trevi}.} \label{fig:pdf-with-different-m-for-trevi}
\end{figure*}

\begin{fact}
\label{fact:moment-of-normal}
\cite{computeMoments} The first four moments of $\mathcal{N}(0,\frac{ms^{2}}{n})$ are:
\begin{equation}
\nonumber
\begin{cases}
\ \ E(sX) \ \ \ = 0
 \\
E((sX)^{2}) = \frac{ms^{2}}{n}
    \\
E((sX)^{3})  = 0
\\
E((sX)^{4}) = \frac{3m^{2}s^{4}}{n^{2}}
\end{cases}
\end{equation}
\end{fact}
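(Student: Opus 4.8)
The plan is to treat this as the standard computation of the low-order moments of a centered Gaussian and then substitute the variance $\tau^{2} = \frac{ms^{2}}{n}$ at the end. I first observe that the symbol $sX$ in the statement denotes a random variable distributed as $\mathcal{N}(0,\frac{ms^{2}}{n})$ — the limiting distribution identified in Corollary~\ref{corollary:equivalence-of-pdf} — rather than a literal product of $s$ with a standard normal. The task therefore reduces to evaluating $E(Z^{j})$ for $j=1,2,3,4$, where $Z\sim\mathcal{N}(0,\tau^{2})$ with $\tau^{2}=\frac{ms^{2}}{n}$.

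First I would dispense with the odd moments by symmetry. Since the density of $Z$ is even in its argument while the integrands $z$ and $z^{3}$ are odd, the integrals defining $E(Z)$ and $E(Z^{3})$ vanish, giving $E(sX)=0$ and $E((sX)^{3})=0$ immediately.

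For the even moments I would use the moment generating function $M(t)=E(e^{tZ})=\exp(\tfrac{1}{2}\tau^{2}t^{2})$ of the centered Gaussian and read off $E(Z^{j})=M^{(j)}(0)$ from its Taylor coefficients. Expanding $\exp(\tfrac{1}{2}\tau^{2}t^{2})=1+\tfrac{1}{2}\tau^{2}t^{2}+\tfrac{1}{8}\tau^{4}t^{4}+\cdots$ and matching against $\sum_{j}\frac{E(Z^{j})}{j!}t^{j}$ yields $E(Z^{2})=\tau^{2}$ and $E(Z^{4})=3\tau^{4}$. Equivalently one may invoke the closed form $E(Z^{2k})=(2k-1)!!\,\tau^{2k}$, or integrate $\int z^{2k}\phi(z/\tau)/\tau\,dz$ by parts. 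Substituting $\tau^{2}=\frac{ms^{2}}{n}$ then gives $E((sX)^{2})=\frac{ms^{2}}{n}$ and $E((sX)^{4})=3\bigl(\frac{ms^{2}}{n}\bigr)^{2}=\frac{3m^{2}s^{4}}{n^{2}}$, as claimed.

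There is no substantive obstacle here: the statement is a textbook property of the normal distribution, which is why it is recorded as a \emph{Fact} and cited from \cite{computeMoments}. The only point requiring a word of care is the abuse of notation noted above — once $sX$ is read as the limiting normal variable of variance $\frac{ms^{2}}{n}$, the four identities follow from symmetry and the moment formulas. These values are precisely the reference against which the exact moments of $\tilde{s}X$ in Lemma~\ref{lemma:four-moment-of-FastLSH} (which carry the correction factors $\epsilon$ and $\lambda$) will be compared, so the comparison of variance and fourth moment amounts to checking how far $\epsilon$ and $\lambda$ are from zero.
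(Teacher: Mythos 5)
Your proof is correct, and the paper itself offers no proof of this statement --- it is recorded as a cited textbook \emph{Fact} --- so the standard symmetry-plus-MGF computation you give is exactly what is needed; it is also consistent with the characteristic-function route (Eqn.~\eqref{eqn:compute-momnet-with-characteristic-function}) that the paper uses for the analogous moments of $\tilde{s}X$ in Lemma~\ref{lemma:four-moment-of-FastLSH}. Your remark that $sX$ here must be read as the limiting variable of variance $\frac{ms^{2}}{n}$ (not the literal product $s\cdot X$, whose variance would be $s^{2}$) is an apt and necessary clarification of the paper's notation.
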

It is easy to see that $\epsilon$ and $\lambda$ are monotonously decreasing function of $m$ and increasing function of $\sigma$ ($\frac{\tilde{\sigma}}{\tilde{u}}=\frac{\sigma n}{\sqrt{m} s^2}$). Hence, the first four moments of $sX$ and $\tilde{s}X$ are equal with each other, respectively, as $m$ approaches infinity. This is consistent with Corollary~\ref{corollary:equivalence-of-pdf}.


For limited $m < n$, the impact of $\sigma$ is not negligible. However, we can easily adjust $m$ to control the impact of $\sigma$ (the data-dependent factor) on $f_{\tilde{s}X}(t)$ within a reasonable range. Table~\ref{tab:variation-of-epsilon-lamda} in Appendix~\ref{sec:experiment-in-appendix} lists the values of $\epsilon$ and $\lambda$ for different $m$ over 12 datasets, where $\epsilon$ and $\lambda$ are calculated using the maximum, mean and minimum $\sigma$, respectively. As expected, $\epsilon$ and $\lambda$ decrease as $m$ increases. Take \emph{Trevi} as an example, $\epsilon$ is equal to 0 and $\lambda$ is very tiny (0.0001-0.000729) when $m=n$, manifesting the equivalence between $f_{\tilde{s}X}(t)$ and the PDF of $\mathcal{N}(0,\frac{ms^{2}}{n})$ as depicted in Figure~\ref{fig:pdf-with-different-m-for-trevi} (b-4). In the case of $m=30$, it also suffices to provide small enough $\epsilon$ and $\lambda$, indicating the high similarity between the distribution of $\tilde{s}X$ and $\mathcal{N}(0,\frac{ms^{2}}{n})$.


To visualize the similarity, we plot $f_{\tilde{s}X}(t)$ for different $m$ under the maximum and minimum $\sigma$, and the PDF of $\mathcal{N}(0,\frac{ms^{2}}{n})$ in Figure~\ref{fig:pdf-with-different-m-for-trevi} for two datasets \emph{Gist} and \emph{Trevi}. More plots for other datasets are listed in Figure~\ref{fig:pdf-curve-of-all-dataset} in Appendix~\ref{sec:experiment-in-appendix}. Three observations can be made from these figures: (1) the distribution of ${\tilde{s}X}$ matches very well with $\mathcal{N}(0,\frac{ms^{2}}{n})$ for small $\sigma$; (2) for large $\sigma$, $f_{\tilde{s}X}(t)$ differs only slightly from the PDF of $\mathcal{N}(0,\frac{ms^{2}}{n})$ for all $m$, indicating that $s$ is the dominating factor in $p(s,\sigma)$; (3) greater $m$ results in higher similarity between $f_{\tilde{s}X}(t)$ and $\mathcal{N}(0,\frac{ms^{2}}{n})$, implying that FastLSH can always achieve almost the same performance as E2LSH by choosing $m$ appropriately.

To further validate the LSH property of FastLSH, we compare the important parameter $\rho$ for FastLSH and E2LSH when $m =30$. $\rho$ is defined as the function of the approximation ratio $c$, i.e., $\rho(c)=log(1/p(s_{1}))/log(1/p(s_{2}))$, where $s_{1} = 1$ and $s_{2}=c$. Note that $\rho$ affects both the space and time efficiency of LSH algorithms. For $c$ in the range $[1,20]$ (with increments of 0.1), we calculate $\rho$ using \emph{Matlab}, where the minimal and maximal $\sigma$ are collected for different $c$ ($s$). The plots of $\rho(c)$ under different bucket widths over the datasets \emph{Gist} and \emph{Trevi} are illustrated in Figure~\ref{fig:derivative-rho-curve-of-Trevi}. Clearly, the $\rho(c)$ curve of FastLSH matches very well with that of E2LSH, verifying that FastLSH maintains almost the same LSH property with E2LSH even when $m$ is relatively small. More plots of $\rho(c)$ for other datesets are given in Figure~\ref{fig:rho-curve-for-Random-Audio-Cifar-Deep-Glove} and Figure~\ref{fig:rho-curve-for-image-notre-sift-sun-ukbench} in Appendix~\ref{sec:experiment-in-appendix}.

\begin{figure*}[t]
	\centering
	\begin{minipage}{0.27\textwidth}
        \centering
		\centerline{\includegraphics[width=\textwidth]{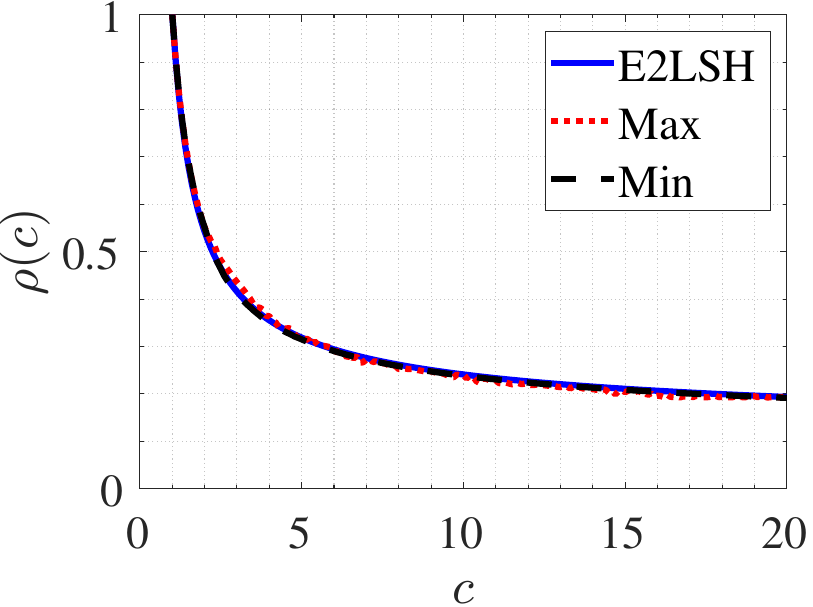}}
        \centerline{\scriptsize (a-1) Gist: $\tilde{w}$ = 0.255 and $w$ = 1.5}
	\end{minipage}
    \quad
	\begin{minipage}{0.27\textwidth}
        \centering
		\centerline{\includegraphics[width=\textwidth]{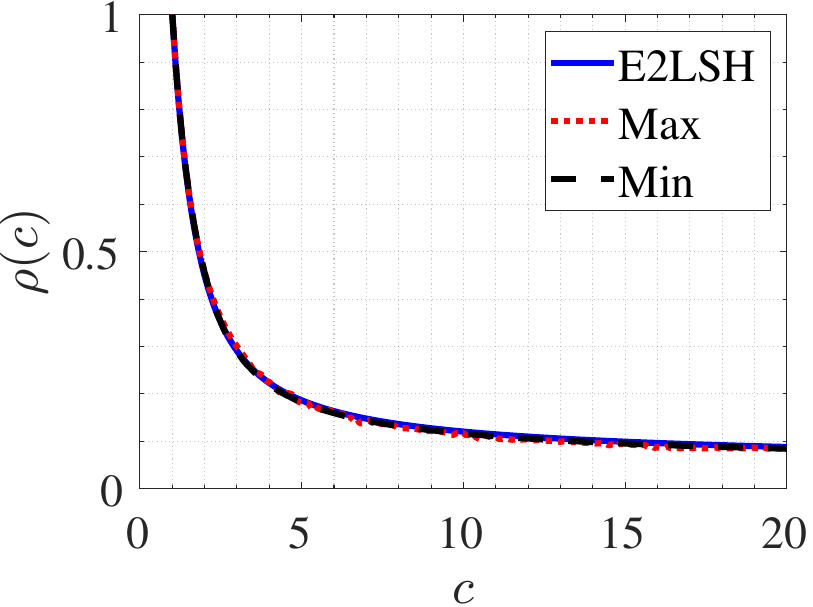}}
        \centerline{\scriptsize (a-2) Gist: $\tilde{w}$ = 0.73 and $w$ = 4}
	\end{minipage}
    \quad
	\begin{minipage}{0.27\textwidth}
        \centering
		\centerline{\includegraphics[width=\textwidth]{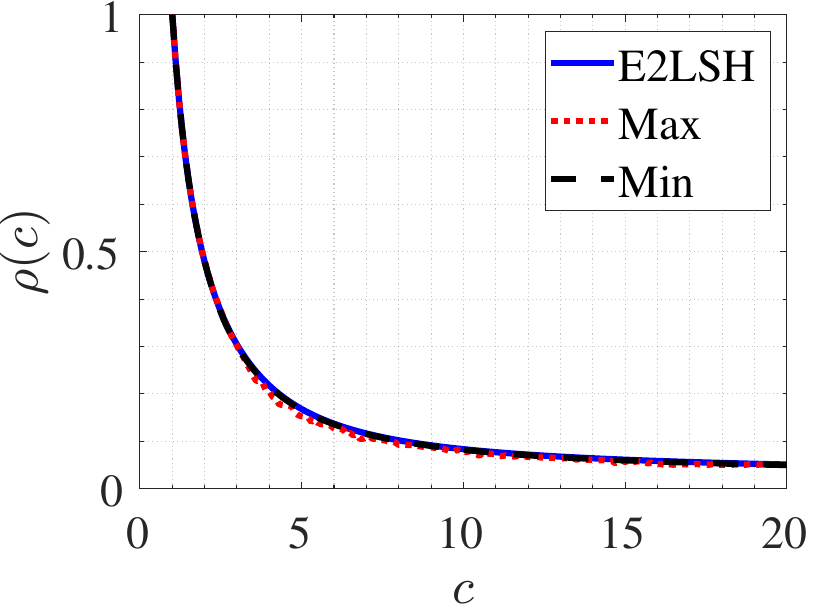}}
        \centerline{\scriptsize (a-3) Gist: $\tilde{w}$ = 1.8 and $w$ = 10}
	\end{minipage}
    \quad
     \begin{minipage}{0.27\textwidth}
        \centering
		\centerline{\includegraphics[width=\textwidth]{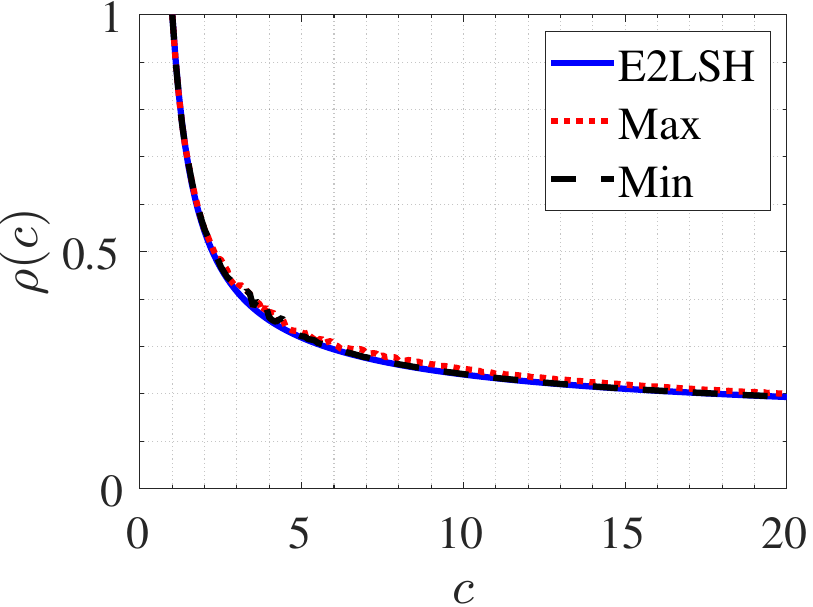}}
        \centerline{\scriptsize (b-1) Trevi: $\tilde{w}$ = 0.127 and $w$ = 1.5}
	\end{minipage}
    \quad
	\begin{minipage}{0.27\textwidth}
        \centering
		\centerline{\includegraphics[width=\textwidth]{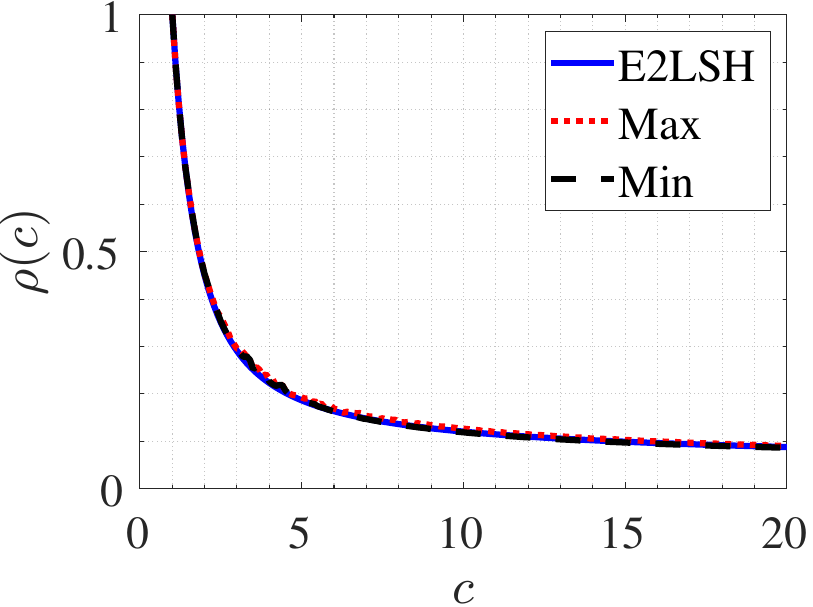}}
        \centerline{\scriptsize (b-2) Trevi: $\tilde{w}$ = 0.35 and $w$ = 4}
	\end{minipage}
    \quad
	\begin{minipage}{0.27\textwidth}
        \centering
		\centerline{\includegraphics[width=\textwidth]{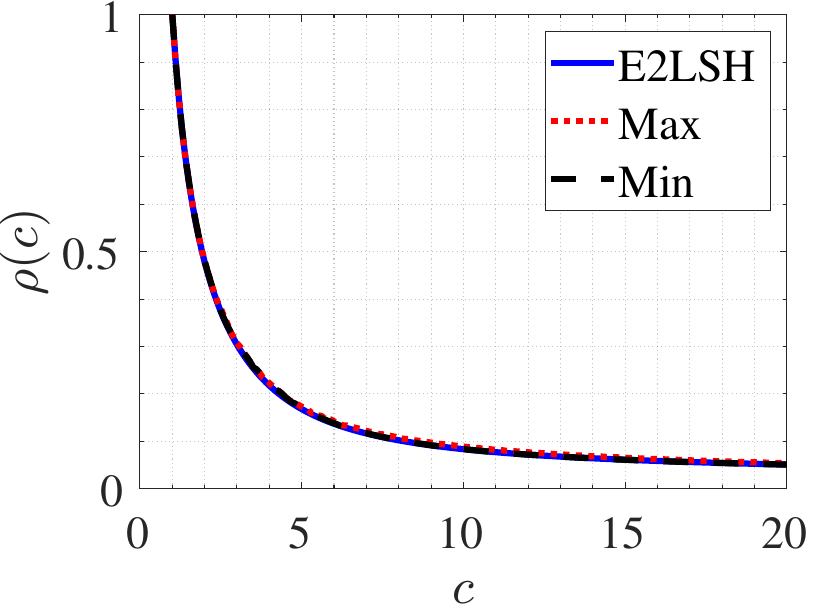}}
        \centerline{\scriptsize (b-3) Trevi: $\tilde{w}$ = 0.86 and $w$ = 10}
	\end{minipage}
    \caption{$\rho$ curves under different bucket widths over datasets \emph{Gist} and \emph{Trevi}} \label{fig:derivative-rho-curve-of-Trevi}
\end{figure*}

\section{Extension to Other Similarity Metrics}
\label{sec:extension-of-fastlsh}
In this section, we sketch how to extend FastLSH to other similarity measures. Since the angular similarity can be well approximated by the Euclidean distance if the norms of data item are identical, one can use FastLSH for the angular similarity directly after data normalization. In addition, FastLSH can solve the maximum inner product search problem by utilizing two transformation functions~\cite{MIPS}. The detailed discussion is given in Appendix~\ref{sec:extension-in-appendix}. The extension of FastLSH to support $l_{p}$ norm for $p\in(0,2)$ is left as our future work.

\section{Experiments}
\label{sec:experiment}
In this section we evaluate the performance of FastLSH against other LSH algorithms for the ANN search task. All algorithms follow the standard hash table construction and search procedure as discussed in Section~\ref{sec:ann-search-using-fastlsh}. All experiments are carried out on the server with six-cores Intel(R), i7-8750H @ 2.20GHz CPUs and 32 GB RAM, in Ubuntu 20.04.

\textbf{Datasets:} 11 publicly available high-dimensional real datasets and one synthetic dataset are experimented with~\cite{dataset}, the statistics of which are listed in Table~\ref{tab:statistics-of-dataset}. Sun~\footnote{http://groups.csail.mit.edu/vision/SUN/} is the set of containing about 70k GIST features of images. Cifar~\footnote{http://www.cs.toronto.edu/~kriz/cifar.html} is denoted as the set of 50k vectors extracted from TinyImage. Audio~\footnote{http://www.cs.princeton.edu/cass/audio.tar.gz} is the set of about 50k vectors extracted from DARPA TIMIT. Trevi~\footnote{http://phototour.cs.washington.edu/patches/default.htm} is the set of containing around 100k features of bitmap images. Notre~\footnote{http://phototour.cs.washington.edu/datasets/} is the set of features that are Flickr images and a reconstruction. Sift~\footnote{http://corpus-texmex.irisa.fr} is the set of containing 1 million SIFT vectors. Gist~\footnote{https://github.com/aaalgo/kgraph} is the set that is consist of 1 million image vectors. Deep~\footnote{https://yadi.sk/d/I\_yaFVqchJmoc} is the set of 1 million vectors that are deep neural codes of natural images obtained by convolutional neural network. Ukbench~\footnote{http://vis.uky.edu/~stewe/ukbench/} is the set of vectors containing 1 million features of images. Glove~\footnote{http://nlp.stanford.edu/projects/glove/} is the set of about 1.2 million feature vectors extracted from Tweets. ImageNet~\footnote{http://cloudcv.org/objdetect/} is the set of data points containing about 2.4 million dense SIFT features. Random is the set containing 1 million randomly selected vectors in a unit hypersphere.
\begin{table}[t]
 \caption{Statistics of Datasets}
 \centering
 \footnotesize
 \label{tab:statistics-of-dataset}
    \begin{tabular}{c|c|c|c}
    \hline\hline
    Datasets & \# of Points & \# of Queries & Dimension \\ \hline\hline
    Sun       & 69106   & 200 & 512   \\        \hline
    Cifar     & 50000   & 200 & 512   \\        \hline
    Audio     & 53387   & 200 & 192   \\        \hline
    Trevi     & 99000   & 200 & 4096  \\        \hline
    Notre     & 333000  & 200 & 128   \\        \hline
    Sift      & 1000000 & 200 & 128   \\        \hline
    Gist      & 1000000 & 200 & 960   \\        \hline
    Deep      & 1000000 & 200 & 256   \\        \hline
    Ukbench   & 1000000 & 200 & 128   \\        \hline
    Glove     & 1192514 & 200 & 100   \\        \hline
    ImageNet  & 2340000 & 200 & 150   \\        \hline
    Random    & 1000000 & 200 & 100   \\        \hline\hline
    \end{tabular}
\end{table}

\textbf{Baselines:} We compare FastLSH with two popular LSH algorithms, i.e., E2LSH~\cite{pslsh,E2LSH-package} and ACHash~\cite{DHHash}. E2LSH is the vanilla LSH scheme for approximate near neighbor search with sub-linear query time. ACHash is proposed to speedup the hash function evaluation by using Hadamard transform and sparse random projection. Note that ACHash is actually not an eligible LSH method because no expression of the probability of collision exists for ACHash, not mentioning the desirable LSH property. We choose ACHash as one of the baselines for the sake of completeness.

\begin{figure*}[t]
	\centering
	\begin{minipage}{0.27\textwidth}
		\centering
		\centerline{\includegraphics[width=\textwidth]{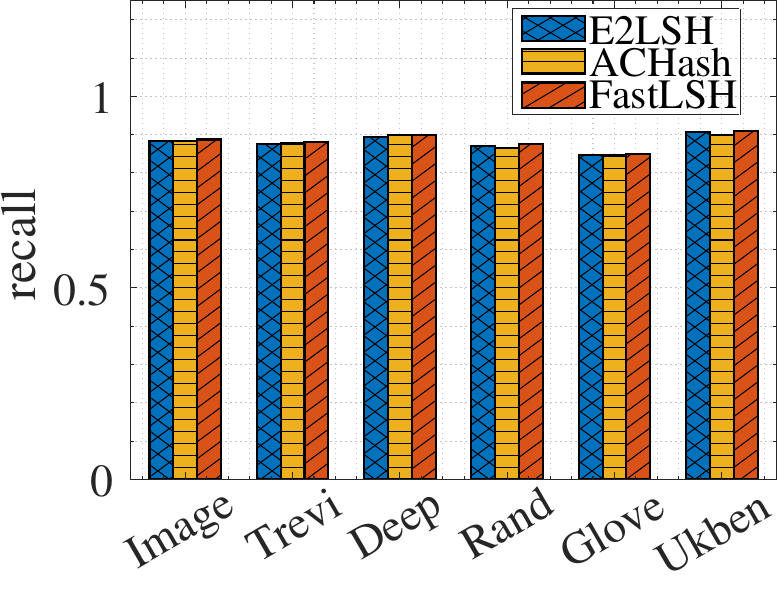}}
        \centerline{\scriptsize (a)}
	\end{minipage}
    \quad
    \begin{minipage}{0.27\textwidth}
		\centering
		\centerline{\includegraphics[width=\textwidth]{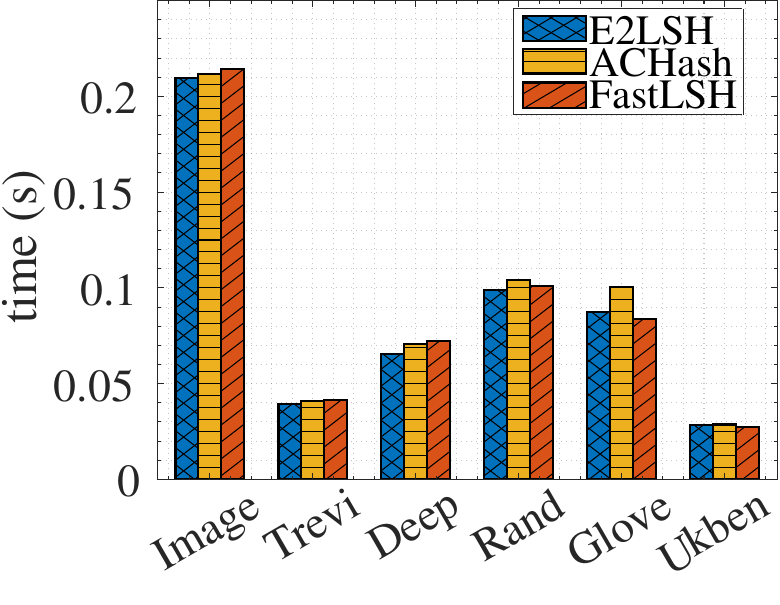}}
        \centerline{\scriptsize (b)}
	\end{minipage}
    \quad
    \begin{minipage}{0.27\textwidth}
		\centering
		\centerline{\includegraphics[width=\textwidth]{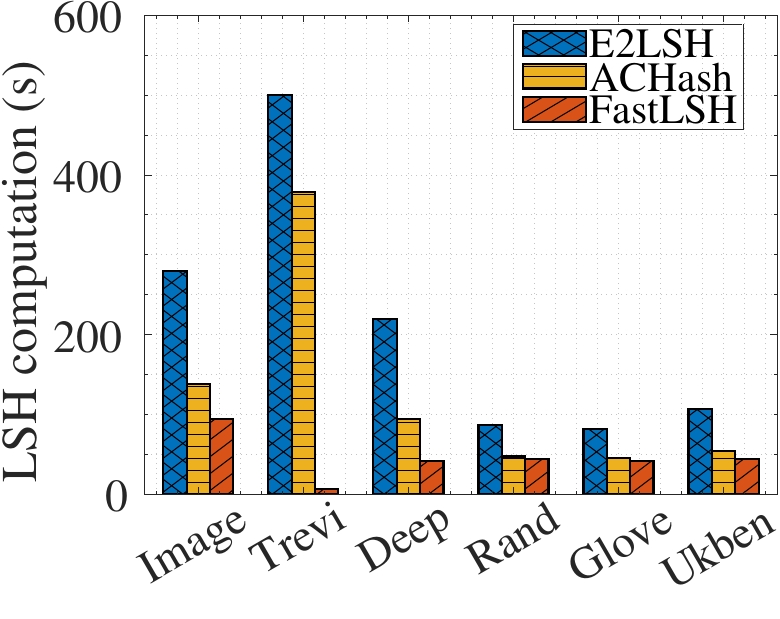}}
        \centerline{\scriptsize (c)}
	\end{minipage}
    \quad
    \begin{minipage}{0.27\textwidth}
        \centering
		\centerline{\includegraphics[width=\textwidth]{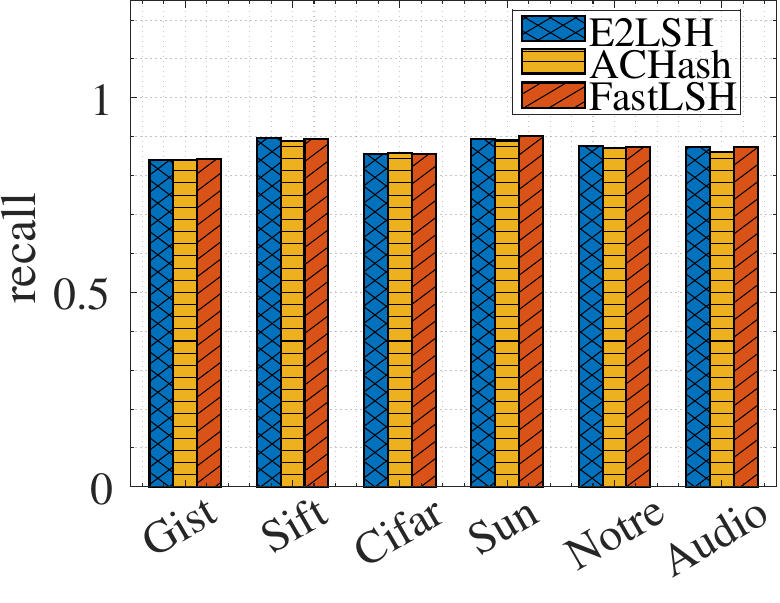}}
        \centerline{\scriptsize (d)}
	\end{minipage}
    \quad
    \begin{minipage}{0.27\textwidth}
        \centering
		\centerline{\includegraphics[width=\textwidth]{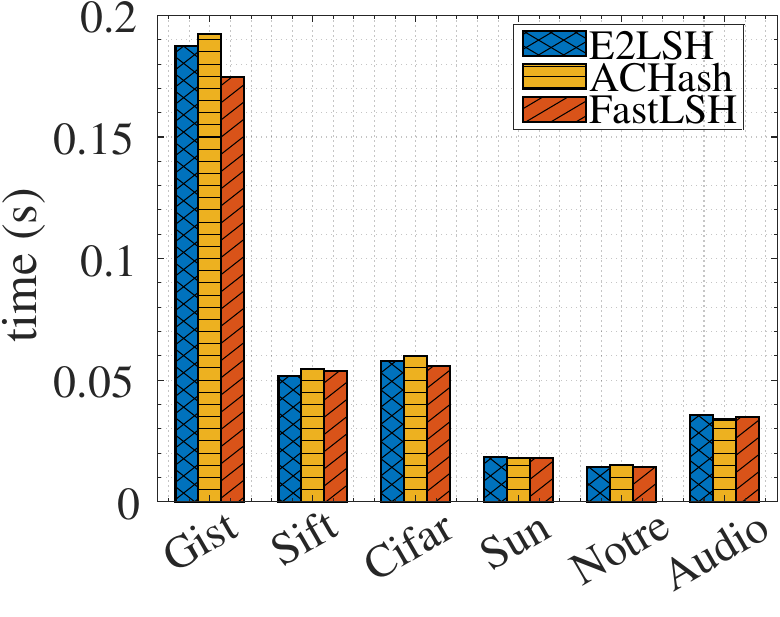}}
        \centerline{\scriptsize (e)}
	\end{minipage}
    \quad
    \begin{minipage}{0.27\textwidth}
        \centering
		\centerline{\includegraphics[width=\textwidth]{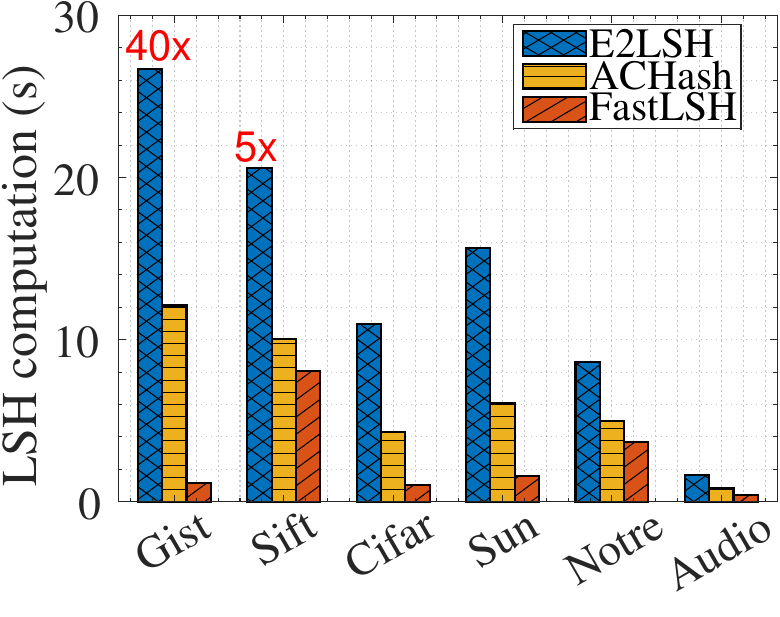}}
        \centerline{\scriptsize (f)}
	\end{minipage}
    \caption{Comparison of recall, average query time and LSH computation time.} \label{fig:time-recall-lsh-computation}
\end{figure*}

\textbf{Evaluation Metrics:} To evaluate the performance of FastLSH and baselines, we present the following metrics: 1) the average recall, i.e., the fraction of near neighbors that are actually returned; 2)  the average running time to report the near neighbors for each query; 3) the time taken to compute hash functions.

\textbf{Parameter Settings:} For the same dataset and target recall, we use identical $k$ (number of hash functions per table) and $L$ (number of hash tables) for fairness. Thus, three algorithms take the same space occupations. $m$ is set to 30 throughout all experiments for FastLSH. To achieve optimal performance, the sampling ratio for ACHash is set to the defaulst 0.25. We report R10@10 over all of the datasets for three algorithms.

\textbf{Experiment 1:} In this set of experiments, we set the target recall as 0.9. By tuning $L$, $k$ and bucket widths, the three methods achieve almost the same recall on the same dataset with the optimal query time~\footnote{The actual recall may vary around 0.9 slightly}. The recall, average query time and LSH computation time are illustrated in Figure~\ref{fig:time-recall-lsh-computation} (a), (b), (d) and (e). As analyzed in Section~\ref{sec:the-lsh-property-of-fastlsh}, FastLSH and E2LSH owns similar theoretical property, and thus achieve comparable query performance and answer accuracy as plotted in Figure~\ref{fig:time-recall-lsh-computation}. Due to lack of theoretical guarantee, ACHash performs slightly worse than FastLSH and E2LSH in most cases w.r.t query efficiency.

The performance of the three methods differs dramatically when it turns to the cost of hashing. As shown in Figure~\ref{fig:time-recall-lsh-computation} (c) and (f), the LSH computation time of FastLSH is significantly superior to E2LSH and ACHash. For example, FastLSH obtains around 80 speedup over E2LSH and runs 60 times faster than ACHash on \emph{Trevi}. This is because the time complexity of FastLSH is only $O(m)$ instead of $O(n)$ of E2LSH. For ACHash, the fixed sampling ratio and overhead in Hadamard transform make it inferior to FastLSH.

\textbf{Experiment 2:} We also plot the recall v.s. average query time curves by varying target recalls to obtain a complete picture of FastLSH, which are depicted in Figure~\ref{fig:time-recall-curve-in-appendix} in Appendix~\ref{sec:experiment-in-appendix} due to space limitation. The empirical results demonstrate that FastLSH performs almost the same in terms of answer accuracy, query efficiency and space occupation as E2LSH. This further validates that FastLSH owns the same LSH property as E2LSH with limited $m < n$. Again, ACHash is slightly inferior to the others in most cases.

In sum, FastLSH is on par with E2LSH (with provable LSH property) in terms of answer accuracy and query efficiency and marginally superior to ACHash (without provable LSH property), while significantly reducing the cost of hashing. We believe that FastLSH is a promising alternative to the existing hashing schemes for $l_2$ norm.


\section{Conclusion}
In this paper, we develop FastLSH to accelerate hash function evaluation, which maintains the same theoretical guarantee and empirical performance as the classic E2LSH. Rigid analysis shows that the probability of collision of FastLSH is asymptotically equal to that of E2LSH. In the case of limited $m$, we quantitatively analyze the impact of $\sigma$ and $m$ on the probability of collision. Extensive experiments on a number of datasets demonstrate that FastLSH is a promising alternative to the classic LSH scheme.


\nocite{langley00}

\bibliography{FasterLSH}
\bibliographystyle{icml2023}

\newpage
\appendix
\onecolumn

\begin{spacing}{1.0}
    \begin{center}
      \textbf{\LARGE Appendix}
 \end{center}
    \end{spacing}

\section{Proofs}
\label{sec:main-lemma-theorem-in-appendix}

 \begin{lemma}
 \label{lemma:characteristic-func-of-w-in-appendix}
 The characteristic function of the product of two independent random variables $W=XY$ is as follows:
 \begin{equation}
 \nonumber
    \varphi_{W}(x)=E_{Y}\{\exp(-\frac{x^{2}Y^{2}}{2})\}
 \end{equation}
 where $X$ is a standard normal random variable and $Y$ is an independent random variable with mean $\mu$ and variance $\sigma^{2}$.
\end{lemma}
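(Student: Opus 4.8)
The plan is to compute the characteristic function straight from its definition, conditioning on $Y$ and exploiting the independence of $X$ and $Y$. Recall that for any random variable $W$ the characteristic function is $\varphi_W(x) = E[\exp(ixW)]$, so with $W = XY$ the quantity to evaluate is $E[\exp(ixXY)]$. Since the target expression involves an outer expectation over $Y$ only, the natural route is to peel off the $X$-expectation first and recognize it as the characteristic function of a Gaussian.

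First I would apply the law of total expectation (the tower property), writing $\varphi_W(x) = E_Y\{E_X[\exp(ixXY)\mid Y]\}$. Because $X$ and $Y$ are independent, conditioning on $Y=y$ freezes $xY$ into a deterministic coefficient, so the inner expectation is taken purely over the standard normal variable $X$ with $xy$ playing the role of its argument. Next I would invoke the well-known characteristic function of the standard normal distribution, $E[\exp(itX)] = \exp(-t^{2}/2)$ for $X \sim \mathcal{N}(0,1)$. Setting $t = xy$ gives the inner conditional expectation as $\exp(-x^{2}y^{2}/2)$; substituting back and taking the outer expectation over $Y$ yields $\varphi_W(x) = E_Y\{\exp(-x^{2}Y^{2}/2)\}$, exactly the claimed formula.

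The argument reduces to a single conditioning step, so there is no deep obstacle here; the only point requiring care is the rigorous justification for interchanging the order of integration. This is secured by independence together with the boundedness of the integrand, $|\exp(ixXY)| = 1$, which makes the nested expectation well-defined (via Fubini's theorem) regardless of the distribution of $Y$, whose only assumed properties are a finite mean $\mu$ and variance $\sigma^{2}$. I would note that the resulting expression is valid for every real $x$ and that it expresses $\varphi_W$ purely as an average of a Gaussian-type factor over the law of $Y$, which is precisely the form needed to specialize to $Y=\tilde{s}$ in the subsequent lemma.
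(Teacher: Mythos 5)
Your proposal is correct and follows essentially the same route as the paper's proof: condition on $Y$ via the tower property, evaluate the inner expectation as the characteristic function of a standard normal at $t=xY$, and take the outer expectation over $Y$. Your added remark on Fubini/boundedness is a small extra touch of rigor the paper omits, but the argument is otherwise identical.
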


\begin{proof}
 For the characteristic function of $W$, we can write:
 \begin{equation}
 \nonumber
 \begin{aligned}
 \varphi_{W}(x)&=E_{W}\{\exp(ixW)\}\\
 &=E_{XY}\{\exp(ixXY)\} \\
 &=E_{Y}\{E_{X|Y}\{\exp(ixXY)|Y\}\}\\
 &= \int_{-\infty}^{+\infty}E_{X|Y}\{\exp(-ixXY)|Y\}f(Y)dY \\
 &= \int_{-\infty}^{+\infty}(\int_{-\infty}^{+\infty}\exp(-ixXY)f(X|Y)dX) f(Y)dY \\
 &= \int_{-\infty}^{+\infty}\exp(-\frac{x^{2} Y^{2}}{2})f(Y)dY \\
 &= E_{Y}\{\exp(-\frac{x^{2} Y^{2}}{2})\}
 \end{aligned}
 \end{equation}
 where standard normal random variable $X$ is eliminated by its characteristic function. We prove this Lemma.
\end{proof}

\begin{lemma}
\label{lemma:characteristic-func-of-tildesx-in-appendix}
 The characteristic function of $\tilde{s}X$ is as follows:
 \begin{equation}
 \nonumber
 \varphi_{\tilde{s}X}(x)=\frac{1}{2(1-\Phi(\frac{-\tilde{\mu}}{\tilde{\sigma}}))} \exp(\frac{1}{8} x^4 \tilde{\sigma}^2-\frac{1}{2} \tilde{\mu} x^2) \operatorname{erfc}(\frac{\frac{1}{2} x^2 \tilde{\sigma}^2-\tilde{\mu}}{\sqrt{2} \tilde{\sigma}}) \quad (-\infty <x< +\infty)
 \end{equation}
 where $\operatorname{erfc}(t) = \frac{2}{\sqrt{\pi}}\int_{t}^{+\infty}\exp(-x^{2})dx$ $(-\infty <t< +\infty)$ is the complementary error function.
\end{lemma}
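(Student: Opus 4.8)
The plan is to reduce the statement to the previous lemma and then evaluate a single Gaussian integral against the truncated-normal density. First I would invoke Lemma~\ref{lemma:characteristic-func-of-w-in-appendix} with $Y=\tilde{s}$: since $X\sim\mathcal{N}(0,1)$ is independent of $\tilde{s}$, that lemma gives immediately
\[
\varphi_{\tilde{s}X}(x)=E_{\tilde{s}}\{\exp(-\tfrac{x^{2}\tilde{s}^{2}}{2})\}=E_{\tilde{s}^{2}}\{\exp(-\tfrac{x^{2}\tilde{s}^{2}}{2})\}.
\]
The key observation is that this expectation involves only $\tilde{s}^{2}$, whose law is exactly the singly-truncated normal $\psi(\cdot;\tilde{\mu},\tilde{\sigma}^{2},0,\infty)$ identified in Eqn.~\eqref{eqn:cdf-of-s}. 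Writing the expectation as an integral over $z=\tilde{s}^{2}\in[0,\infty)$ and substituting the explicit density from Eqn.~\eqref{eqn:pdf-of-truncated-normal} turns the task into evaluating
\[
\frac{1}{1-\Phi(\frac{-\tilde{\mu}}{\tilde{\sigma}})}\cdot\frac{1}{\sqrt{2\pi}\tilde{\sigma}}\int_{0}^{\infty}\exp\!\Big(-\tfrac{x^{2}z}{2}-\tfrac{(z-\tilde{\mu})^{2}}{2\tilde{\sigma}^{2}}\Big)\,dz,
\]
where I have already pulled out the normalizing constant $1/(1-\Phi(-\tilde{\mu}/\tilde{\sigma}))$ arising from truncating at $0$.

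The central computation is to complete the square in $z$ inside the exponent. Collecting the quadratic and linear terms in $z$ produces a perfect square $-(z-\nu)^{2}/(2\tilde{\sigma}^{2})$ centered at $\nu=\tilde{\mu}-\tfrac{1}{2}x^{2}\tilde{\sigma}^{2}$, together with a $z$-independent residual constant. That constant evaluates to $\tfrac{1}{8}x^{4}\tilde{\sigma}^{2}-\tfrac{1}{2}\tilde{\mu}x^{2}$, which I would factor out of the integral to obtain the prefactor $\exp(\tfrac{1}{8}x^{4}\tilde{\sigma}^{2}-\tfrac{1}{2}\tilde{\mu}x^{2})$ appearing in the claim. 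What remains is the tail Gaussian integral $\int_{0}^{\infty}\exp(-(z-\nu)^{2}/(2\tilde{\sigma}^{2}))\,dz$.

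Finally I would evaluate this tail integral via the substitution $u=(z-\nu)/(\sqrt{2}\tilde{\sigma})$, which maps the lower limit $z=0$ to $u_{0}=-\nu/(\sqrt{2}\tilde{\sigma})=(\tfrac{1}{2}x^{2}\tilde{\sigma}^{2}-\tilde{\mu})/(\sqrt{2}\tilde{\sigma})$ and recasts it as $\sqrt{2}\tilde{\sigma}\int_{u_{0}}^{\infty}\exp(-u^{2})\,du=\sqrt{2}\tilde{\sigma}\cdot\tfrac{\sqrt{\pi}}{2}\operatorname{erfc}(u_{0})$ by the definition of $\operatorname{erfc}$. Multiplying the normalizing constant, the density factor $1/(\sqrt{2\pi}\tilde{\sigma})$, the factored exponential, and this $\operatorname{erfc}$ term, the numerical constants collapse to $1/2$, yielding exactly the stated formula. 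No single step is conceptually deep; the only real obstacle is careful bookkeeping in the completion of the square and in matching the lower integration limit to the argument of $\operatorname{erfc}$, so that the residual constant and the $\operatorname{erfc}$ argument emerge with the signs and the factors of $\tfrac{1}{2}$ and $\sqrt{2}$ precisely as claimed.
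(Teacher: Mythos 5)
Your proposal is correct and follows essentially the same route as the paper's proof: invoke Lemma~\ref{lemma:characteristic-func-of-w-in-appendix}, reduce to an integral of $\exp(-x^{2}z/2)$ against the truncated normal density of $\tilde{s}^{2}$, complete the square to extract $\exp(\tfrac{1}{8}x^{4}\tilde{\sigma}^{2}-\tfrac{1}{2}\tilde{\mu}x^{2})$, and evaluate the remaining tail integral as an $\operatorname{erfc}$. The only (immaterial) difference is that the paper integrates in the variable $y=\tilde{s}$ using $f_{\tilde{s}}(y)=2y\,\psi(y^{2};\tilde{\mu},\tilde{\sigma}^{2},0,\infty)$ and then substitutes $dy^{2}$, whereas you work directly in $z=\tilde{s}^{2}$ from the start.
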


\begin{proof}
  According to Eqn.~\eqref{eqn:pdf-of-s}, we know the PDF of $\tilde{s}X$. By applying Lemma~\ref{lemma:characteristic-func-of-w-in-appendix}, we have the following result:
\begin{equation}
    \nonumber
    \begin{aligned}
    \varphi_{\tilde{s}X}(x) & = \frac{1}{\sqrt{2 \pi} \tilde{\sigma}} \int_0^{+\infty} \frac{2 y}{\Phi(a_{2};\tilde{\mu}, \tilde{\sigma}^2)-\Phi(a_{1};\tilde{\mu}, \tilde{\sigma}^2)} \exp (\frac{-x^2 y^2}{2}-\frac{(y^2-\tilde{\mu})^2}{2 \tilde{\sigma}^2})dy \\
    & =\frac{1}{\sqrt{2 \pi} \tilde{\sigma}} \int_0^{+\infty} \frac{1}{\Phi(a_{2}; \tilde{\mu}, \tilde{\sigma}^2)-\Phi(a_{1};\tilde{\mu}, \tilde{\sigma}^2)} \exp (-\frac{x^2 y^2}{2}-\frac{(y^2-\tilde{\mu})^2}{2 \tilde{\sigma}^2})dy^2 \\
    & =\frac{1}{\sqrt{2 \pi} \tilde{\sigma}} \int_0^{+\infty} \frac{1}{\Phi(a_{2};\tilde{\mu}, \tilde{\sigma}^2)-\Phi(a_{1};\tilde{\mu}, \tilde{\sigma}^2)} \exp (\frac{-(y^2-(\tilde{\mu}-\frac{1}{2} x^2 \tilde{\sigma}^2))^2-\tilde{\mu} x^2 \tilde{\sigma}^2+\frac{1}{4}x^4 \tilde{\sigma}^4)}{2 \tilde{\sigma}^2})dy^2 \\
    & =\frac{1}{\sqrt{2 \pi} \tilde{\sigma}} \int_0^{+\infty} \frac{1}{\Phi(a_{2};\tilde{\mu}, \tilde{\sigma}^2)-\Phi(a_{1};\tilde{\mu}, \tilde{\sigma}^2)} \exp (\frac{1}{8} x^4 \tilde{\sigma}^2-\frac{1}{2} \tilde{\mu} x^2) \exp(\frac{-(y^2-(\tilde{\mu}-\frac{1}{2} x^2 \tilde{\sigma}^2))^2)}{2 \tilde{\sigma}^2}) dy^2\\
    & =\frac{1}{2(\Phi(\frac{a_{2}-\tilde{\mu}}{\tilde{\sigma}})-\Phi(\frac{a_{1}-\tilde{\mu}}{\tilde{\sigma}}))} \exp(\frac{1}{8} x^4 \tilde{\sigma}^2-\frac{1}{2} \tilde{\mu} x^2)  \operatorname{erfc}(\frac{\frac{1}{2} x^2 \tilde{\sigma}^2-\tilde{\mu}}{\sqrt{2} \tilde{\sigma}})  \\
    & =\frac{1}{2(1-\Phi(\frac{-\tilde{\mu}}{\tilde{\sigma}}))} \exp(\frac{1}{8} x^4 \tilde{\sigma}^2-\frac{1}{2} \tilde{\mu} x^2) \operatorname{erfc}(\frac{\frac{1}{2} x^2 \tilde{\sigma}^2-\tilde{\mu}}{\sqrt{2} \tilde{\sigma}}) \\
    &
    \end{aligned}
\end{equation}
where $\tilde{\mu}=\frac{ms^{2}}{n}$ and $\tilde{\sigma}^{2}=m\sigma^{2}$. Hence we prove this Lemma.
\end{proof}

\begin{theorem}
\label{theorem:collision-prob.-of-FastLSH-in-appendix}
 The collision probability of FastLSH is as follows:
 \begin{equation}
 \nonumber
 p(s,\sigma)=Pr[h_{\tilde{\textbf{a}},\tilde{b}}(\textbf{v})=h_{\tilde{\textbf{a}},\tilde{b}}(\textbf{u})]=\int_{0}^{\tilde{w}}f_{|\tilde{s}X|}(t)(1-\frac{t}{\tilde{w}})dt
 \end{equation}
\end{theorem}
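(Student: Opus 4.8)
The plan is to reproduce the classical E2LSH collision-probability derivation that underlies Eqn.~\eqref{eqn:prob.-of-collision-e2lsh}, replacing the fixed projection-distance variable $sX$ by the product variable $\tilde{s}X$ whose density $f_{\tilde{s}X}$ (and hence $f_{|\tilde{s}X|}$) has already been pinned down through Lemma~\ref{lemma:characteristic-func-of-tildesx} and the inverse transform in Eqn.~\eqref{eqn:pdf-of-FastLSH}. First I would write the collision event out explicitly: by Eqn.~\eqref{eqn:fastlsh}, the equality $h_{\tilde{\textbf{\emph{a}}},\tilde{b}}(\textbf{\emph{v}})=h_{\tilde{\textbf{\emph{a}}},\tilde{b}}(\textbf{\emph{u}})$ holds exactly when $\tilde{\textbf{\emph{a}}}^{T}S(\textbf{\emph{v}})+\tilde{b}$ and $\tilde{\textbf{\emph{a}}}^{T}S(\textbf{\emph{u}})+\tilde{b}$ land in the same half-open interval of length $\tilde{w}$.

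Second, I would isolate the projection-distance random variable $D=\tilde{\textbf{\emph{a}}}^{T}S(\textbf{\emph{v}})-\tilde{\textbf{\emph{a}}}^{T}S(\textbf{\emph{u}})=\tilde{\textbf{\emph{a}}}^{T}(\tilde{\textbf{\emph{v}}}-\tilde{\textbf{\emph{u}}})$. Conditioned on the sampling outcome $S$, which fixes $\tilde{s}=\|\tilde{\textbf{\emph{v}}}-\tilde{\textbf{\emph{u}}}\|_{2}$, the $p$-stability of the Gaussian gives $D\mid S\sim\mathcal{N}(0,\tilde{s}^{2})$, i.e.\ $D\mid S$ is distributed as $\tilde{s}X$ with $X\sim\mathcal{N}(0,1)$ coming from the projection and independent of $S$. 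Marginalizing over $S$ then makes $D$ distributed as the product $\tilde{s}X$, so $|D|$ has density $f_{|\tilde{s}X|}$, exactly the factor appearing in the target formula.

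Third comes the standard offset argument. Conditioning on $|D|=t$ and using that $\tilde{b}$ is uniform on $[0,\tilde{w}]$ and independent of both $S$ and $\tilde{\textbf{\emph{a}}}$, the two shifted projections lie in a common length-$\tilde{w}$ bucket iff $(\tilde{\textbf{\emph{a}}}^{T}\tilde{\textbf{\emph{u}}}+\tilde{b})\bmod\tilde{w}<\tilde{w}-t$, an event of probability $\max(0,1-t/\tilde{w})$. Integrating this conditional probability against $f_{|\tilde{s}X|}$ by the law of total probability, and noting that the factor vanishes for $t>\tilde{w}$, collapses the range of integration to $[0,\tilde{w}]$ and gives $p(s,\sigma)=\int_{0}^{\tilde{w}}f_{|\tilde{s}X|}(t)(1-t/\tilde{w})\,dt$, as claimed.

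The step I expect to require the most care is the second one: unlike E2LSH, where $s$ is a deterministic constant, here $\tilde{s}$ is itself random because of the sampling operator $S(\cdot)$. I must argue cleanly that the projection noise $X$ is independent of the sampling randomness (given $S$ the normalized projection $\tilde{\textbf{\emph{a}}}^{T}(\tilde{\textbf{\emph{v}}}-\tilde{\textbf{\emph{u}}})/\tilde{s}$ is $\mathcal{N}(0,1)$ irrespective of the value of $\tilde{s}$), so that the marginal law of $D$ is exactly the product law $\tilde{s}X$ already characterized, and that this independence, together with that of $\tilde{b}$, lets the conditional offset computation carry over verbatim. Once independence is secured, no new integral is needed and the remainder is identical to the E2LSH calculation behind Eqn.~\eqref{eqn:prob.-of-collision-e2lsh}.
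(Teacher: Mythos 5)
Your proposal is correct and follows essentially the same route as the paper's own proof: identify the projection distance as being distributed as $\tilde{s}X$ via $2$-stability, then apply the standard E2LSH bucket-offset argument with the uniform shift $\tilde{b}$ contributing the factor $(1-t/\tilde{w})$. If anything, your explicit conditioning on the sampling outcome $S$ and the independence argument for $X$, $S$, and $\tilde{b}$ is more careful than the paper's version, which states the same conclusion somewhat informally as a ``product'' of the two probabilities.
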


\begin{proof}
Let $f_{|\tilde{s}X|}(t)$ represent the PDF of the absolute value of $\tilde{s}X$. For given bucket width $\tilde{w}$, the probability $p(|\tilde{s}X| < t)$ for any pair $(\textbf{\emph{v}},\textbf{\emph{u}})$ is computed as $p(|\tilde{s}X| < t) = \int_{0}^{\tilde{w}}f_{|\tilde{s}X|}(t)dt$, where $t\in[0,\tilde{w}]$. Recall that $\tilde{b}$ follows the uniform distribution $U(0,\tilde{w})$, the probability $p(\tilde{b} < \tilde{w} - t)$ is thus $(1-\frac{t}{\tilde{w}})$. This means that after random projection, $(|\tilde{s}X| + \tilde{b})$ is also within the same bucket, and the collision probability $p(s,\sigma)$ is the product of $p(|\tilde{s}X| < t)$ and $p(\tilde{b} < \tilde{w} - t)$. Hence we prove this Theorem.
\end{proof}

\begin{theorem}
\label{theorem:equivalence-of-characteristic-function-in-appendix}
\begin{equation}
\nonumber
\lim_{m \to +\infty} \frac{\varphi_{\tilde{s}X}(x)}{\exp(-\frac{ms^{2}x^{2}}{2n})} = 1
\end{equation}
where $|x|\leq O(m^{-1/2})$ and  $\exp(-\frac{ms^{2}x^{2}}{2n})$ is the characteristic function of $\mathcal{N}(0,\frac{ms^{2}}{n})$.
\end{theorem}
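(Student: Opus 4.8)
The plan is to substitute the explicit parameter values $\tilde{\mu}=ms^{2}/n$ and $\tilde{\sigma}^{2}=m\sigma^{2}$ into the closed form of $\varphi_{\tilde{s}X}(x)$ supplied by Lemma~\ref{lemma:characteristic-func-of-tildesx-in-appendix}, and then to isolate the target Gaussian characteristic function inside it. Observing that $\tfrac{1}{2}\tilde{\mu}x^{2}=\tfrac{ms^{2}x^{2}}{2n}$, the middle exponential factors as $\exp(\tfrac{1}{8}x^{4}\tilde{\sigma}^{2})\exp(-\tfrac{ms^{2}x^{2}}{2n})$, so after dividing by $\exp(-ms^{2}x^{2}/2n)$ the ratio in question becomes the product of exactly three pieces: the normalising constant $\tfrac{1}{2(1-\Phi(-\tilde{\mu}/\tilde{\sigma}))}$, the residual exponential $\exp(\tfrac{1}{8}x^{4}\tilde{\sigma}^{2})$, and the complementary error function $\operatorname{erfc}\!\big(\tfrac{\frac{1}{2}x^{2}\tilde{\sigma}^{2}-\tilde{\mu}}{\sqrt{2}\tilde{\sigma}}\big)$. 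I would then argue that each factor converges separately as $m\to+\infty$.

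Next I would evaluate the three limits. For the normalising constant, note that $\tilde{\mu}/\tilde{\sigma}=\sqrt{m}\,s^{2}/(n\sigma)\to+\infty$, so $\Phi(-\tilde{\mu}/\tilde{\sigma})\to 0$ and the factor tends to $\tfrac{1}{2}$. For the error function, its argument $\tfrac{\frac{1}{2}x^{2}\tilde{\sigma}^{2}-\tilde{\mu}}{\sqrt{2}\tilde{\sigma}}$ has numerator dominated by $-\tilde{\mu}\sim -ms^{2}/n$ against a denominator of order $\sqrt{2m}\,\sigma$, and hence tends to $-\infty$; since $\operatorname{erfc}(t)\to 2$ as $t\to-\infty$, this factor tends to $2$. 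Multiplying the three sublimits yields $\tfrac{1}{2}\cdot 1\cdot 2=1$, which is exactly the claim.

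The crux --- and where the hypothesis $|x|\le O(m^{-1/2})$ does all the work --- is the residual exponential $\exp(\tfrac{1}{8}x^{4}\tilde{\sigma}^{2})$. Because $\tilde{\sigma}^{2}=m\sigma^{2}$ grows with $m$, for a fixed $x$ one would have $x^{4}\tilde{\sigma}^{2}=x^{4}m\sigma^{2}\to+\infty$ and this factor would diverge, so the statement is simply false without restricting $x$. Writing $|x|\le C m^{-1/2}$ gives $x^{4}\tilde{\sigma}^{2}\le C^{4}m^{-2}\cdot m\sigma^{2}=C^{4}\sigma^{2}/m\to 0$, so the factor tends to $1$ precisely on the admissible window; this is also what forces the equivalence to hold only on an interval of width $O(\sqrt{n/(ms^{2})})$, i.e.\ a bounded number of standard deviations of the limiting Gaussian $\mathcal{N}(0,ms^{2}/n)$. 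The remaining routine care is to verify that the convergence is uniform over $|x|\le C m^{-1/2}$, which follows since each of the three estimates above is either monotone or bounded uniformly in such $x$.
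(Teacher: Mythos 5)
Your proposal is correct and follows essentially the same route as the paper's proof: substitute $\tilde{\mu}=ms^{2}/n$, $\tilde{\sigma}^{2}=m\sigma^{2}$, divide by the Gaussian characteristic function, observe that the hypothesis $|x|\leq O(m^{-1/2})$ forces the residual factor $\exp(\tfrac{1}{8}x^{4}\tilde{\sigma}^{2})=\exp(b^{2}mx^{4})\to 1$, and check that the remaining $\operatorname{erfc}$ and normalising terms contribute a limiting factor of $1$. The only cosmetic difference is that you send the normalising constant to $\tfrac{1}{2}$ and the $\operatorname{erfc}$ factor to $2$ separately, whereas the paper treats them as a single ratio $\operatorname{erfc}(b\sqrt{m}x^{2}-a\sqrt{m})/(2-\operatorname{erfc}(a\sqrt{m}))\to 1$ via the identity $2-\operatorname{erfc}(t)=\operatorname{erfc}(-t)$.
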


\begin{proof}
Recall that $\tilde{\mu} = \frac{ms^{2}}{n}$ and $\tilde{\sigma}=m\sigma^{2}$. $\varphi_{\tilde{s}X}(x)$ can be written as follows:
\begin{equation}
\nonumber
\varphi_{\tilde{s}X}(x) =\frac{1}{2(1-\Phi(\frac{-\sqrt{m}s^{2}}{n\sigma}))} \exp(\frac{mx^4\sigma^2}{8} -\frac{ms^{2}x^2}{2n}) \operatorname{erfc}(\frac{\sqrt{m}(nx^{2}\sigma^{2}-2s^{2})}{2\sqrt{2}n\sigma}) \quad (-\infty <x< +\infty)
\end{equation}
Let $a=\frac{s^{2}}{\sqrt{2}n\sigma} > 0$ and $b=\frac{\sigma}{2\sqrt{2}} > 0 \Rightarrow b^{2} = \frac{\sigma^{2}}{8}$. According to the fact $\Phi(x) = \frac{1}{2}\operatorname{erfc}(-\frac{x}{\sqrt{2}})$, $\varphi_{\tilde{s}X}(x)$ is  simplified as:
\begin{equation}
\nonumber
  \varphi_{\tilde{s}X}(x) = \exp(b^{2}mx^{4}-\frac{1}{2}m\mu x^{2}) \cdot \frac{\operatorname{erfc}(b\sqrt{m}x^{2}-a\sqrt{m})}{2-\operatorname{erfc}(a\sqrt{m})}
\end{equation}
Let $g(x) = \frac{\varphi_{\tilde{s}X}(x)}{\varphi_{sX}(x)}$, where $\varphi_{sX}(x) = \exp(-\frac{ms^{2}x^{2}}{2n})$ is the characteristic function of $\mathcal{N}(0,\frac{ms^{2}}{n})$. Then $g(x)$ is denoted as:
\begin{equation}
    \nonumber
    g(x) = \exp(b^{2}mx^{4}) \cdot \frac{\operatorname{erfc}(b\sqrt{m}x^{2}-a\sqrt{m})}{2-\operatorname{erfc}(a\sqrt{m})}
\end{equation}
To prove $\varphi_{\tilde{s}X}(x) = \varphi_{sX}(x)$, we convert to prove whether $g(x) = 1$ as $m\to \infty$. Obviously $x^{2} \leq O(m^{-1})$. Then $\sqrt{m}x^{2} \leq O(m^{-1/2})$ and $mx^{4} \leq O(m^{-1})$. It is easy to derive:
\begin{equation}
\nonumber
\lim_{m \to +\infty} \exp(b^{2} mx^{4} )=1
\end{equation}
It holds for any fixed $b\in \mathbb{R}^{+}$. On the other hand, we have:
\begin{equation}
\nonumber
\operatorname{erfc}(b\sqrt{m}x^{2}-a\sqrt{m})  \sim  \operatorname{erfc}(-a\sqrt{m}), \quad m \to +\infty
\end{equation}
Actually using the fact $b\sqrt{m}x^{2}-a\sqrt{m} \sim -a\sqrt{m}$ as $m\to +\infty$ and $\operatorname{erfc}(-x) = 2-\frac{\exp(-x^{2})}{\sqrt{\pi}x}$ as $x \to +\infty$, we have:
\begin{equation}
\nonumber
 \lim_{m \to +\infty} \frac{\operatorname{erfc}(b\sqrt{m}x^{2} -a\sqrt{m})}{2-\operatorname{erfc}(a\sqrt{m})} = \frac{\lim_{m \to +\infty}  \operatorname{erfc}(b\sqrt{m}x^{2} -a\sqrt{m}) }{\lim_{m \to +\infty} \operatorname{erfc}(-a\sqrt{m})} =1
\end{equation}
Hence we have:
\begin{equation}
    \nonumber
    g(x) = \lim_{m \to +\infty} \exp(b^{2} mx^{4} ) \cdot \lim_{m \to +\infty} \frac{\operatorname{erfc}(b\sqrt{m}x^{2}-a\sqrt{m})}{\operatorname{erfc}(-a\sqrt{m})} = 1.
\end{equation}

\end{proof}



If the characteristic function of a random variable $Z$ exists, it provides a way to compute its various moments. Specifically, the $r$-th moment of $Z$ denoted by $E(Z^{r})$ can be expressed as the $r$-th derivative of the characteristic function evaluated at zero ~\cite{computeMoments}, i.e.,
\begin{equation}
\label{eqn:compute-momnet-with-characteristic-function}
E(Z^{r})=(i)^{-r}\frac{d^{r}}{dt^{r}} \varphi_{Z}(t)\mid _{t=0}
\end{equation}
where $\varphi_{Z}(t)$ denotes the characteristic function of $Z$. If we know the characteristic function, then all of the moments of the random variable $Z$ can be obtained.

\begin{lemma}
\label{lemma:four-moment-of-FastLSH-in-appendix}
\begin{equation}
\nonumber
\begin{cases}
\ \ E(\tilde{s}X) \ \ \ = 0
 \\
E((\tilde{s}X)^{2}) = \frac{ms^{2}}{n}(1+\epsilon)
    \\
E((\tilde{s}X)^{3})  = 0
\\
E((\tilde{s}X)^{4}) = \frac{3m^{2}s^{4}}{n^{2}}(1+\lambda)
\end{cases}
\end{equation}
where $\epsilon=\frac{\tilde{\sigma} \exp(\frac{-\tilde{\mu}^{2}}{2\tilde{\sigma}^{2}})}{\sqrt{2\pi}\tilde{\mu} (1-\Phi(\frac{-\tilde{\mu}}{\tilde{\sigma}}))}$ and $\lambda = \frac{\tilde{\sigma}^{2}}{\tilde{\mu}^{2}} + \epsilon$.
\end{lemma}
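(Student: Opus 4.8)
The plan is to exploit the product structure $\tilde{s}X$, in which $\tilde{s}$ (equivalently $\tilde{s}^{2}$, a truncated normal) and the standard normal $X$ are independent, so that $E((\tilde{s}X)^{r}) = E(\tilde{s}^{r})\,E(X^{r})$ factorizes. Since $X$ is symmetric about the origin, $E(X) = E(X^{3}) = 0$, which immediately forces $E(\tilde{s}X) = E((\tilde{s}X)^{3}) = 0$ and disposes of the two odd moments with no computation. It remains to handle the two even moments, for which $E(X^{2}) = 1$ and $E(X^{4}) = 3$ reduce the task to computing $E(\tilde{s}^{2})$ and $E(\tilde{s}^{4}) = E((\tilde{s}^{2})^{2})$.

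The key observation is that $Y := \tilde{s}^{2}$ follows the singly-truncated normal $\psi(\cdot;\tilde{\mu},\tilde{\sigma}^{2},0,\infty)$ established in Section~\ref{sec:probability-of-collision}, with $\tilde{\mu} = ms^{2}/n$ and $\tilde{\sigma}^{2} = m\sigma^{2}$. Thus $E((\tilde{s}X)^{2}) = E(Y)$ is the mean of this truncated normal and $E((\tilde{s}X)^{4}) = 3E(Y^{2})$ is three times its second moment. I would compute both by the substitution $z = (y - \tilde{\mu})/\tilde{\sigma}$, which maps the integral over $y \in (0,\infty)$ to $z \in (a,\infty)$ with $a = -\tilde{\mu}/\tilde{\sigma}$, and then invoke the two elementary Gaussian tail identities $\int_{a}^{\infty} z\phi(z)\,dz = \phi(a)$ and $\int_{a}^{\infty} z^{2}\phi(z)\,dz = a\phi(a) + (1 - \Phi(a))$, both of which follow from $\phi'(z) = -z\phi(z)$ and one integration by parts.

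Carrying this out gives $E(Y) = \tilde{\mu} + \tilde{\sigma}\,\phi(\tilde{\mu}/\tilde{\sigma})/(1 - \Phi(-\tilde{\mu}/\tilde{\sigma}))$ and, after collecting terms, $E(Y^{2}) = \tilde{\mu}^{2} + \tilde{\sigma}^{2} + \tilde{\mu}\tilde{\sigma}\,\phi(\tilde{\mu}/\tilde{\sigma})/(1 - \Phi(-\tilde{\mu}/\tilde{\sigma}))$. Writing $\phi(\tilde{\mu}/\tilde{\sigma}) = \exp(-\tilde{\mu}^{2}/(2\tilde{\sigma}^{2}))/\sqrt{2\pi}$ and factoring out $\tilde{\mu}$ (respectively $\tilde{\mu}^{2}$) produces exactly $E(Y) = \tilde{\mu}(1+\epsilon)$ and $E(Y^{2}) = \tilde{\mu}^{2}(1 + \tilde{\sigma}^{2}/\tilde{\mu}^{2} + \epsilon) = \tilde{\mu}^{2}(1+\lambda)$ with $\epsilon$ and $\lambda$ as defined in the statement; substituting $\tilde{\mu} = ms^{2}/n$ then yields the claimed $\frac{ms^{2}}{n}(1+\epsilon)$ and $\frac{3m^{2}s^{4}}{n^{2}}(1+\lambda)$.

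An alternative route, which matches the machinery set up just before the lemma, is to apply the moment formula $E(Z^{r}) = (i)^{-r}\varphi_{Z}^{(r)}(0)$ to the closed form of $\varphi_{\tilde{s}X}$ from Lemma~\ref{lemma:characteristic-func-of-tildesx}: since $\varphi_{\tilde{s}X}$ is even in $x$, only the $x^{2}$ and $x^{4}$ Taylor coefficients are needed, again recovering the even moments and (trivially) the vanishing odd ones. I expect the main obstacle to be purely the algebraic bookkeeping in the fourth moment — either the integration by parts for $\int z^{2}\phi$ and the subsequent cancellation that assembles the three surviving terms into $\tilde{\mu}^{2}(1 + \tilde{\sigma}^{2}/\tilde{\mu}^{2} + \epsilon)$, or, along the characteristic-function route, differentiating the $\exp \cdot \operatorname{erfc}$ product twice via $\frac{d}{dt}\operatorname{erfc}(t) = -\frac{2}{\sqrt{\pi}}\exp(-t^{2})$; the truncated-normal route is cleaner and I would present it as the primary argument.
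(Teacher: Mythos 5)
Your proof is correct, but it takes a genuinely different route from the paper's. The paper proceeds via the characteristic-function machinery it sets up in Eqn.~\eqref{eqn:compute-momnet-with-characteristic-function}: it differentiates the closed form of $\varphi_{\tilde{s}X}$ from Lemma~\ref{lemma:characteristic-func-of-tildesx-in-appendix} four times (Eqns.~\eqref{eqn:first-order-dervative-of-characteristic-fuction}--\eqref{eqn:fourth-order-dervative-of-characteristic-fuction}), evaluates at $x=0$, and then simplifies using the identity $\operatorname{erfc}(-\tilde{\mu}/(\sqrt{2}\tilde{\sigma}))/(2(1-\Phi(-\tilde{\mu}/\tilde{\sigma})))=1$ --- this is precisely the ``alternative route'' you mention at the end, and it costs several pages of derivative bookkeeping. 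Your primary argument instead uses the independence of $\tilde{s}$ and $X$ to factorize $E((\tilde{s}X)^{r})=E(\tilde{s}^{r})E(X^{r})$, kills the odd moments by the symmetry of $X$, and reduces the even moments to the first two moments of the truncated normal $Y=\tilde{s}^{2}$, computed from the two standard Gaussian tail identities. I checked your intermediate expressions: $E(Y)=\tilde{\mu}+\tilde{\sigma}\phi(\tilde{\mu}/\tilde{\sigma})/(1-\Phi(-\tilde{\mu}/\tilde{\sigma}))=\tilde{\mu}(1+\epsilon)$ and $E(Y^{2})=\tilde{\mu}^{2}+\tilde{\sigma}^{2}+\tilde{\mu}\tilde{\sigma}\phi(\tilde{\mu}/\tilde{\sigma})/(1-\Phi(-\tilde{\mu}/\tilde{\sigma}))=\tilde{\mu}^{2}(1+\lambda)$ (the $2\tilde{\mu}\tilde{\sigma}E(Z)$ and $\tilde{\sigma}^{2}aE(Z)/\,(1-\Phi(a))$ terms partially cancel because $a=-\tilde{\mu}/\tilde{\sigma}$), and both agree with the paper's Eqn.~\eqref{eqn:first-four-moment} after the same $\operatorname{erfc}$ simplification. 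Your approach is shorter, less error-prone, and makes the structural reason for the vanishing odd moments transparent; the paper's approach has the minor virtue of reusing the already-derived characteristic function and of exercising the same $\exp\cdot\operatorname{erfc}$ calculus that appears elsewhere in the analysis, but buys nothing essential here. The only point worth making explicit in your write-up is that the independence of $\tilde{s}$ (randomness from the sampling operator $S$) and $X$ (randomness from the projection vector $\tilde{\textbf{\emph{a}}}$) holds by construction, which is the same independence the paper invokes when conditioning on $Y$ in Lemma~\ref{lemma:characteristic-func-of-w-in-appendix}.
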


\begin{proof}
  From Lemma~\ref{lemma:characteristic-func-of-tildesx-in-appendix}, we can easily compute the first-order derivative of characteristic function with respect to $x$, which is as follows:
\begin{equation}
\label{eqn:first-order-dervative-of-characteristic-fuction}
\varphi^{\prime}_{\tilde{s}X}(x)= \frac{\exp(\frac{1}{8} x^4 \tilde{\sigma}^2-\frac{1}{2} \tilde{\mu} x^2)}{2(1-\Phi(\frac{-\tilde{\mu}}{\tilde{\sigma}}))} \left[ (\frac{1}{2}x^{3}\tilde{\sigma}^{2}-\tilde{\mu}x)\operatorname{erfc}(\frac{\frac{1}{2} x^2 \tilde{\sigma}^2-\tilde{\mu}}{\sqrt{2} \tilde{\sigma}})-\frac{2\tilde{\sigma}x}{\sqrt{2\pi}}\exp(-(\frac{\frac{1}{2} x^2 \tilde{\sigma}^2-\tilde{\mu}}{\sqrt{2} \tilde{\sigma}})^{2})\right]
\end{equation}
Then the second-order derivative is
\begin{equation}
\label{eqn:second-order-dervative-of-characteristic-fuction}
\begin{aligned}
    \varphi^{\prime \prime}_{\tilde{s}X}(x) = & \frac{\exp(\frac{1}{8} x^4 \tilde{\sigma} ^2-\frac{1}{2} \tilde{\mu} x^2)}{2(1-\Phi(\frac{-\tilde{\mu}}{\tilde{\sigma}}))}
    \left [ ((\frac{1}{2}x^{3} \tilde{\sigma}^{2} -ux)^{2}+\frac{3}{2} x^{2} \tilde{\sigma}^{2}-\tilde{\mu}) \right.
     \operatorname{erfc}(\frac{\frac{1}{2}x^2 \tilde{\sigma}^2-\tilde{\mu}}{\sqrt{2} \tilde{\sigma}}) \\ & \left. -\frac{2}{\sqrt{2\pi}}(\tilde{\sigma}+\frac{3}{2}x^{4} \tilde{\sigma}^{3} -\frac{3}{2}\tilde{\mu}\tilde{\sigma} x^{2})\exp(-(\frac{\frac{1}{2} x^2 \tilde{\sigma}^2-\tilde{\mu}}{\sqrt{2} \tilde{\sigma}})^{2})\right ]
    \end{aligned}
\end{equation}

The third-order derivative is
\begin{equation}
\label{eqn:third-order-dervative-of-characteristic-fuction}
  \begin{aligned}
  \varphi^{\prime \prime \prime } (x) = & \frac{\exp(\frac{1}{8} x^4 \tilde{\sigma} ^2-\frac{1}{2} \tilde{\mu} x^2)}{2(1-\Phi(\frac{-\tilde{\mu}}{\tilde{\sigma}}))} \left[(\frac{1}{8}\tilde{\sigma}^{6} x^{9}- \frac{3}{4} \tilde{\mu}\tilde{\sigma}^{4} x^{7} + (\frac{2}{3}\tilde{\mu}^{2} \tilde{\sigma}^{2} + \frac{9}{4} \tilde{\sigma}^{4} )x^{5} -(6\tilde{\mu}\tilde{\sigma}^{2} +\tilde{\mu}^{3} )x^{3} +3(\tilde{\mu}^{2} +\tilde{\sigma}^{2})x) \right. \\ & \left. \operatorname{erfc}(\frac{\frac{1}{2}\tilde{\sigma}^{2} x^{2} -\tilde{\mu}}{\sqrt{2} \tilde{\sigma}})
  -\frac{2}{\sqrt{2\pi}} (\frac{1}{4} \tilde{\sigma}^{5} x^{7} -\tilde{\mu}\tilde{\sigma}^{3} x^{5} +(\tilde{\mu}^{2} \tilde{\sigma}+\frac{7}{2} \tilde{\sigma}^{3} )x^{3} -3\tilde{\mu}\tilde{\sigma} x)\exp(-(\frac{\frac{1}{2}\tilde{\sigma}^{2} x^{2} -\tilde{\mu}}{\sqrt{2} \tilde{\sigma}})^{2} ) \right]
  \end{aligned}
\end{equation}

The fourth-order derivative is
\begin{equation}
\label{eqn:fourth-order-dervative-of-characteristic-fuction}
    \begin{aligned}
    \varphi^{\prime \prime \prime \prime} (x) = & \frac{\exp(\frac{1}{8} x^4 \tilde{\sigma} ^2-\frac{1}{2} \tilde{\mu} x^2)}{2(1-\Phi(\frac{-\tilde{\mu}}{\tilde{\sigma}}))} \left[(\frac{1}{16}\tilde{\sigma}^{8} x^{12} -\frac{1}{2}\tilde{\mu}\tilde{\sigma}^{6} x^{10} +(\frac{3}{2}\tilde{\mu}^{2} \tilde{\sigma}^{4} +\frac{9}{4}\tilde{\sigma}^{4} )x^{8} -(2\tilde{\mu}^{3} \tilde{\sigma}^{2} +\frac{21}{2}\tilde{\mu}\tilde{\sigma}^{4} )x^{6} \right.\\ & \left.
    + (\tilde{\mu}^{4} +\frac{51}{4}\tilde{\sigma}^{4} +9\tilde{\mu}^{2} \tilde{\sigma}^{2})x^{4}-(6\tilde{\mu}^{3} +21\tilde{\mu}\tilde{\sigma}^{2} )x^{2}
    + 3(\tilde{\mu}^{2} +\tilde{\sigma}^{2}))\operatorname{erfc}(\frac{\frac{1}{2}\tilde{\sigma}^{2} x^{2} -\tilde{\mu}}{\sqrt{2} \tilde{\sigma}}) \right. \\ & \left.
    -\frac{2}{\sqrt{2\pi}} (\frac{1}{8} \tilde{\sigma}^{7} x^{10} -\frac{3}{4} \tilde{\mu}\tilde{\sigma}^{5} x^{8} +(\frac{3}{2} \tilde{\mu}^{2} \tilde{\sigma}^{3}  +4\tilde{\sigma}^{5} )x^{6} -(6\tilde{\mu}^{2} \tilde{\sigma}+\frac{13}{2} \tilde{\sigma}^{3} )x^{2} -3\tilde{\mu}\tilde{\sigma})\exp(-(\frac{\frac{1}{2}\tilde{\sigma}^{2} x^{2} -\tilde{\mu}}{\sqrt{2} \tilde{\sigma}})^{2} )\right]
    \end{aligned}
\end{equation}

Let $E(\tilde{s}X-E(\tilde{s}X))^{i}$ for $i\in\{1,2,3,4\}$ denote the first four central moments. According to Eqn.~\eqref{eqn:compute-momnet-with-characteristic-function}, we know that $E(\tilde{s}X)=\frac{\varphi^{\prime}(0)}{i} = 0$, then it is easy to derive $E(\tilde{s}X-E(\tilde{s}X))^{i}=E((\tilde{s}X)^{i})$. To this end, by Eqn.~\eqref{eqn:first-order-dervative-of-characteristic-fuction} -~\eqref{eqn:fourth-order-dervative-of-characteristic-fuction}, we have the following results:
\begin{equation}
\label{eqn:first-four-moment}
\begin{cases}
\ \ E(\tilde{s}X) \ \ \ = 0
 \\
E((\tilde{s}X)^{2}) = \frac{\tilde{\mu}\operatorname{erfc}(\frac{-\tilde{\mu}}{\sqrt{2}\tilde{\sigma}})+
    \frac{2\tilde{\sigma}}{\sqrt{2\pi}}\exp(\frac{-\tilde{\mu}^{2}}{2\tilde{\sigma}^{2}})}{2(1-\Phi(\frac{-\tilde{\mu}}{\tilde{\sigma}}))}
    \\
E((\tilde{s}X)^{3})  = 0
\\
E((\tilde{s}X)^{4}) = \frac{3(\tilde{\mu}^{2}+\tilde{\sigma}^{2})\operatorname{erfc}(\frac{-\tilde{\mu}}{\sqrt{2}\tilde{\sigma}})+
    \frac{6\tilde{\mu}\tilde{\sigma}}{\sqrt{2\pi}}\exp(\frac{-\tilde{\mu}^{2}}{2\tilde{\sigma}^{2}})}{2(1-\Phi(\frac{-\tilde{\mu}}{\tilde{\sigma}}))}
\end{cases}
\end{equation}
where $E(\tilde{s}X)$ is the expectation; $E((\tilde{s}X)^{2})$ is the variance; $\frac{E((\tilde{s}X)^{3})}{(E((\tilde{s}X)^{2}))^{\frac{3}{2}}}$ is the skewness; $\frac{E((\tilde{s}X)^{4})}{(E((\tilde{s}X)^{2}))^{2}}$ is the kurtosis. Since
\begin{equation}
\nonumber
\frac{\operatorname{erfc}(\frac{-\tilde{\mu}}{\sqrt{2}\tilde{\sigma}})}{2(1-\Phi(\frac{-\tilde{\mu}}{\tilde{\sigma}}))} = \frac{\frac{2}{\sqrt{\pi}} \int_{\frac{-\tilde{\mu}}{\sqrt{2 \tilde{\sigma}}}}^{+\infty} \exp(-t_{1}^2)dt_{1}}{2(1-\frac{1}{\sqrt{2 \pi}} \int_{-\infty}^{\frac{-\tilde{\mu}}{\tilde{\sigma}}} \exp(\frac{-t_{2}^2}{2})dt_{2})} \\
= \frac{\int_{\frac{-\tilde{\mu}}{\sqrt{2} \tilde{\sigma}}}^{+\infty} \exp(-t_{1}^2)dt_{1}}{\sqrt{2} \int_{\frac{-\tilde{\mu}}{\tilde{\sigma}}}^{+\infty} \exp(\frac{-t_{2}^2}{2}) dt_{2}} \\
 = \frac{\int_{\frac{-\tilde{\mu}}{\sqrt{2} \tilde{\sigma}}}^{+\infty} \exp(-t_{1}^2)dt_{1}}{ \int_{\frac{-\tilde{\mu}}{\sqrt{2}\tilde{\sigma}}}^{+\infty} \exp(-t^2) dt} = 1
\end{equation}
Therefore, Eqn.~\eqref{eqn:first-four-moment} can be rewritten as below

\begin{equation}
\label{eqn:first-four-moment-final}
\begin{cases}
\nonumber
\ \ E(\tilde{s}X) \ \ \ = 0
 \\
E((\tilde{s}X)^{2}) = \tilde{\mu}(1+\epsilon)
    \\
E((\tilde{s}X)^{3})  = 0
\\
E((\tilde{s}X)^{4}) = 3\tilde{\mu}^{2}(1+\lambda)
\end{cases}
\end{equation}

where $\epsilon=\frac{\tilde{\sigma} \exp(\frac{-\tilde{\mu}^{2}}{2\tilde{\sigma}^{2}})}{\sqrt{2\pi}\tilde{\mu} (1-\Phi(\frac{-\tilde{\mu}}{\tilde{\sigma}}))}$, $\lambda = \frac{\tilde{\sigma}^{2}}{\tilde{\mu}^{2}} + \epsilon$, $\tilde{\mu}=\frac{ms^{2}}{n}$ and $\tilde{\sigma}^{2}=m\sigma^{2}$. We prove this Lemma.

\end{proof}

 \section{Extension to Maximum Inner Product Search}
\label{sec:extension-in-appendix}
\cite{MIPS, MIPS1} shows that there exists two transformation functions, by which the maximum inner product search (MIPS) problem can be converted into solve the near neighbor search problem. More specifically, the two transformation functions are $P(\textbf{\emph{v}}) = (\sqrt{\kappa^{2}-\|\textbf{\emph{v}}\|_{2}^{2}},\textbf{\emph{v}})$ for data processing and $Q(\textbf{\emph{u}}) = (0,\textbf{\emph{u}})$ for query processing respectively, where  $\kappa = max(\|\textbf{\emph{v}}_{i}\|_{2})$ $(i \in \{1,2,\ldots,N\})$. Then the relationship between maximum inner product and $l_{2}$ norm for any vector pair $(\textbf{\emph{v}}_{i},\textbf{\emph{u}})$ is denoted as $\underset{i}{argmax} (\frac{P(\textbf{\emph{v}}_{i} )Q(\textbf{\emph{u}})}{ \| P(\textbf{\emph{v}}_{i} ) \|_{2}\|Q(\textbf{\emph{u}})\|_{2}}) =\underset{i}{argmin} (\| P(\textbf{\emph{v}}_{i} )-Q(\textbf{\emph{u}}) \|_{2})$ for $\|\textbf{\emph{u}}\|_{2}=1$. To make FastLSH applicable for MIPS, we first apply the sample operator $S(\cdot)$ defined earlier to vector pairs $(\textbf{\emph{v}}_{i},\textbf{\emph{u}})$ for yielding $\tilde{\textbf{\emph{v}}}_{i} = S(\textbf{\emph{v}}_{i})$ and $\tilde{\textbf{\emph{u}}} = S(\textbf{\emph{u}})$, and then obtain $\tilde{P}(\textbf{\emph{v}}) = (\sqrt{\tilde{\kappa}^{2}-\|S(\textbf{\emph{v}})\|_{2}^{2}},S(\textbf{\emph{v}}))$ and $\tilde{Q}(\textbf{\emph{u}}) = (0,S(\textbf{\emph{u}}))$, where $\tilde{\kappa} = max(\|S(\textbf{\emph{v}}_{i})\|_{2})$ is a constant. Then $\underset{i}{argmax} (\frac{\tilde{P}(\textbf{\emph{v}}_{i} )\tilde{Q}(\textbf{\emph{u}})}{\| \tilde{P}(\textbf{\emph{v}}_{i} ) \|_{2}\|\tilde{Q}(\textbf{\emph{u}})\|_{2}}) =\underset{i}{argmin} ( \| \tilde{P}(\textbf{\emph{v}}_{i} )-\tilde{Q}(\textbf{\emph{u}}) \|_{2})$ for $\|S(\textbf{\emph{u}})\|_{2}=1$. Let $\triangle = \tilde{\kappa}^{2}-\|S(\textbf{\emph{v}})\|_{2}^{2}$. After random projection, $\textbf{\emph{a}}^{T}\tilde{P}(\textbf{\emph{v}})-\textbf{\emph{a}}^{T}\tilde{Q}(\textbf{\emph{u}})$ is distributed as $(\sqrt{\tilde{s}^{2}+\triangle})X$.  Since $\tilde{s}^{2} \sim \mathcal{N}(m\mu,m\sigma^{2})$, then $(\tilde{s}^{2} + \triangle) \sim \mathcal{N}(m\mu + \triangle,m\sigma^{2})$. Let $\sqrt{\tilde{s}^{2} + \triangle}$ be the random variable $\mathcal{I}$. Similar to Eqn.~\eqref{eqn:pdf-of-s}, the PDF of $\mathcal{I}$ represented by $f_{\mathcal{I}}$ is yielded as follow:
\begin{equation}
f_{\mathcal{I}}(t) = 2t\psi(t^{2}; m\mu+\triangle,m\sigma^{2},0,+\infty)
\end{equation}
By applying Lemma~\ref{lemma:characteristic-func-of-w-in-appendix}, the characteristic function of $\mathcal{I}X$ is as follows:
\begin{equation}
\label{eqn:characteristic-function-of-extension-in-appendix}
\varphi_{\mathcal{I}X}(x) = \frac{1}{2(1-\Phi(\frac{-ms^{2} -n\bigtriangleup}{\sqrt{m} \sigma\bigtriangleup}))} \exp(\frac{mx^{4} \sigma^{2} }{8} -\frac{(ms^{2} +n\bigtriangleup )x^{2}}{2n}) \operatorname{erfc}(\frac{mnx^{2}\sigma^{2} -2(ms^{2} +n\bigtriangleup)}{2\sqrt{2m}n\sigma}) \\
(-\infty < x < + \infty)
\end{equation}
Then the PDF of $\mathcal{I}X$ is obtained by $\varphi_{\mathcal{I}X}(x)$:
\begin{equation}
\label{eqn:pdf-of-extension-in-appendix}
f_{\mathcal{I}X}(t) = \frac{1}{2\pi}\int_{-\infty}^{\infty}\varphi_{\mathcal{I}X}(x)exp(-itx)dx
\end{equation}
It is easy to derive the collision probability of any pair $(\textbf{\emph{v}},\textbf{\emph{u}})$ by $f_{\mathcal{I}X}(t)$, which is as follows:
 \begin{equation}
 \label{eqn:probability-of-extension-in-appendix}
p(s) = \int_{0}^{\tilde{w}^{\prime}}f_{|\mathcal{I}X|}(t)(1-\frac{t}{\tilde{w}^{\prime}})dx
\end{equation}
where $f_{|\mathcal{I}X|}(t)$ denotes the PDF of the absolute value of $\mathcal{I}X$. $\tilde{w}^{\prime}$ is the bucket width.

\section{Additional Experiments}
\label{sec:experiment-in-appendix}

\begin{table*}[http]
 \caption{$\epsilon$ and $\lambda$ for different $m$}
 \centering
 \label{tab:variation-of-epsilon-lamda}
 \resizebox{\linewidth}{!}{
    \begin{tabular}{|c|c|c|c|c|c|c|c|c|c|}
    \hline
    \multirow{2}{*}{Datasets}  & & \multicolumn{2}{|c|}{$m=15$} & \multicolumn{2}{|c|}{$m=30$} & \multicolumn{2}{|c|}{$m=45/\textbf{60}$} & \multicolumn{2}{|c|}{$m=n$} \\
    \cline{3-10}
    & & $\epsilon$ & $\lambda$ & $\epsilon$ & $\lambda$ & $\epsilon$ & $\lambda$ & $\epsilon$ & $\lambda$ \\
    \hline
    \multirow{3}{*}{Cifar}
             &max  & 0.567575 & 2.527575 & 0.287600 & 1.287600 & \textbf{0.114125} & \textbf{0.618225} & 0.000064 & 0.067664  \\
             \cline{2-10}
             &mean & 0.102716 & 0.575372 & 0.027539 & 0.277239 & \textbf{0.001994} & \textbf{0.120123} & 0 & 0.014617  \\
             \cline{2-10}
             &min  & 0.019616 & 0.239671 & 0.001702 & 0.116014 & \textbf{0.000011} & \textbf{0.055001} & 0 & 0.006691  \\
    \hline
    \multirow{3}{*}{Sun}
             &max  & 0.502360 & 2.218460 & 0.236626 & 1.084867 & \textbf{0.095099} & \textbf{0.546683} & 0.000006 & 0.051535   \\
             \cline{2-10}
             &mean & 0.022783 & 0.255107 & 0.001849 & 0.118130 & \textbf{0.000018} & \textbf{0.058099} & 0 & 0.006724 \\
             \cline{2-10}
             &min  & 0 & 0.040401 & 0 & 0.020736 & \textbf{0} & \textbf{0.010000} & 0 & 0.001156 \\
    \hline
    \multirow{3}{*}{Gist}
             &max  & 0.633640 & 2.853740 & 0.274502 & 1.234902 & \textbf{0.129940} & \textbf{0.677540} & 0 & 0.032400  \\
             \cline{2-10}
             &mean & 0.042122 & 0.340238 & 0.005183 & 0.152639 & \textbf{0.000133} & \textbf{0.074662} & 0 & 0.004624 \\
             \cline{2-10}
             &min  & 0.000064 & 0.067664 & 0 & 0.038025 & \textbf{0} & \textbf{0.016926} & 0 & 0.001089  \\
    \hline
    \multirow{3}{*}{Trevi}
             &max  & 0.013420 & 0.207020 & 0.001105 & 0.106081 & \textbf{0.000003} & \textbf{0.049287} & 0 & 0.000729  \\
            \cline{2-10}
             &mean & 0.000011 & 0.055236 & 0 & 0.029241 & \textbf{0} & \textbf{0.013924} & 0 & 0.000196 \\
             \cline{2-10}
             &min  & 0 & 0.015876 & 0 & 0.008100 & \textbf{0} & \textbf{0.003969} & 0 & 0.000100 \\
    \hline
    \multirow{3}{*}{Audio}
             &max  & 0.268000 & 1.208900 & 0.099004 & 0.561404 & 0.043520 & 0.346020 & 0.000033 & 0.062533  \\
             \cline{2-10}
             &mean & 0.033344 & 0.303017 & 0.003637 & 0.138767 & 0.000480 & 0.091021 & 0 & 0.020967  \\
             \cline{2-10}
             &min  & 0.000022 & 0.059705 & 0 & 0.030765 & 0 & 0.021874 & 0 & 0.005329 \\
    \hline
    \multirow{3}{*}{Notre}
             &max  & 0.341109 & 1.507509 & 0.143598 & 0.728823 & 0.074226 & 0.467355 & 0.004017 & 0.142401  \\
             \cline{2-10}
             &mean & 0.022783 & 0.255107 & 0.001902 & 0.118866 & 0.000172 & 0.077456 & 0 & 0.027225\\
             \cline{2-10}
             &min  & 0.000101 & 0.071925 & 0 & 0.036481 & 0 & 0.023716 & 0 & 0.008281 \\
    \hline
    \multirow{3}{*}{Glove}
             &max  & 0.261530 & 1.183130 & 0.094131 & 0.543031 & 0.040065 & 0.331665 & 0.002362 & 0.124862  \\
             \cline{2-10}
             &mean & 0.003190 & 0.134234 & 0.000047 & 0.065072 & 0.000001 & 0.043265 & 0 & 0.019853 \\
             \cline{2-10}
             &min  & 0.000025 & 0.060541 & 0 & 0.029929 & 0 & 0.020335 & 0 & 0.009409  \\
    \hline
    \multirow{3}{*}{Sift}
             &max  & 0.294194 & 1.314294 & 0.098513 & 0.559554 & 0.057014 & 0.400410 & 0.001296 & 0.109537  \\
             \cline{2-10}
             &mean & 0.054265 & 0.389506 & 0.007185 & 0.167986 & 0.001509 & 0.113065 & 0 & 0.036864 \\
             \cline{2-10}
             &min  & 0.003755 & 0.139916 & 0.000107 & 0.072468 & 0.000001 & 0.045370 & 0 & 0.016129  \\
    \hline
    \multirow{3}{*}{Deep}
             &max  & 0.036744 & 0.317644 & 0.003841 & 0.140741 & 0.000463 & 0.090463 & 0 & 0.014400   \\
             \cline{2-10}
             &mean & 0.003047 & 0.132719 & 0.000044 & 0.064560 & 0.000001 & 0.043306 & 0 & 0.007569 \\
             \cline{2-10}
             &min  & 0.000199 & 0.079160 & 0& 0.039204 & 0 & 0.026374 & 0 & 0.004638  \\
    \hline
    \multirow{3}{*}{Random}
             &max  & 0.077344 & 0.479300 & 0.015195 & 0.216796 & 0.003505 & 0.137461 & 0.000041 & 0.064050  \\
             \cline{2-10}
             &mean & 0.002824 & 0.130273 & 0.000038 & 0.063542 & 0.000001 & 0.042437 & 0 & 0.019044 \\
             \cline{2-10}
             &min  & 0.000024 & 0.060049 & 0 & 0.029929 & 0. & 0.019881 & 0 & 0.008649 \\
    \hline
    \multirow{3}{*}{Ukbench}
             &max  & 0.692955 & 3.157855 & 0.361581 & 1.593681 & 0.217238 & 1.009338 & 0.039727 & 0.330248  \\
             \cline{2-10}
             &mean & 0.139183 & 0.712232 & 0.034502 & 0.308031 & 0.012672 & 0.202768 & 0.000056 & 0.066620  \\
             \cline{2-10}
             &min  & 0.002895 & 0.131059 & 0.000041 & 0.064050 & 0.000001 & 0.042437 & 0 & 0.015376  \\
    \hline
    \multirow{3}{*}{ImageNet}
             &max  & 0.466568 & 2.054168 & 0.217238 & 1.009338 & 0.119323 & 0.637723 & 0.004773 & 0.149173  \\
             \cline{2-10}
             &mean & 0.019125 & 0.237214 & 0.001336 & 0.110236 & 0.000107 & 0.072468 & 0 & 0.022201 \\
             \cline{2-10}
             &min  & 0 & 0.033489 & 0 & 0.016641 & 0 & 0.011025 & 0 & 0.003481 \\
    \hline
    \end{tabular}\label{tab}
   }
\end{table*}

\begin{figure}[t]
	\centering
	\begin{minipage}{0.23\textwidth}
		\centering
		\centerline{\includegraphics[width=\textwidth]{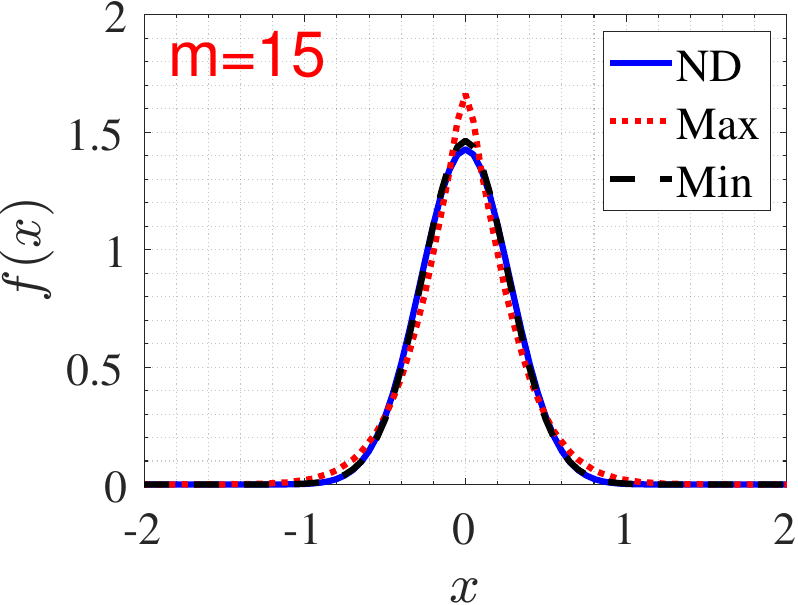}}
        \centerline{\scriptsize (a-1) Audio: $m=15$}
	\end{minipage}
    \hfill
	\begin{minipage}{0.23\textwidth}
        \centering
		\centerline{\includegraphics[width=\textwidth]{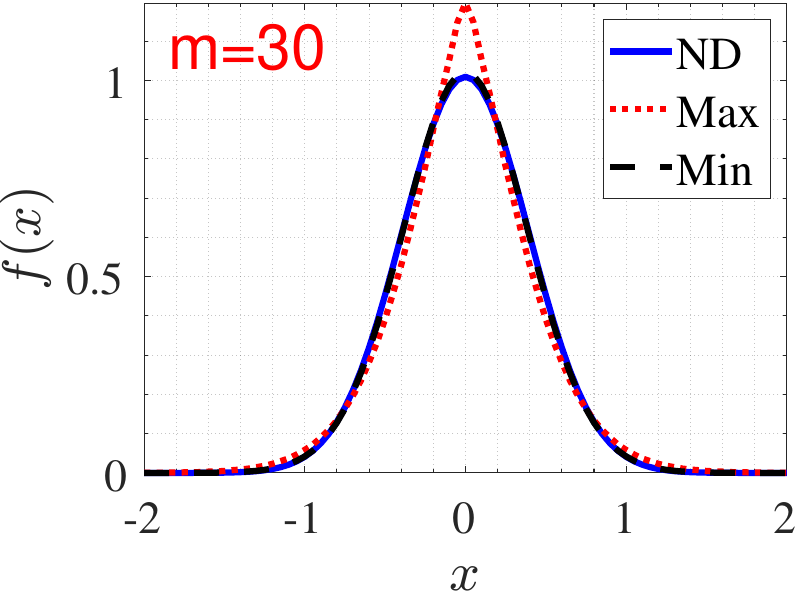}}
        \centerline{\scriptsize (a-2) Audio: $m=30$}
	\end{minipage}
    \hfill
    \begin{minipage}{0.23\textwidth}
        \centering
		\centerline{\includegraphics[width=\textwidth]{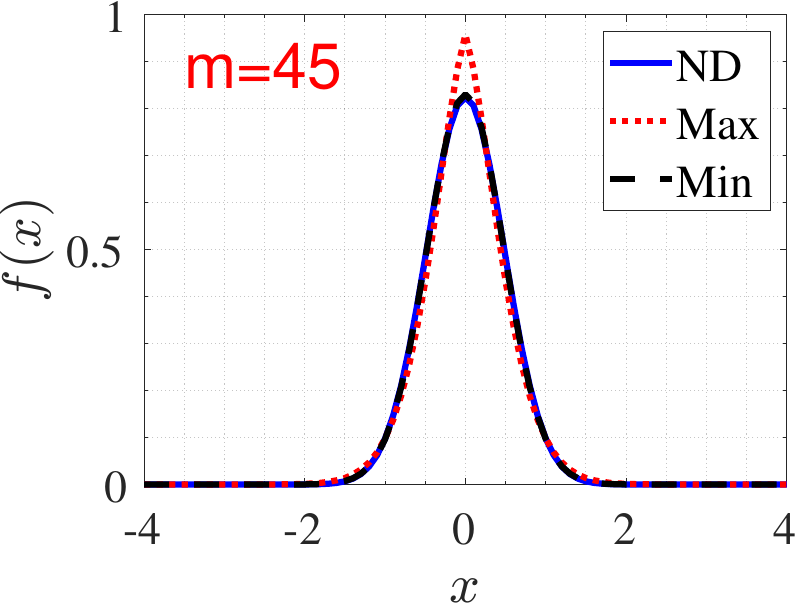}}
        \centerline{\scriptsize (a-3) Audio: $m=45$}
	\end{minipage}
    \hfill
	\begin{minipage}{0.23\textwidth}
        \centering
		\centerline{\includegraphics[width=\textwidth]{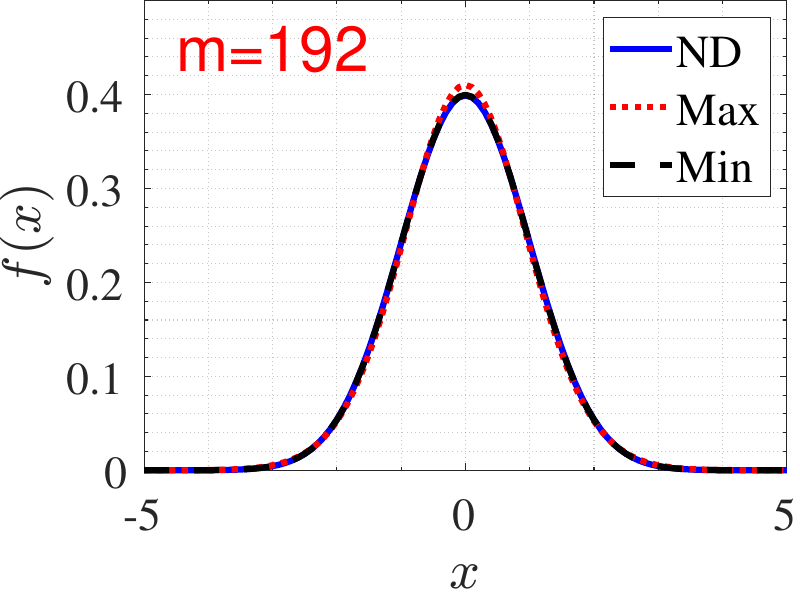}}
        \centerline{\scriptsize (a-4) Audio: $m=n$}
	\end{minipage}
    \hfill
	\begin{minipage}{0.23\textwidth}
        \centering
		\centerline{\includegraphics[width=\textwidth]{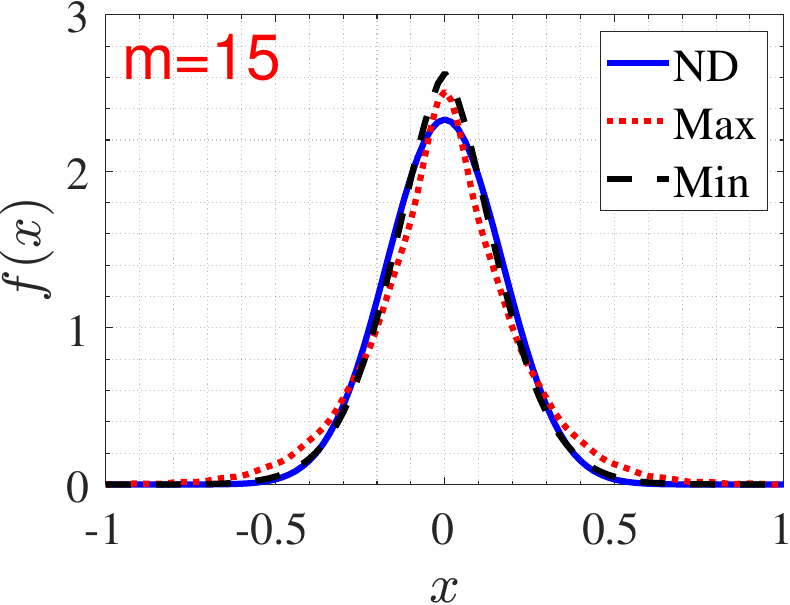}}
        \centerline{\scriptsize (b-1) Cifar: $m=15$}
	\end{minipage}
    \hfill
	\begin{minipage}{0.23\textwidth}
        \centering
		\centerline{\includegraphics[width=\textwidth]{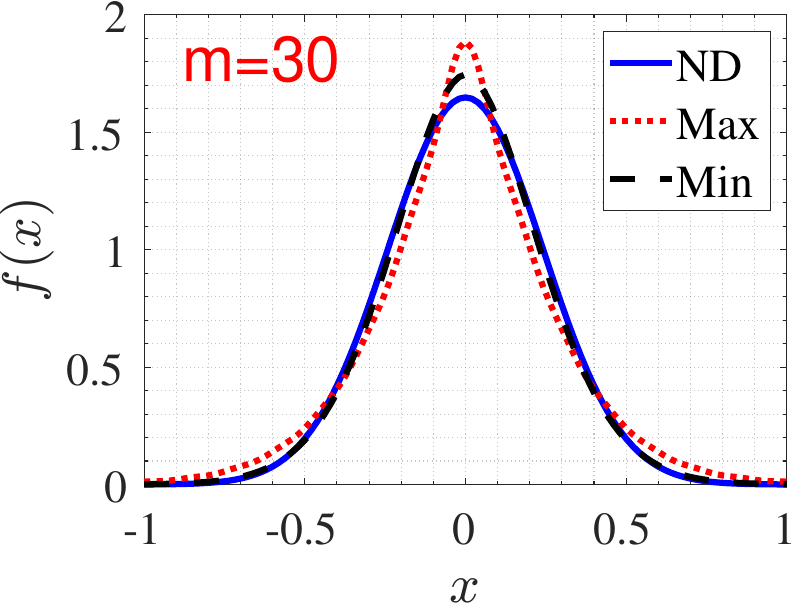}}
        \centerline{\scriptsize (b-2) Cifar: $m=30$}
	\end{minipage}
    \hfill
	\begin{minipage}{0.23\textwidth}
        \centering
		\centerline{\includegraphics[width=\textwidth]{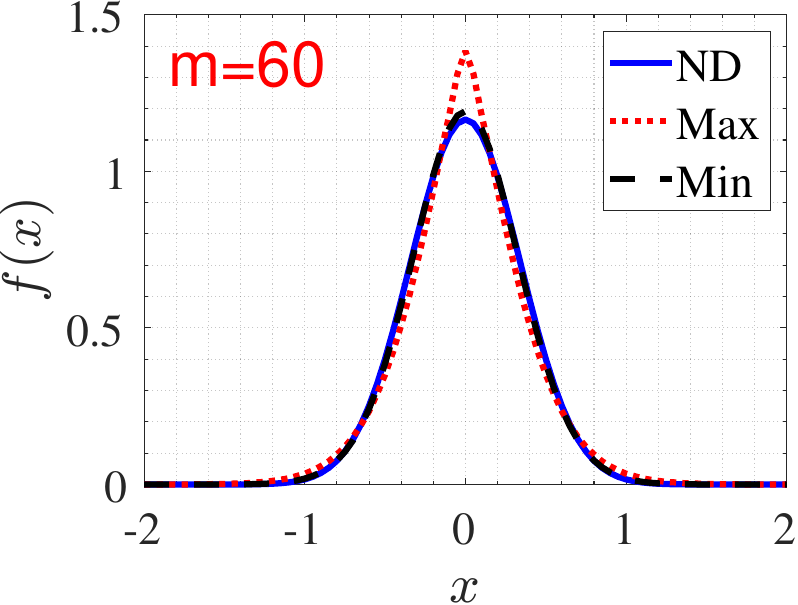}}
        \centerline{\scriptsize (b-3) Cifar: $m=60$}
	\end{minipage}
    \hfill
	\begin{minipage}{0.23\textwidth}
        \centering
		\centerline{\includegraphics[width=\textwidth]{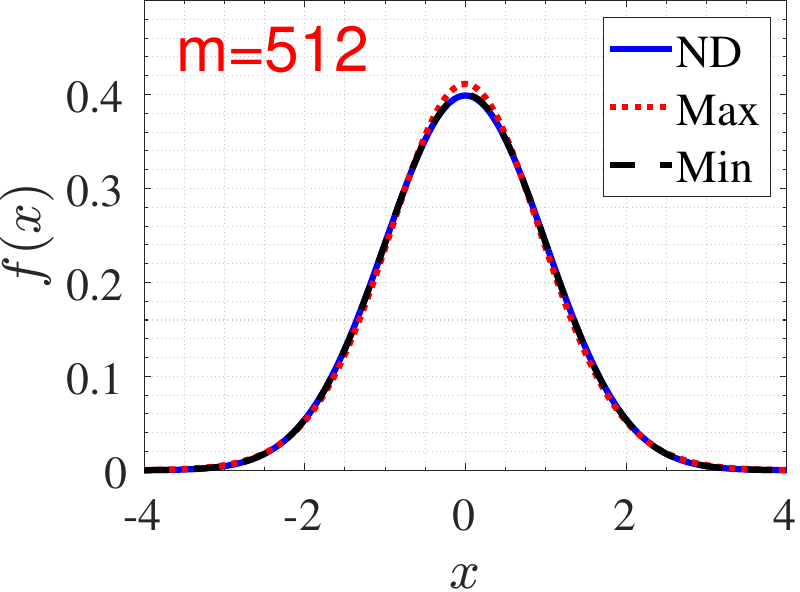}}
        \centerline{\scriptsize (b-4) Cifar: $m=n$}
	\end{minipage}
    \hfill
	\begin{minipage}{0.23\textwidth}
        \centering
		\centerline{\includegraphics[width=\textwidth]{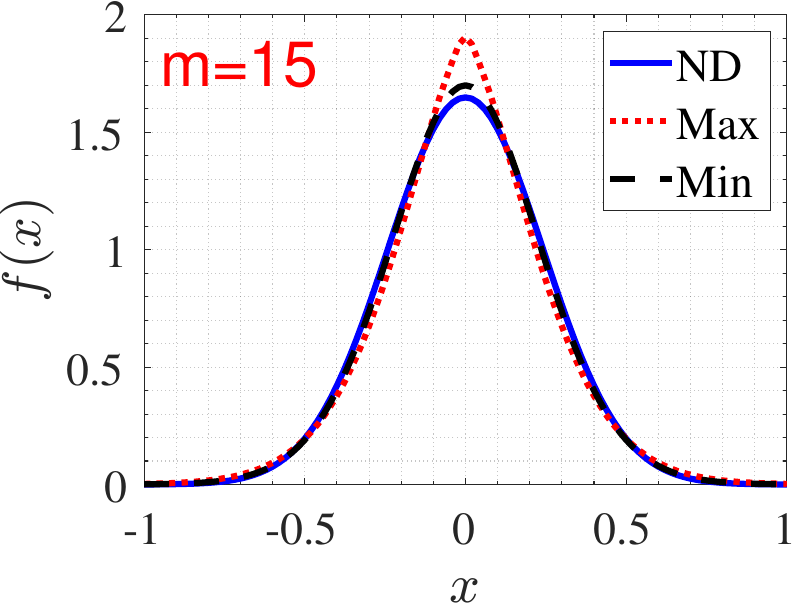}}
        \centerline{\scriptsize (c-1) Deep: $m=15$}
	\end{minipage}
    \hfill
	\begin{minipage}{0.23\textwidth}
        \centering
		\centerline{\includegraphics[width=\textwidth]{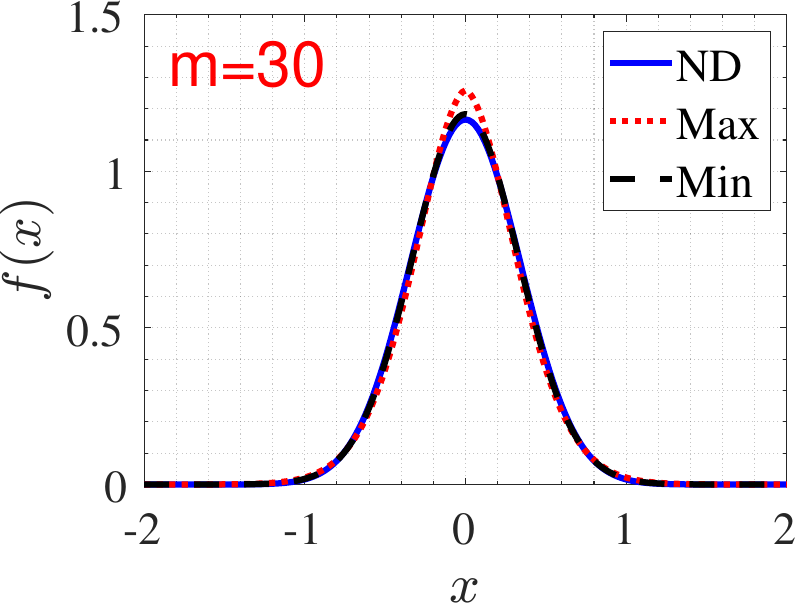}}
        \centerline{\scriptsize (c-2) Deep: $m=30$}
	\end{minipage}
    \hfill
	\begin{minipage}{0.23\textwidth}
        \centering
		\centerline{\includegraphics[width=\textwidth]{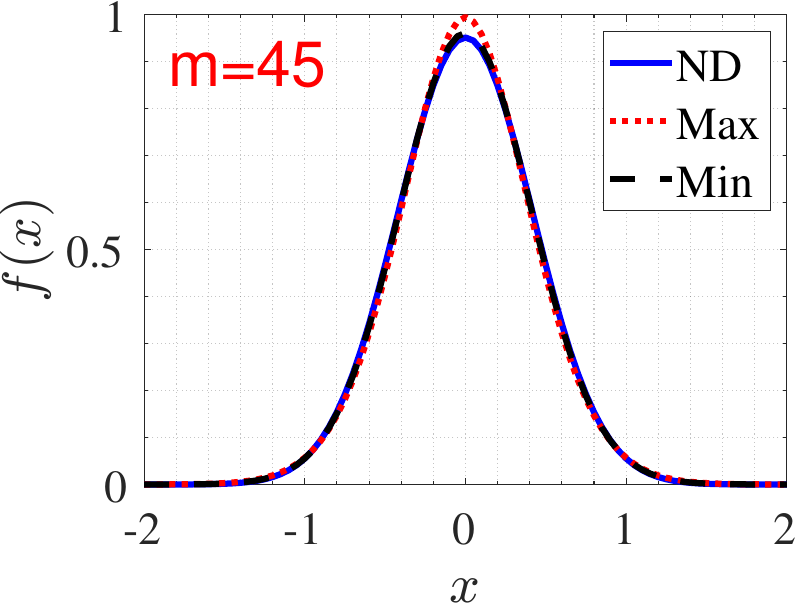}}
        \centerline{\scriptsize (c-3) Deep: $m=45$}
	\end{minipage}
    \hfill
	\begin{minipage}{0.23\textwidth}
        \centering
		\centerline{\includegraphics[width=\textwidth]{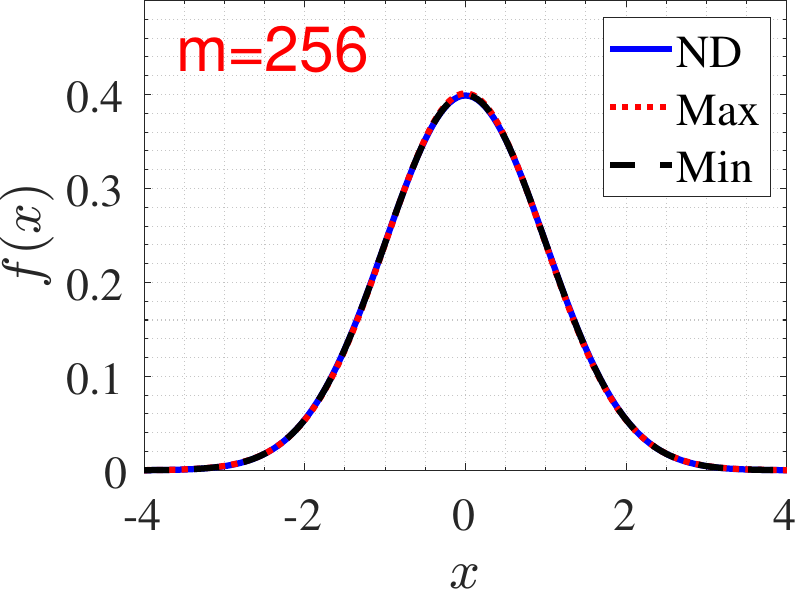}}
        \centerline{\scriptsize (c-4) Deep: $m=n$}
	\end{minipage}
    \hfill
	\begin{minipage}{0.23\textwidth}
        \centering
		\centerline{\includegraphics[width=\textwidth]{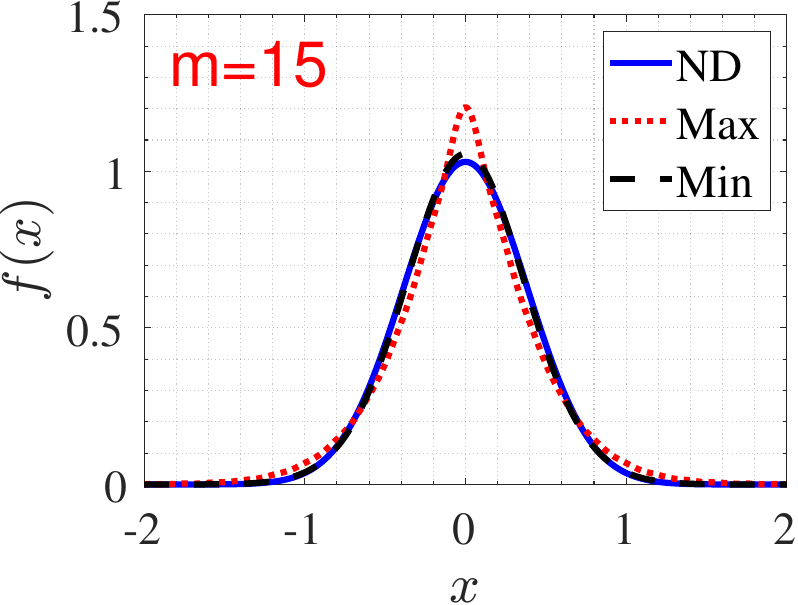}}
        \centerline{\scriptsize (d-1) Glove: $m=15$}
	\end{minipage}
    \hfill
	\begin{minipage}{0.23\textwidth}
        \centering
		\centerline{\includegraphics[width=\textwidth]{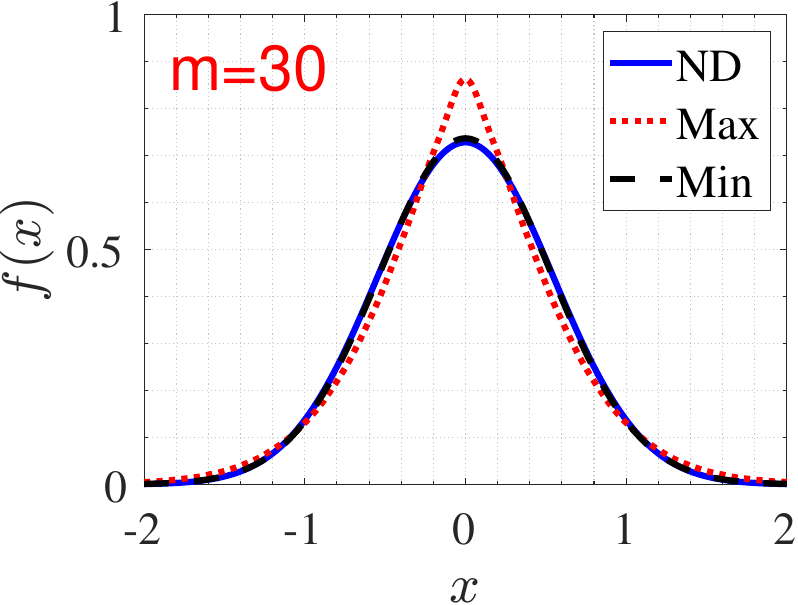}}
        \centerline{\scriptsize (d-2) Glove: $m=30$}
	\end{minipage}
    \hfill
	\begin{minipage}{0.23\textwidth}
        \centering
		\centerline{\includegraphics[width=\textwidth]{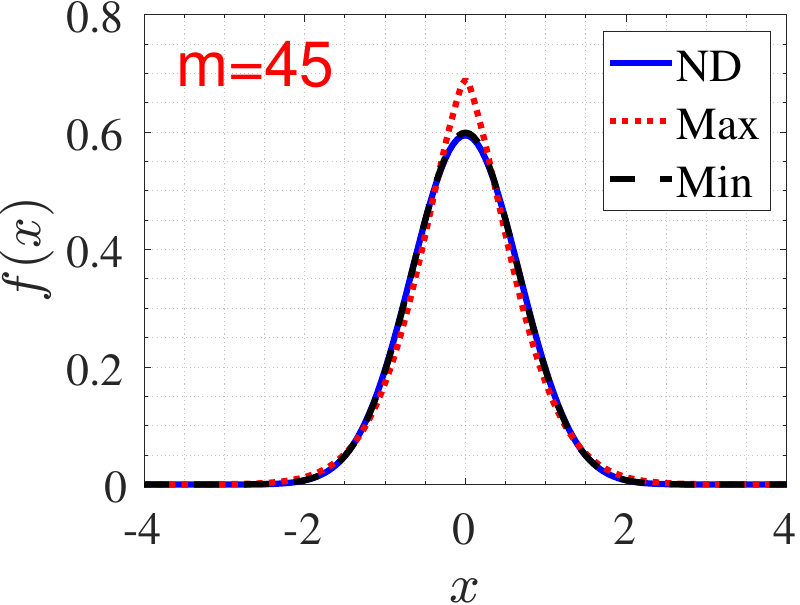}}
        \centerline{\scriptsize (d-3) Glove: $m=45$}
	\end{minipage}
    \hfill
	\begin{minipage}{0.23\textwidth}
        \centering
		\centerline{\includegraphics[width=\textwidth]{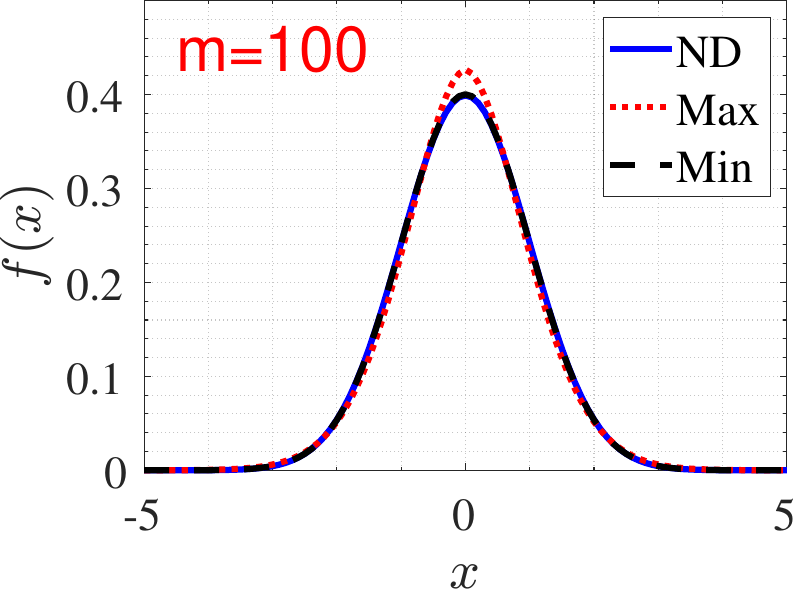}}
        \centerline{\scriptsize (d-4) Glove: $m=n$}
	\end{minipage}
    \hfill
	\begin{minipage}{0.23\textwidth}
        \centering
		\centerline{\includegraphics[width=\textwidth]{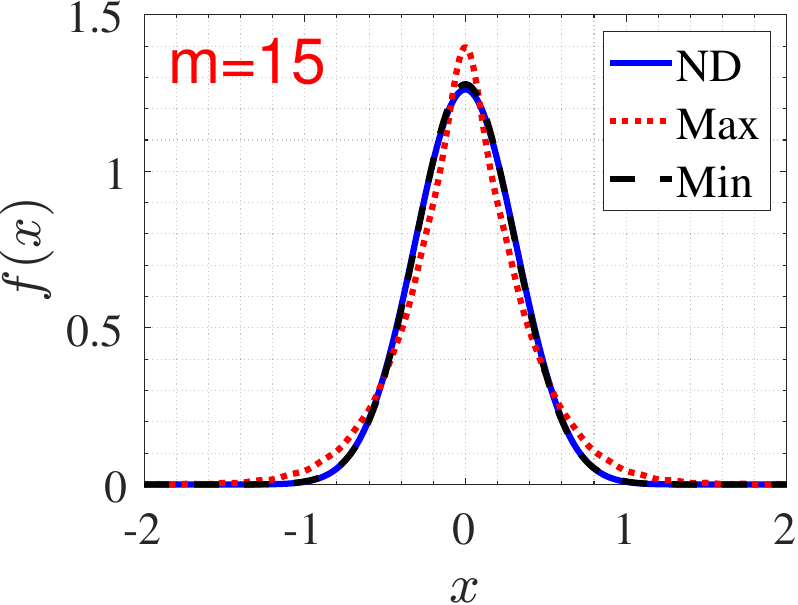}}
        \centerline{\scriptsize (e-1) ImageNet: $m=15$}
	\end{minipage}
    \hfill
	\begin{minipage}{0.23\textwidth}
        \centering
		\centerline{\includegraphics[width=\textwidth]{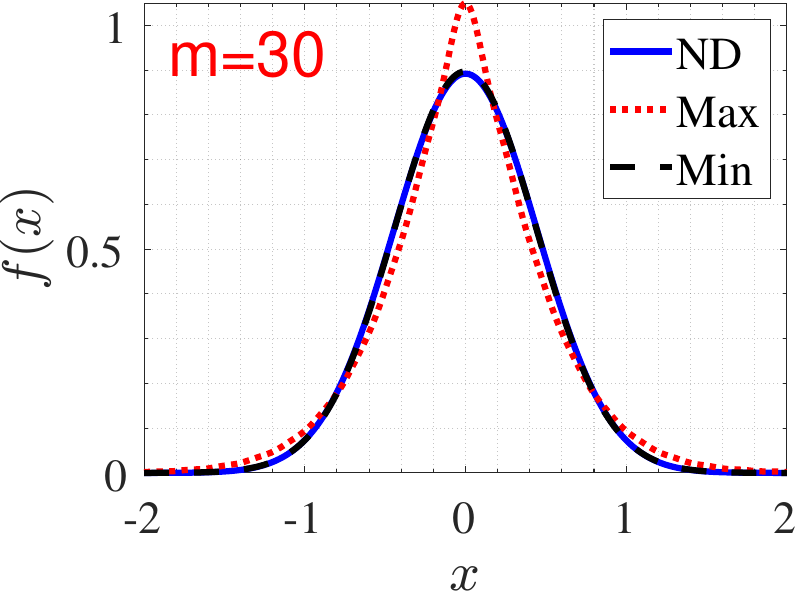}}
        \centerline{\scriptsize (e-2) ImageNet: $m=30$}
	\end{minipage}
    \hfill
	\begin{minipage}{0.23\textwidth}
        \centering
		\centerline{\includegraphics[width=\textwidth]{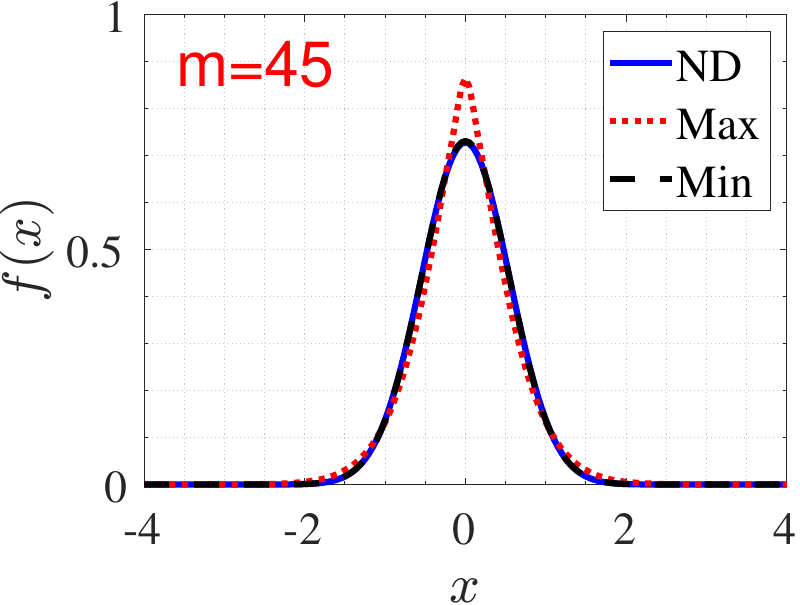}}
        \centerline{\scriptsize (e-3) ImageNet: $m=45$}
	\end{minipage}
    \hfill
	\begin{minipage}{0.23\textwidth}
        \centering
		\centerline{\includegraphics[width=\textwidth]{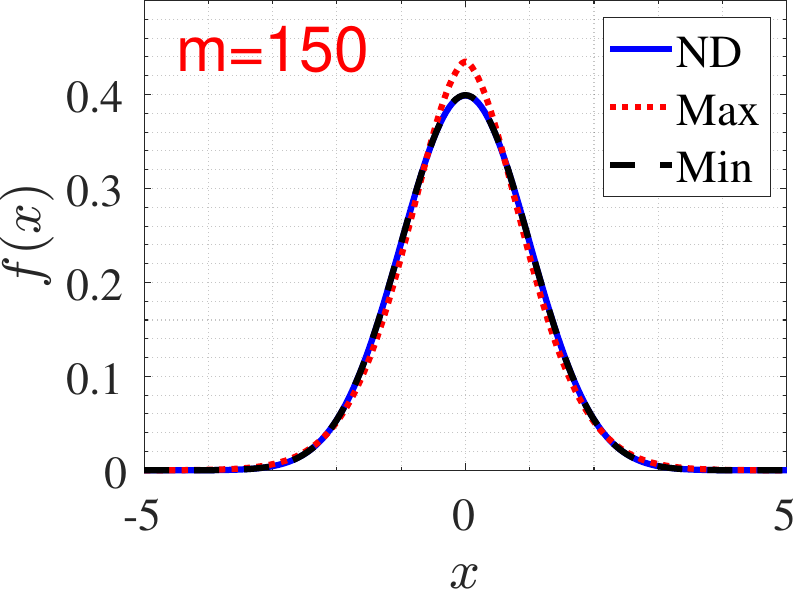}}
        \centerline{\scriptsize (e-4) ImageNet: $m=n$}
	\end{minipage}
    \hfill
	\begin{minipage}{0.23\textwidth}
        \centering
		\centerline{\includegraphics[width=\textwidth]{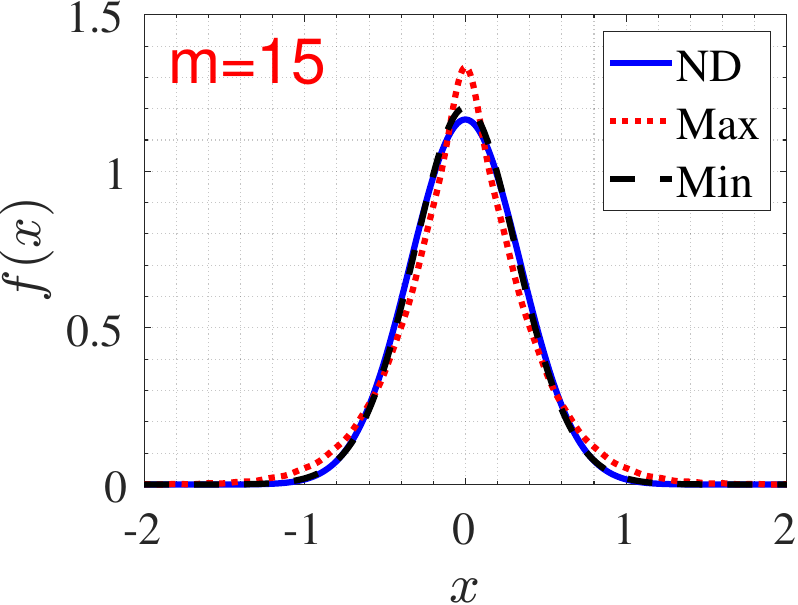}}
        \centerline{\scriptsize (f-1) Notre: $m=15$}
	\end{minipage}
    \hfill
	\begin{minipage}{0.23\textwidth}
        \centering
		\centerline{\includegraphics[width=\textwidth]{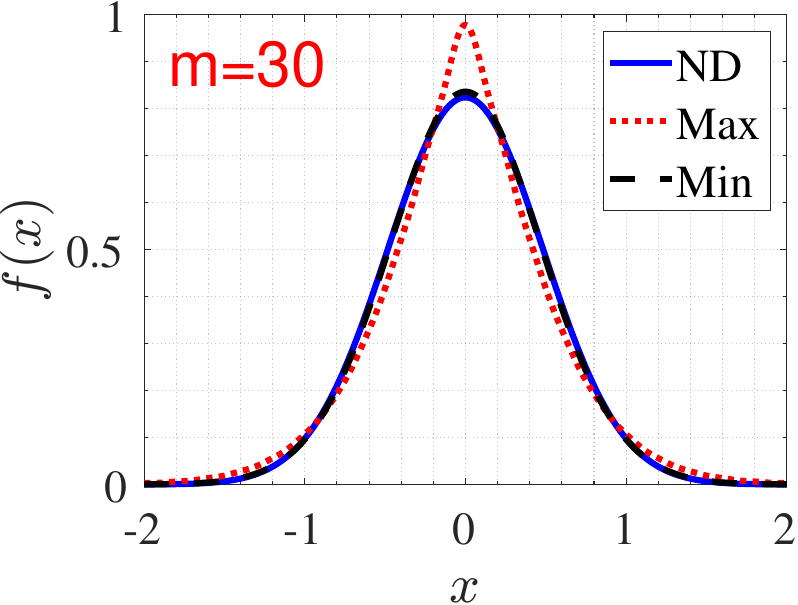}}
        \centerline{\scriptsize (f-2) Notre: $m=30$}
	\end{minipage}
    \hfill
	\begin{minipage}{0.23\textwidth}
        \centering
		\centerline{\includegraphics[width=\textwidth]{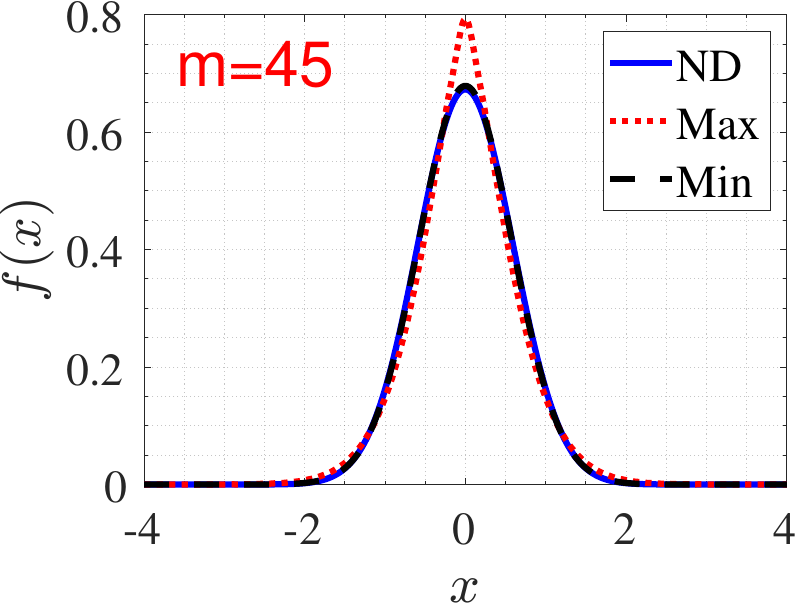}}
        \centerline{\scriptsize (f-3) Notre: $m=45$}
	\end{minipage}
    \hfill
	\begin{minipage}{0.23\textwidth}
        \centering
		\centerline{\includegraphics[width=\textwidth]{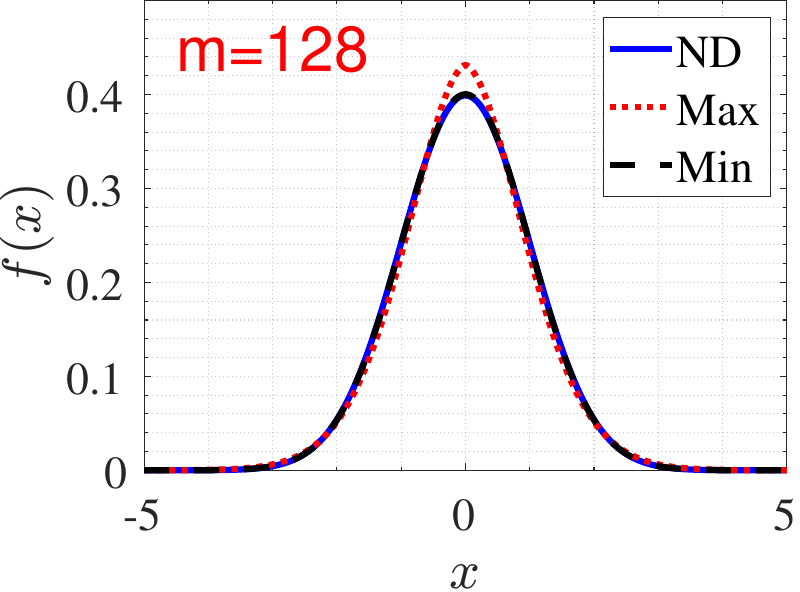}}
        \centerline{\scriptsize (f-4) Notre: $m=n$}
	\end{minipage}
    \addtocounter{figure}{-1}
\end{figure}
\begin{figure}[t]
    \addtocounter{figure}{1}
	\centering
	\begin{minipage}{0.23\textwidth}
        \centering
		\centerline{\includegraphics[width=\textwidth]{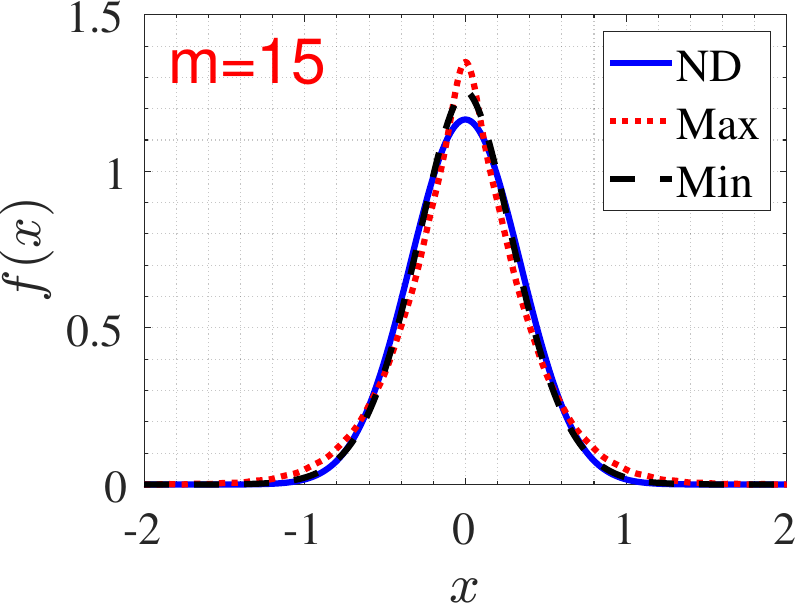}}
        \centerline{\scriptsize (g-1) Sift: $m=15$}
	\end{minipage}
    \hfill
	\begin{minipage}{0.23\textwidth}
        \centering
		\centerline{\includegraphics[width=\textwidth]{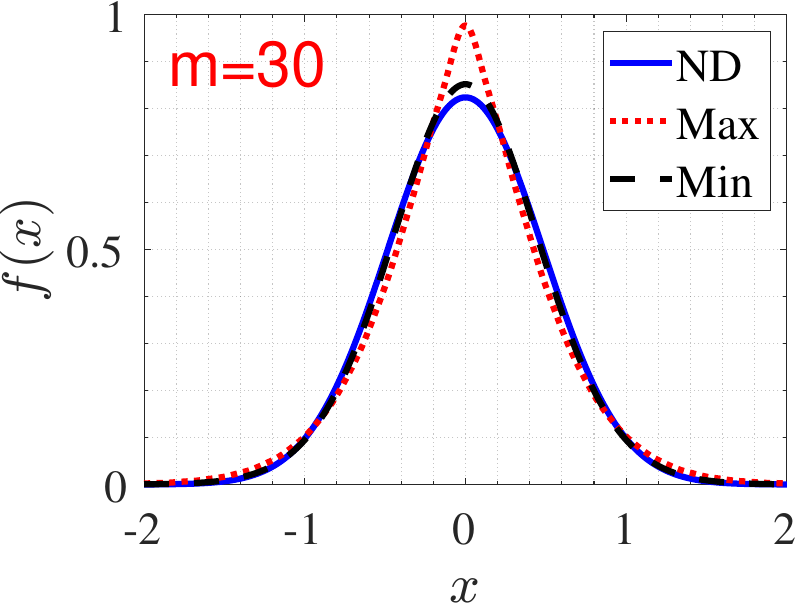}}
        \centerline{\scriptsize (g-2) Sift: $m=30$}
	\end{minipage}
    \hfill
	\begin{minipage}{0.23\textwidth}
        \centering
		\centerline{\includegraphics[width=\textwidth]{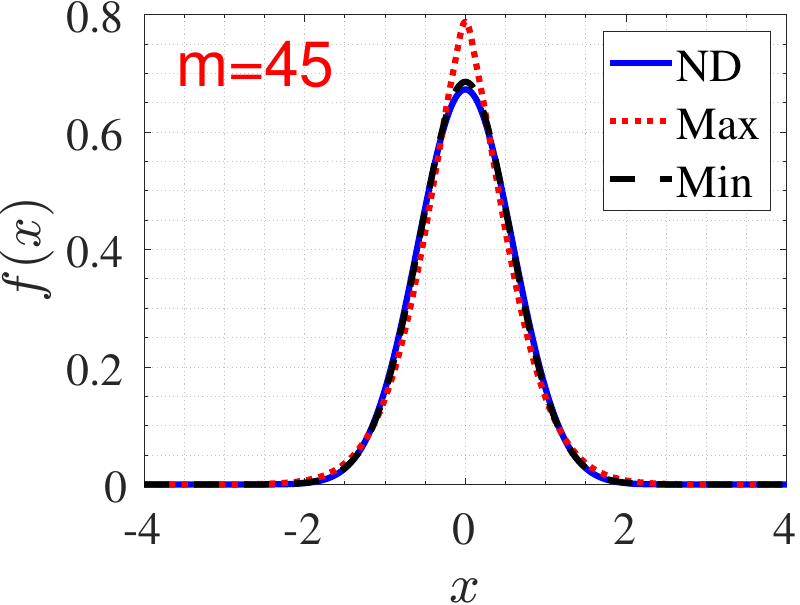}}
        \centerline{\scriptsize (g-3) Sift: $m=45$}
	\end{minipage}
    \hfill
	\begin{minipage}{0.23\textwidth}
        \centering
		\centerline{\includegraphics[width=\textwidth]{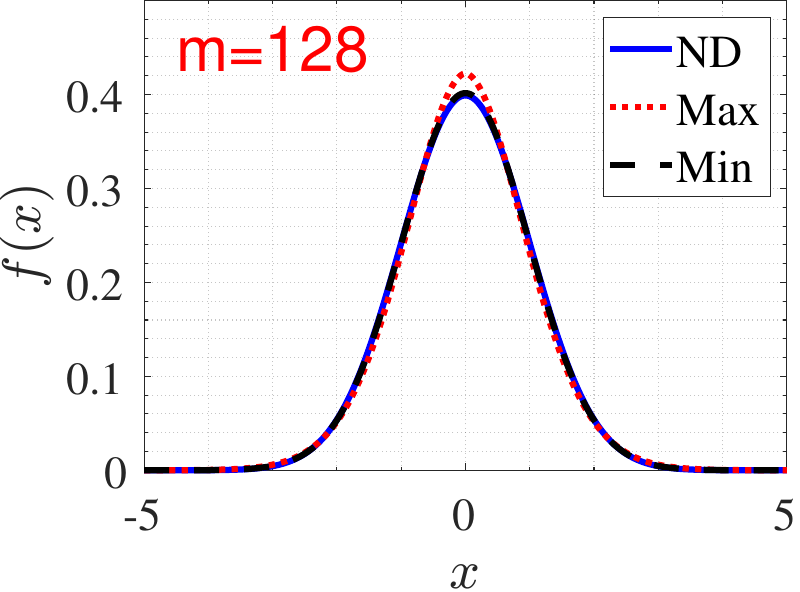}}
        \centerline{\scriptsize (g-4) Sift: $m=n$}
	\end{minipage}
    \hfill
	\begin{minipage}{0.23\textwidth}
        \centering
		\centerline{\includegraphics[width=\textwidth]{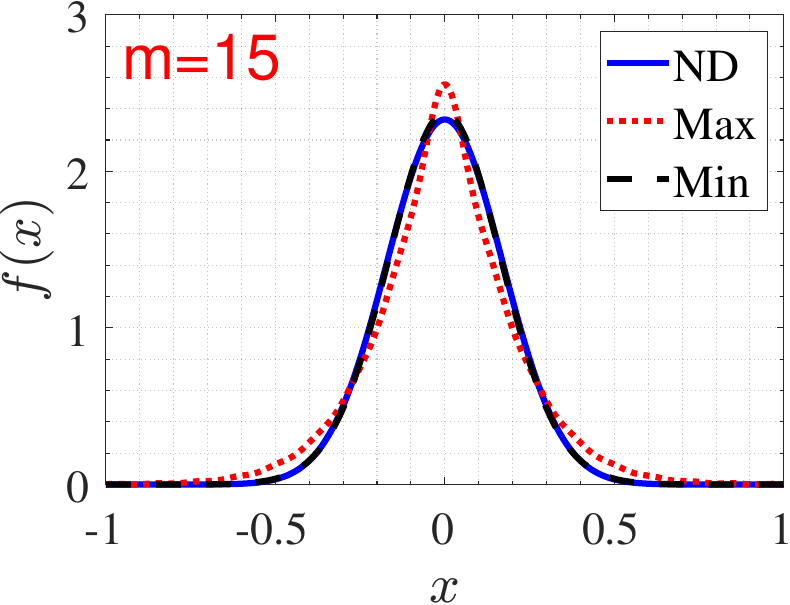}}
        \centerline{\scriptsize (h-1) Sun: $m=15$}
	\end{minipage}
    \hfill
	\begin{minipage}{0.23\textwidth}
        \centering
		\centerline{\includegraphics[width=\textwidth]{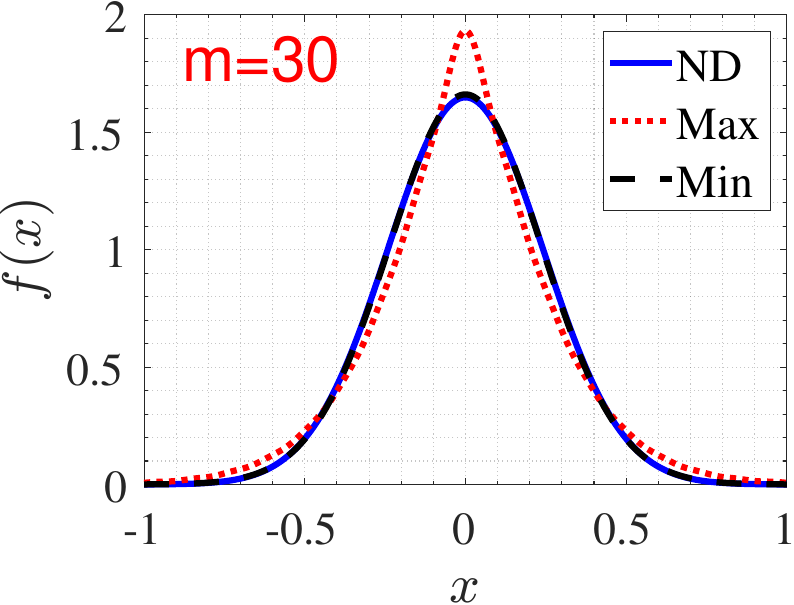}}
        \centerline{\scriptsize (h-2) Sun: $m=30$}
	\end{minipage}
    \hfill
	\begin{minipage}{0.23\textwidth}
        \centering
		\centerline{\includegraphics[width=\textwidth]{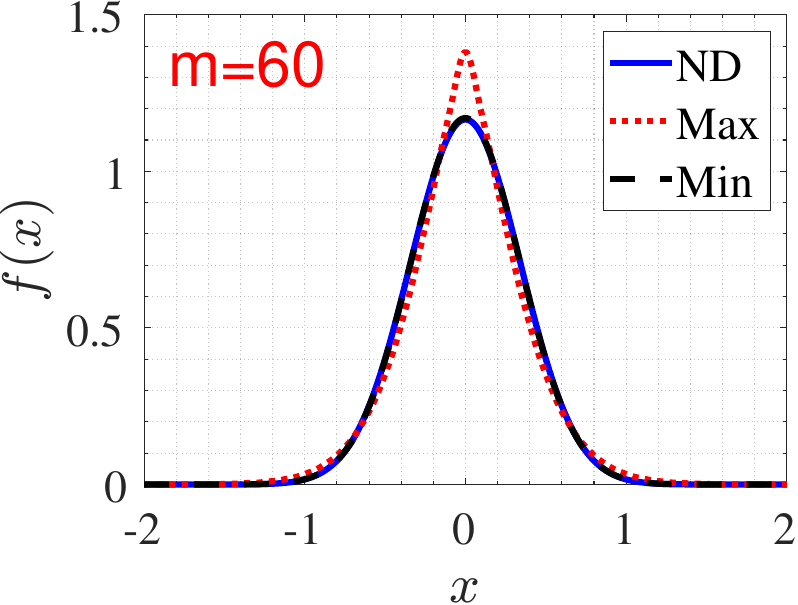}}
        \centerline{\scriptsize (h-3) Sun: $m=60$}
	\end{minipage}
    \hfill
	\begin{minipage}{0.23\textwidth}
        \centering
		\centerline{\includegraphics[width=\textwidth]{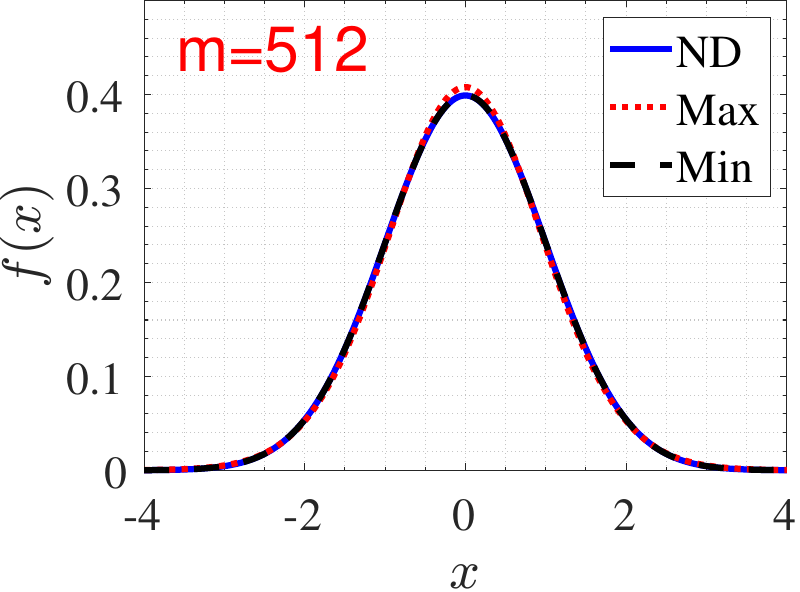}}
        \centerline{\scriptsize (h-4) Sun: $m=n$}
	\end{minipage}
    \hfill
	\begin{minipage}{0.23\textwidth}
        \centering
		\centerline{\includegraphics[width=\textwidth]{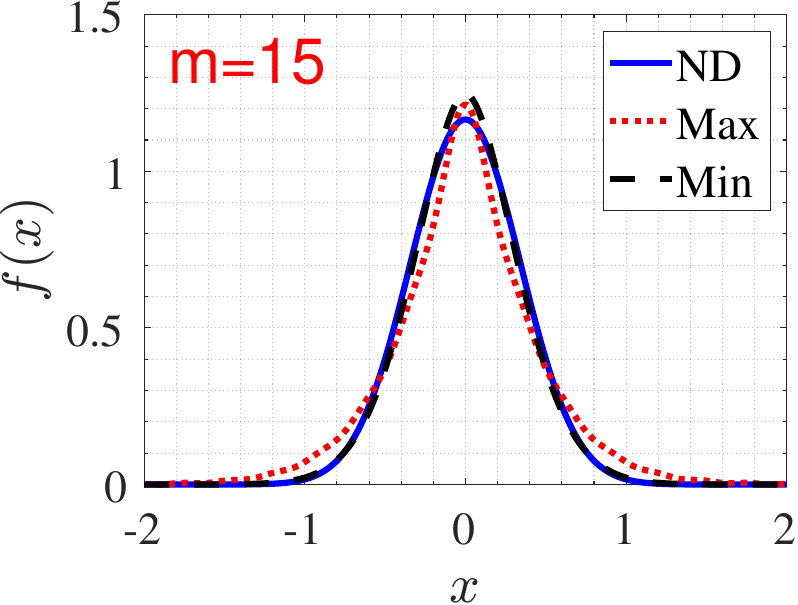}}
        \centerline{\scriptsize (i-1) Ukbench: $m=15$}
	\end{minipage}
    \hfill
	\begin{minipage}{0.23\textwidth}
        \centering
		\centerline{\includegraphics[width=\textwidth]{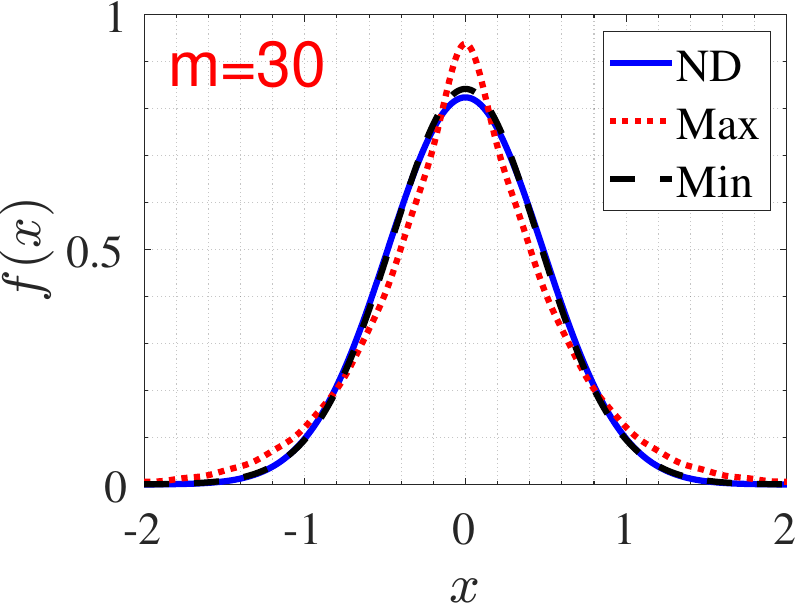}}
        \centerline{\scriptsize (i-2) Ukbench: $m=30$}
	\end{minipage}
    \hfill
	\begin{minipage}{0.23\textwidth}
        \centering
		\centerline{\includegraphics[width=\textwidth]{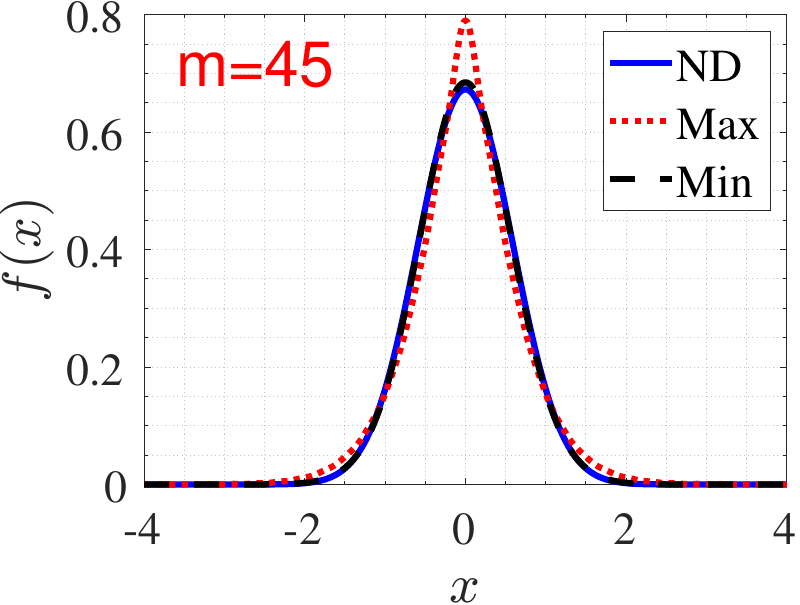}}
        \centerline{\scriptsize (i-3) Ukbench: $m=45$}
	\end{minipage}
    \hfill
	\begin{minipage}{0.23\textwidth}
        \centering
		\centerline{\includegraphics[width=\textwidth]{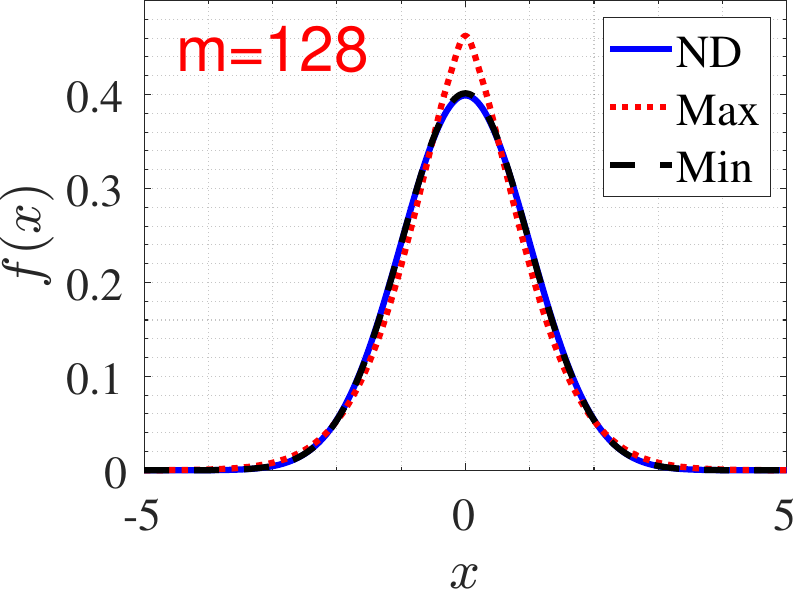}}
        \centerline{\scriptsize (i-4) Ukbench: $m=n$}
	\end{minipage}
    \hfill
    \begin{minipage}{0.23\textwidth}
		\centering
		\centerline{\includegraphics[width=\textwidth]{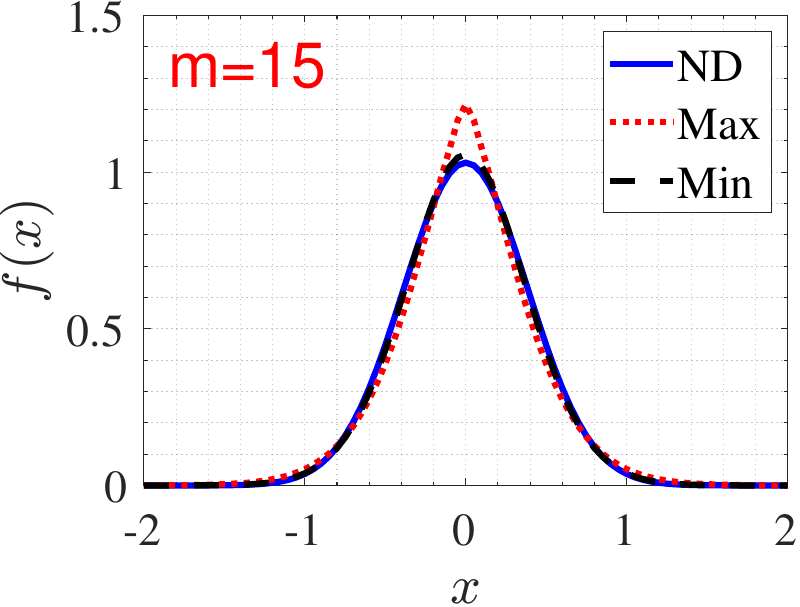}}
        \centerline{\scriptsize (j-1) Random: $m=15$}
	\end{minipage}
    \hfill
	\begin{minipage}{0.23\textwidth}
        \centering
		\centerline{\includegraphics[width=\textwidth]{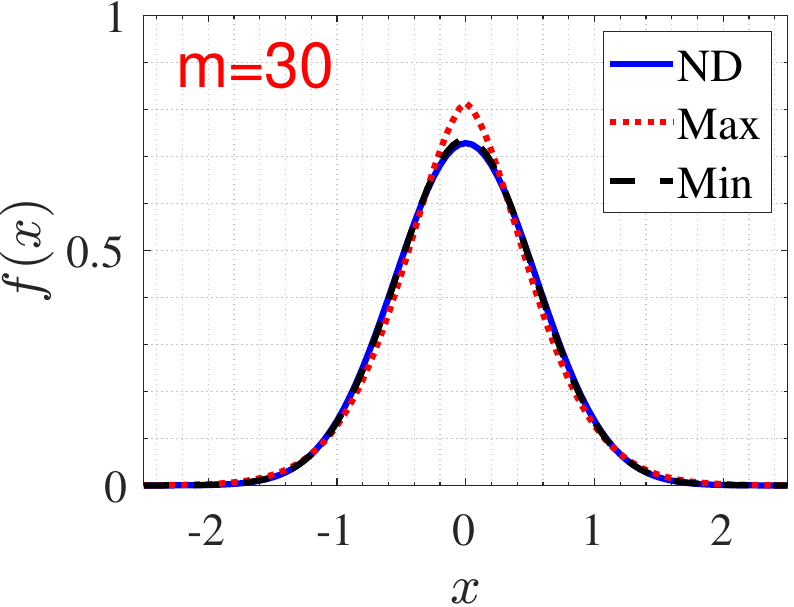}}
        \centerline{\scriptsize (j-2) Random: $m=30$}
	\end{minipage}
    \hfill
    \begin{minipage}{0.23\textwidth}
        \centering
		\centerline{\includegraphics[width=\textwidth]{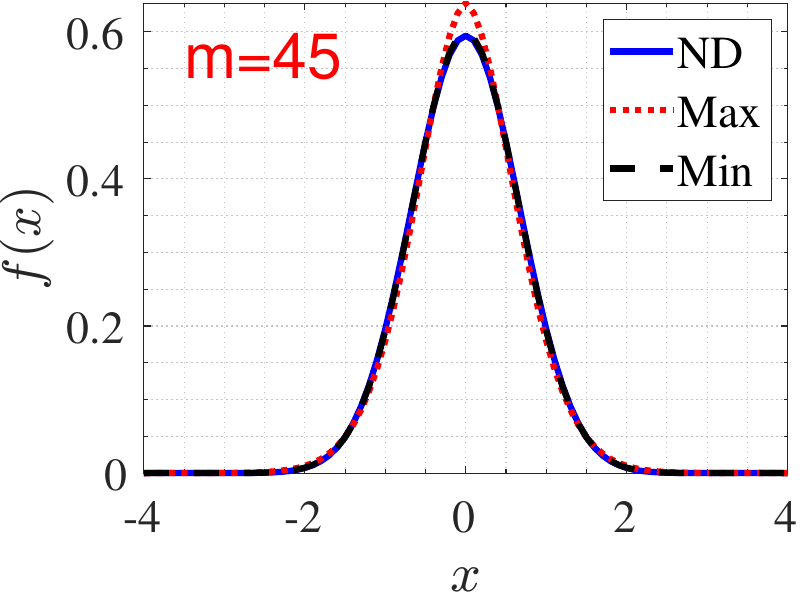}}
        \centerline{\scriptsize (j-3) Random: $m=45$}
	\end{minipage}
     \hfill
    \begin{minipage}{0.23\textwidth}
        \centering
		\centerline{\includegraphics[width=\textwidth]{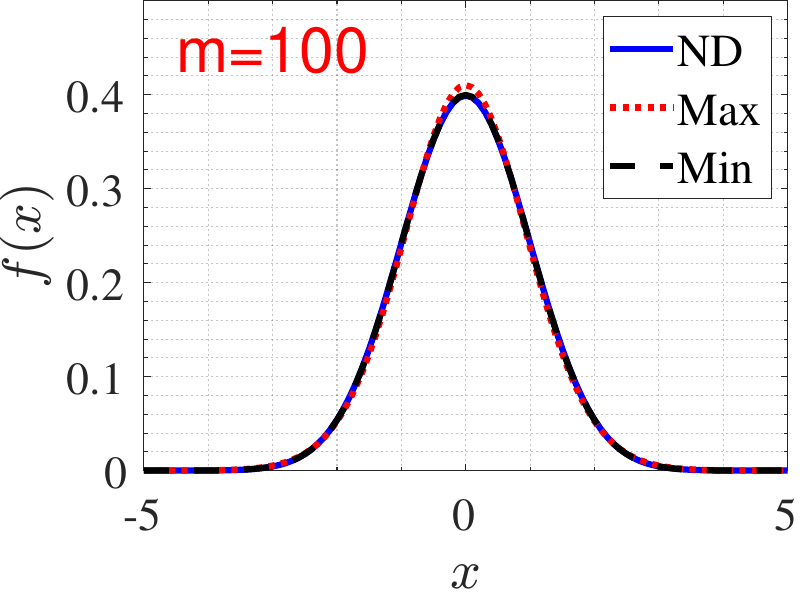}}
        \centerline{\scriptsize (j-4) Random: $m=n$}
	\end{minipage}
    \caption{Comparison of probability density curves of $\mathcal{N}(0,\frac{ms^{2}}{n})$ (ND) and $\tilde{s}X$ under different $m$.}
    \label{fig:pdf-curve-of-all-dataset}
\end{figure}

\begin{figure}[t]
	\centering
	\begin{minipage}{0.3\textwidth}
        \centering
		\centerline{\includegraphics[width=\textwidth]{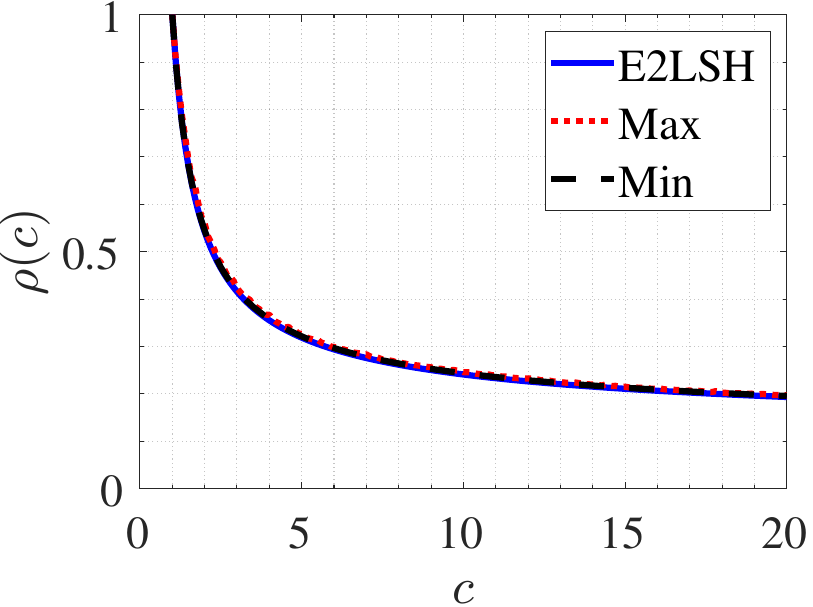}}
        \centerline{\scriptsize (a-1) Random: $\tilde{w}$ = 0.7 and $w$ = 1.5}
	\end{minipage}
    \hfill
	\begin{minipage}{0.3\textwidth}
        \centering
		\centerline{\includegraphics[width=\textwidth]{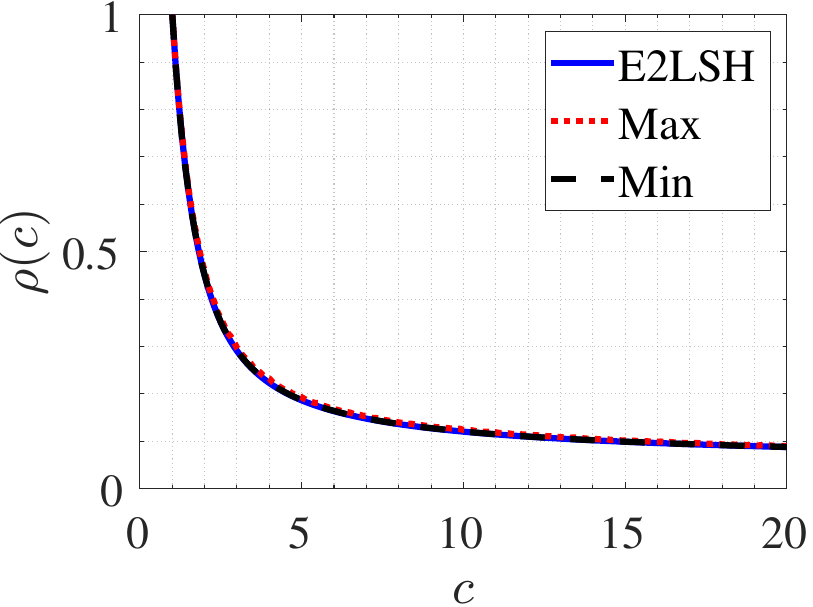}}
        \centerline{\scriptsize (a-2) Random: $\tilde{w}$ = 2 and $w$ = 4}
	\end{minipage}
    \hfill
	\begin{minipage}{0.3\textwidth}
        \centering
		\centerline{\includegraphics[width=\textwidth]{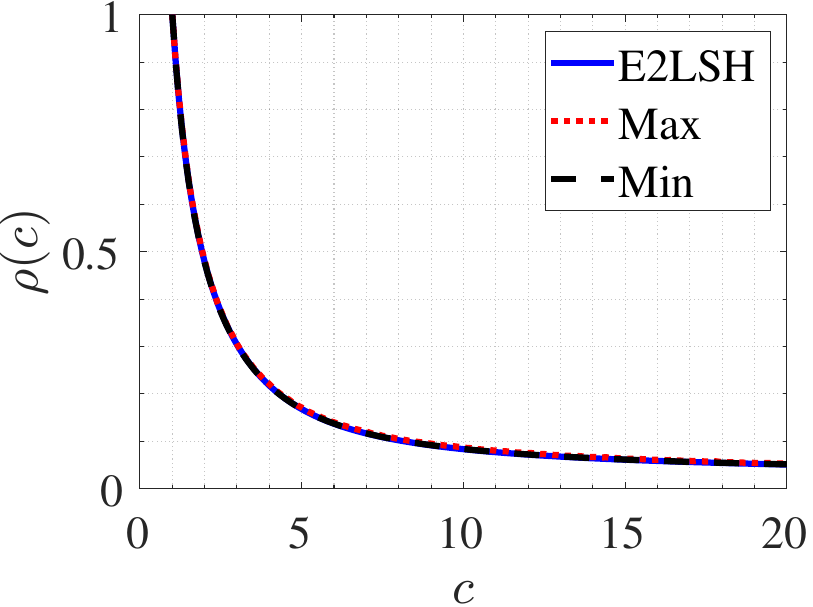}}
        \centerline{\scriptsize (a-3) Random: $\tilde{w}$ = 4.95 and $w$ = 10}
	\end{minipage}
    \hfill
	\begin{minipage}{0.3\textwidth}
        \centering
		\centerline{\includegraphics[width=\textwidth]{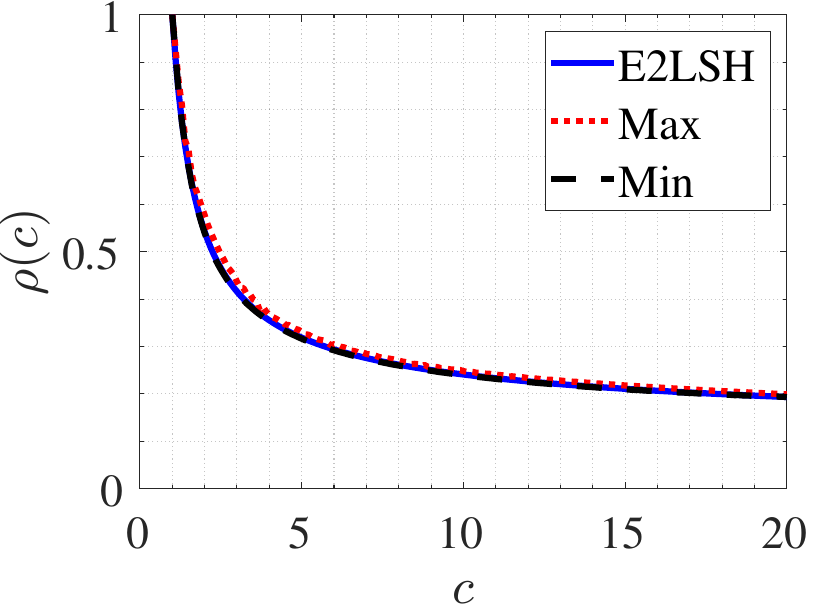}}
        \centerline{\scriptsize (b-1) Audio: $\tilde{w}$ = 0.59 and $w$ = 1.5}
	\end{minipage}
    \hfill
	\begin{minipage}{0.3\textwidth}
        \centering
		\centerline{\includegraphics[width=\textwidth]{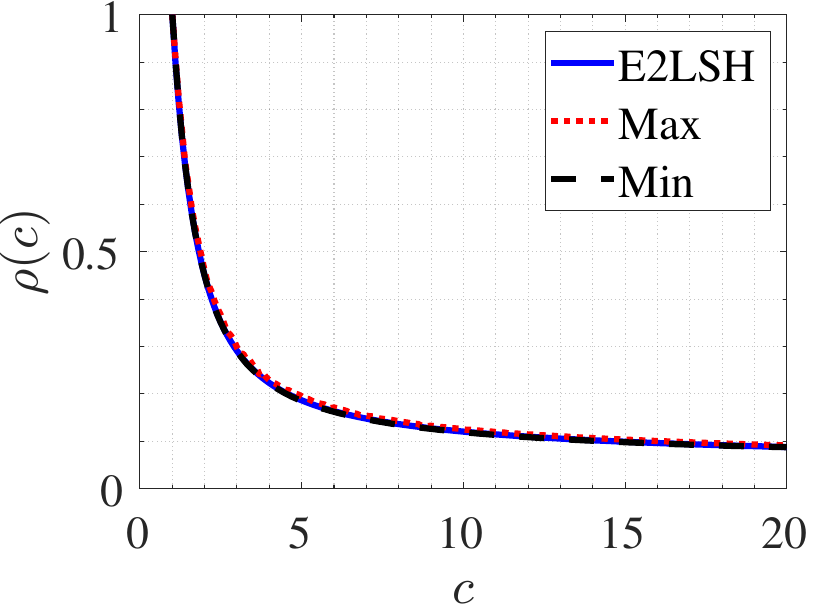}}
        \centerline{\scriptsize (b-2) Audio: $\tilde{w}$ = 1.6 and $w$ = 4}
	\end{minipage}
    \hfill
	\begin{minipage}{0.3\textwidth}
        \centering
		\centerline{\includegraphics[width=\textwidth]{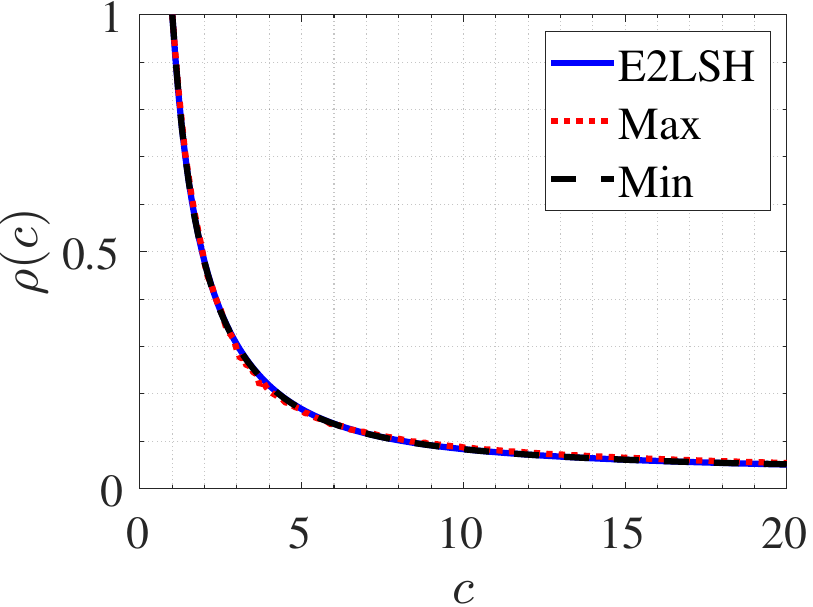}}
        \centerline{\scriptsize (b-3) Audio: $\tilde{w}$ = 3.95 and $w$ = 10}
	\end{minipage}
    \hfill
	\begin{minipage}{0.3\textwidth}
        \centering
		\centerline{\includegraphics[width=\textwidth]{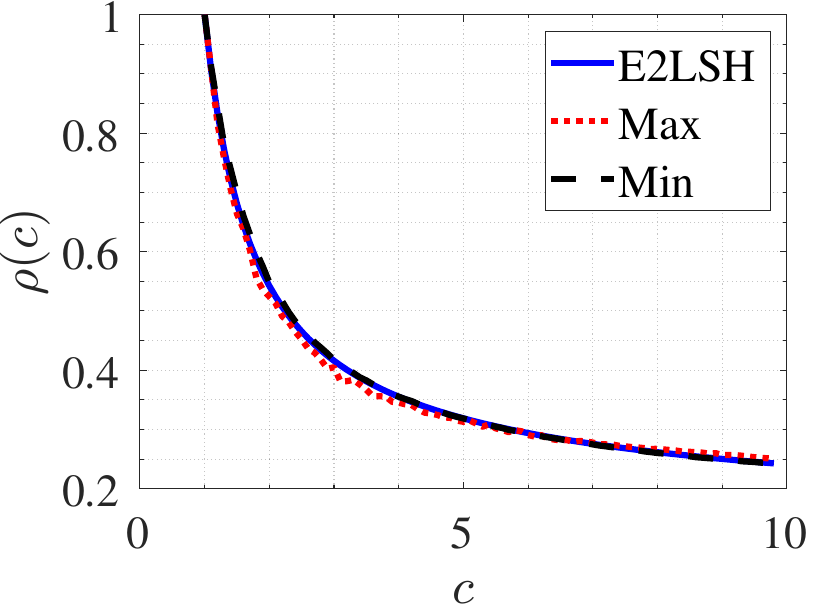}}
        \centerline{\scriptsize (c-1) Cifar: $\tilde{w}$ = 0.345 and $w$ = 1.5}
	\end{minipage}
    \hfill
	\begin{minipage}{0.3\textwidth}
        \centering
		\centerline{\includegraphics[width=\textwidth]{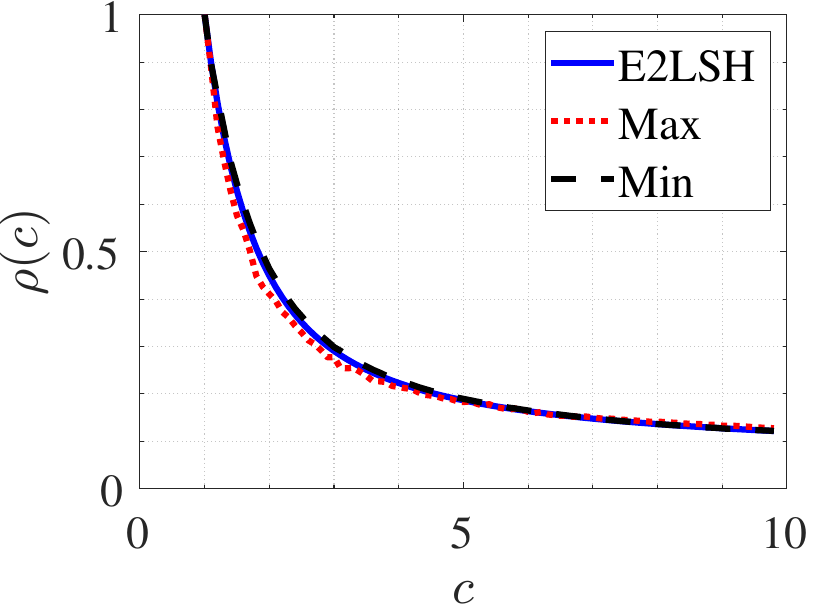}}
        \centerline{\scriptsize (c-2) Cifar: $\tilde{w}$ = 1 and $w$ = 4}
	\end{minipage}
    \hfill
	\begin{minipage}{0.3\textwidth}
        \centering
		\centerline{\includegraphics[width=\textwidth]{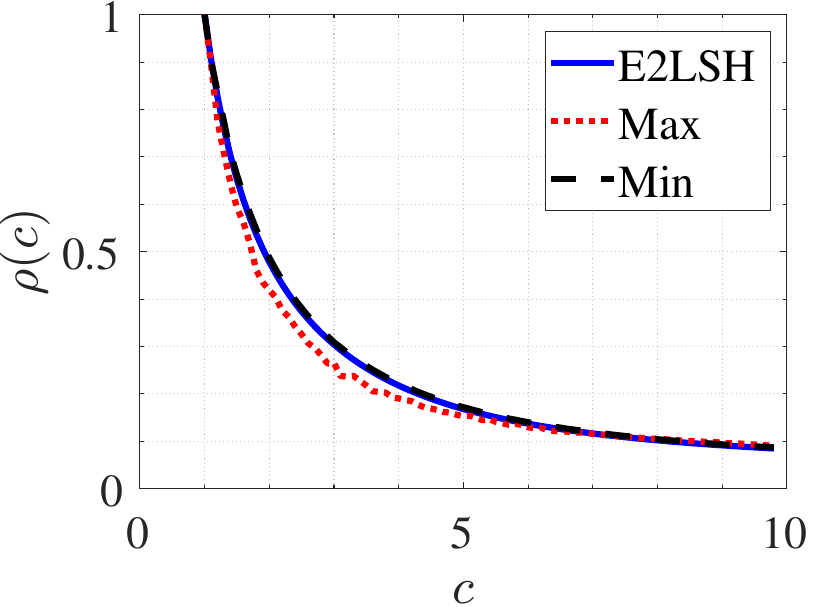}}
        \centerline{\scriptsize (c-3) Cifar: $\tilde{w}$ = 2.43 and $w$ = 10}
	\end{minipage}
    \hfill
	\begin{minipage}{0.3\textwidth}
        \centering
		\centerline{\includegraphics[width=\textwidth]{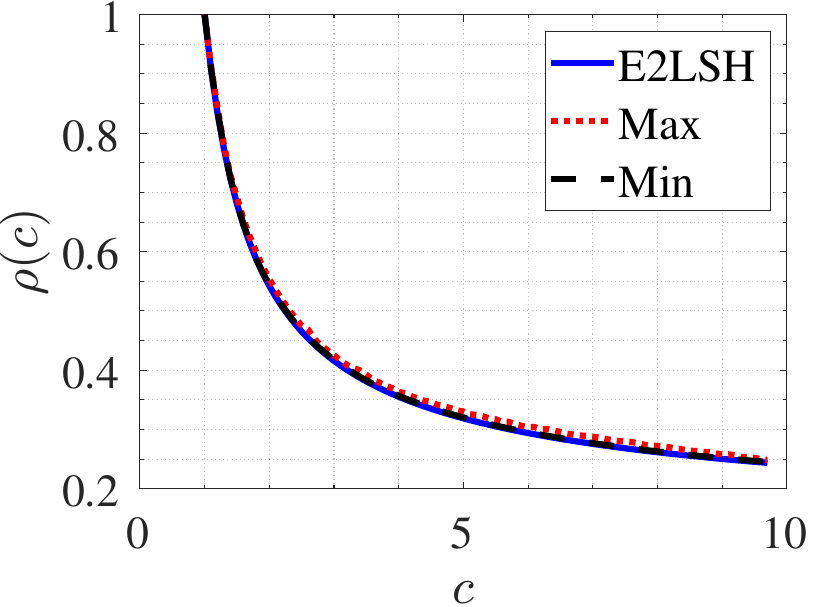}}
        \centerline{\scriptsize (d-1) Deep: $\tilde{w}$ = 0.5 and $w$ = 1.5}
	\end{minipage}
    \hfill
	\begin{minipage}{0.3\textwidth}
        \centering
		\centerline{\includegraphics[width=\textwidth]{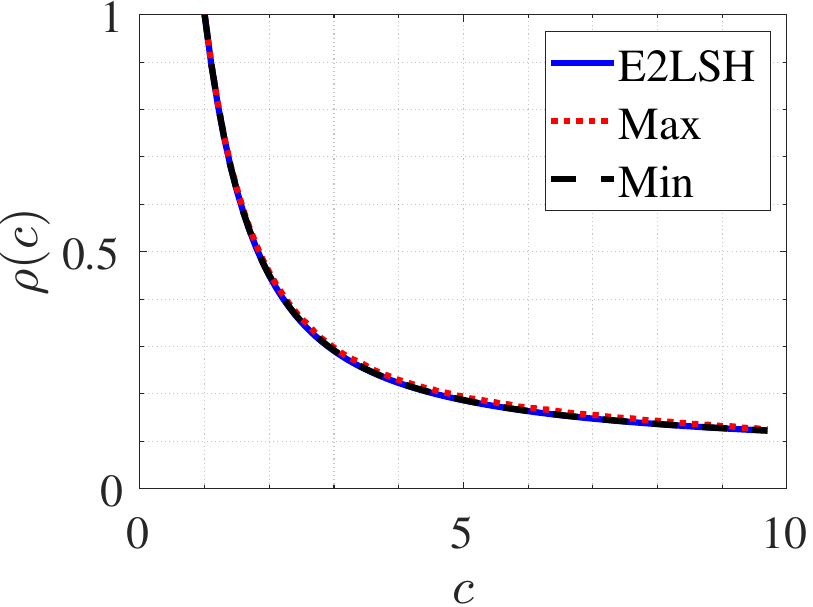}}
        \centerline{\scriptsize (d-2) Deep: $\tilde{w}$ = 1.36 and $w$ = 4}
	\end{minipage}
    \hfill
	\begin{minipage}{0.3\textwidth}
        \centering
		\centerline{\includegraphics[width=\textwidth]{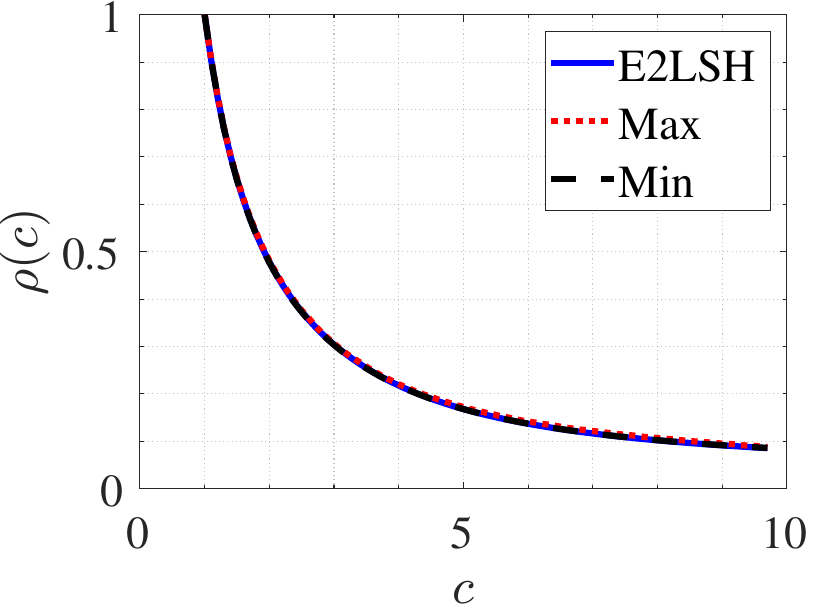}}
        \centerline{\scriptsize (d-3) Deep: $\tilde{w}$ = 3.4 and $w$ = 10}
	\end{minipage}
    \hfill
	\begin{minipage}{0.3\textwidth}
        \centering
		\centerline{\includegraphics[width=\textwidth]{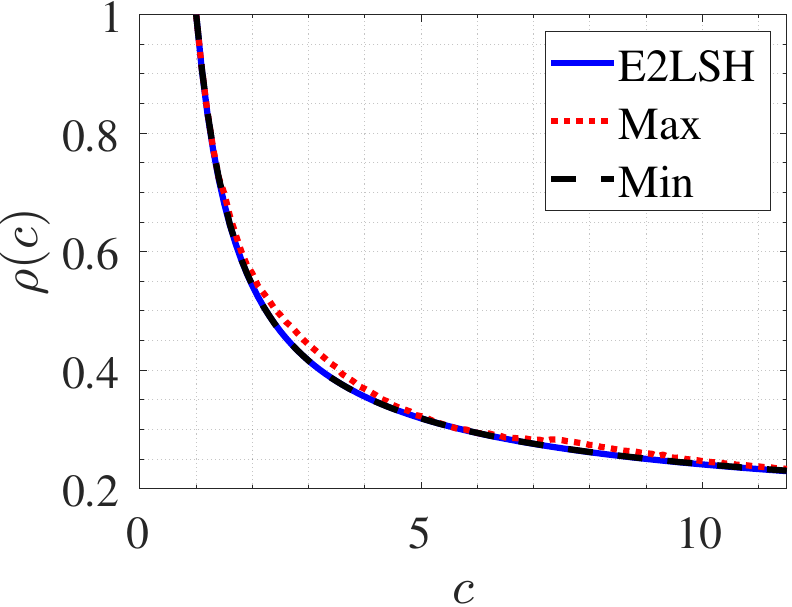}}
        \centerline{\scriptsize (e-1) Glove: $\tilde{w}$ = 0.81 and $w$ = 1.5}
	\end{minipage}
    \hfill
	\begin{minipage}{0.3\textwidth}
        \centering
		\centerline{\includegraphics[width=\textwidth]{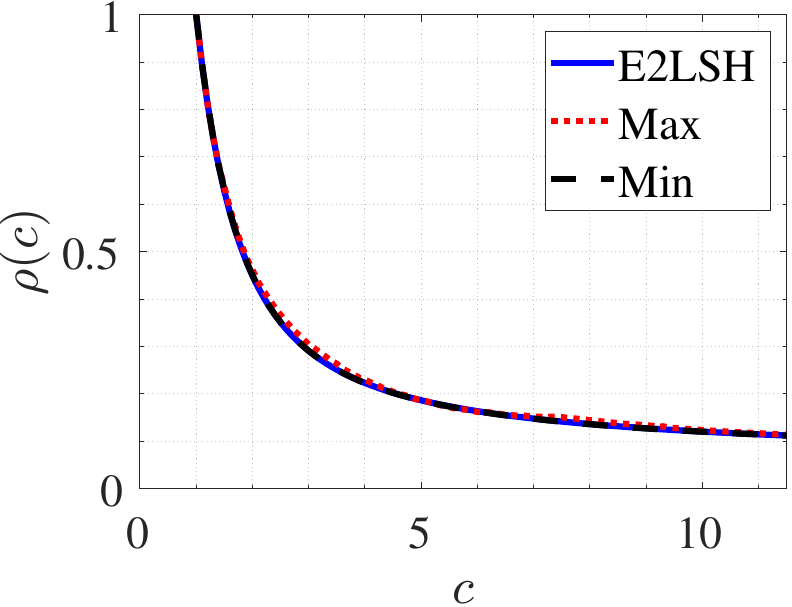}}
        \centerline{\scriptsize (e-2) Glove: $\tilde{w}$ = 2.2 and $w$ = 4}
	\end{minipage}
    \hfill
	\begin{minipage}{0.3\textwidth}
        \centering
		\centerline{\includegraphics[width=\textwidth]{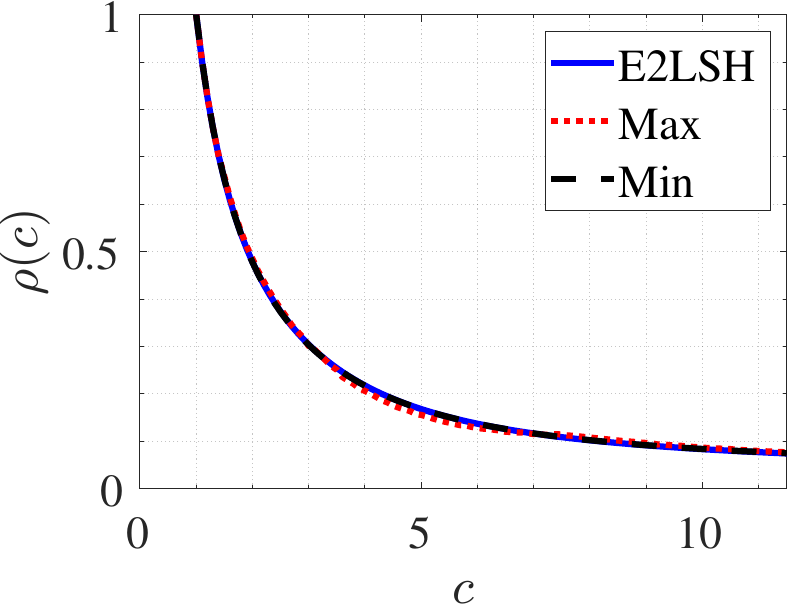}}
        \centerline{\scriptsize (e-3) Glove: $\tilde{w}$ = 5.45 and $w$ = 10}
	\end{minipage} 	
    \caption{$\rho$ curves under different bucket widths over datasets \emph{Random}, \emph{Audio} and \emph{Cifar}, \emph{Deep} and \emph{Glove}.}
    \label{fig:rho-curve-for-Random-Audio-Cifar-Deep-Glove}
\end{figure}

\begin{figure}[t]	
	\centering
        \begin{minipage}{0.3\textwidth}
        \centering
		\centerline{\includegraphics[width=\textwidth]{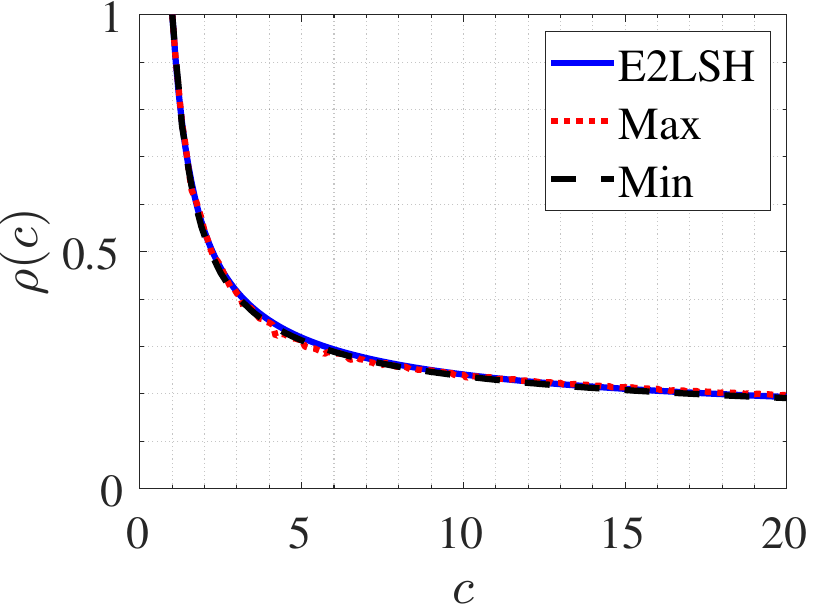}}
        \centerline{\scriptsize (a-1) ImageNet: $\tilde{w}$ = 0.635 and $w$ = 1.5}
	\end{minipage}
    \hfill
	\begin{minipage}{0.3\textwidth}
        \centering
		\centerline{\includegraphics[width=\textwidth]{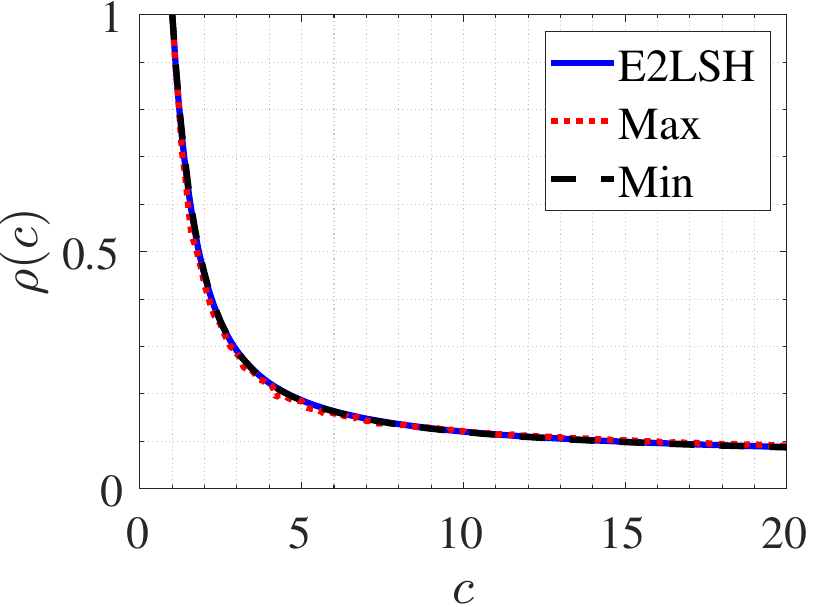}}
        \centerline{\scriptsize (a-2) ImageNet: $\tilde{w}$ = 1.8 and $w$ = 4}
	\end{minipage}
    \hfill
	\begin{minipage}{0.3\textwidth}
        \centering
		\centerline{\includegraphics[width=\textwidth]{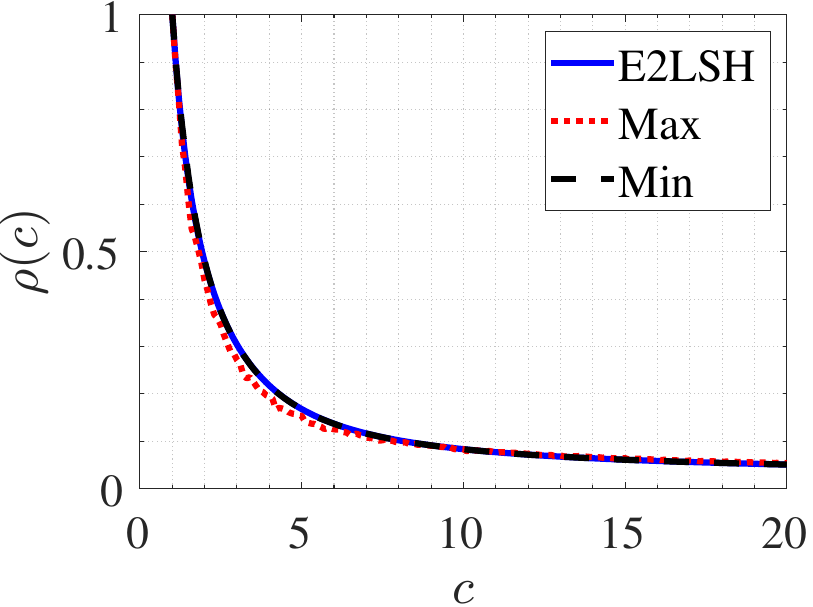}}
        \centerline{\scriptsize (a-3) ImageNet: $\tilde{w}$ = 4.45 and $w$ = 10}
	\end{minipage}
    \hfill
	\begin{minipage}{0.3\textwidth}
        \centering
		\centerline{\includegraphics[width=\textwidth]{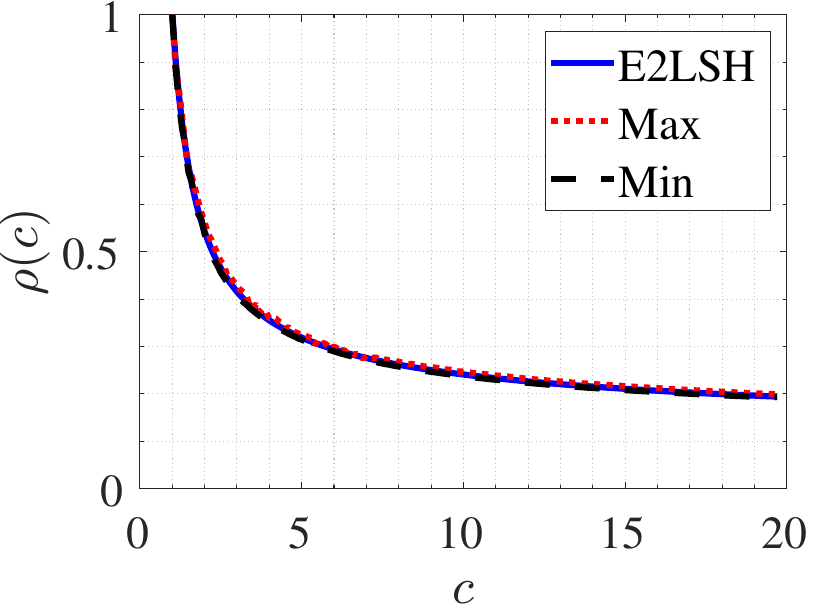}}
        \centerline{\scriptsize (b-1) Notre: $\tilde{w}$ = 0.69 and $w$ = 1.5}
	\end{minipage}
    \hfill
	\begin{minipage}{0.3\textwidth}
        \centering
		\centerline{\includegraphics[width=\textwidth]{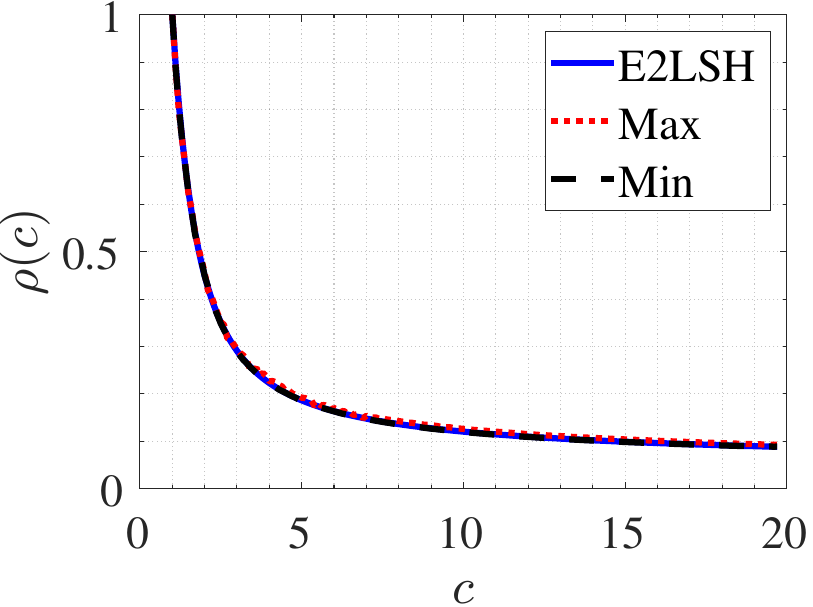}}
        \centerline{\scriptsize (b-2) Notre: $\tilde{w}$ = 1.9 and $w$ = 4}
	\end{minipage}
    \hfill
	\begin{minipage}{0.3\textwidth}
        \centering
		\centerline{\includegraphics[width=\textwidth]{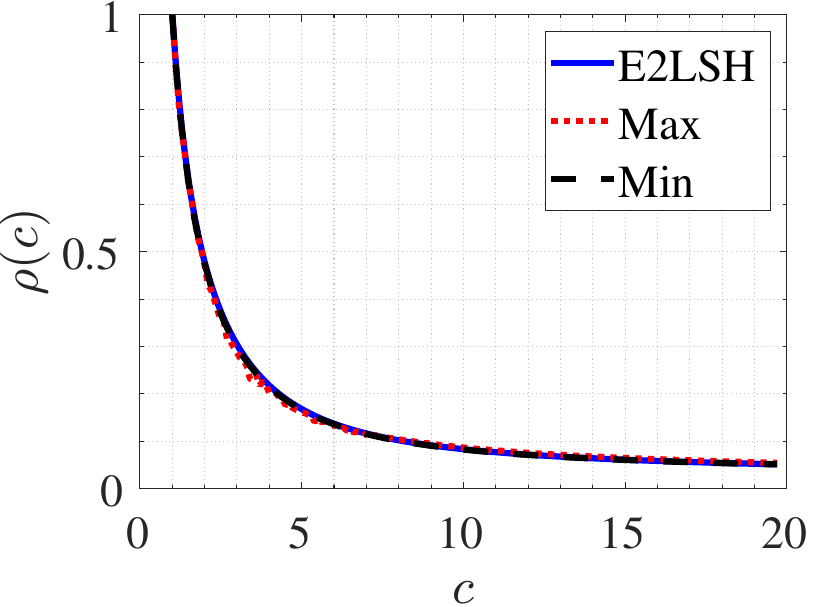}}
        \centerline{\scriptsize (b-3) Notre: $\tilde{w}$ = 4.75 and $w$ = 10}
	\end{minipage}
    \hfill
	\centering
	\begin{minipage}{0.3\textwidth}
        \centering
		\centerline{\includegraphics[width=\textwidth]{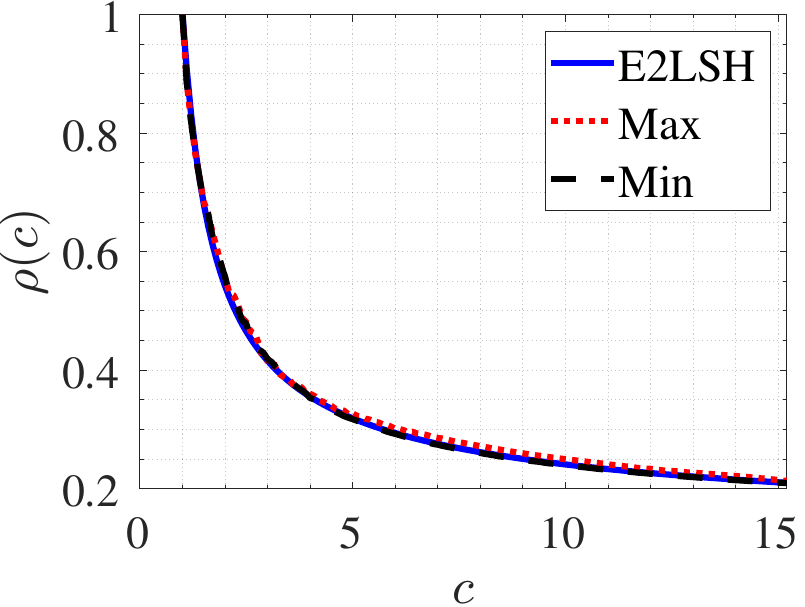}}
        \centerline{\scriptsize (c-1) Sift: $\tilde{w}$ = 0.68 and $w$ = 1.5}
	\end{minipage}
    \hfill
	\begin{minipage}{0.3\textwidth}
        \centering
		\centerline{\includegraphics[width=\textwidth]{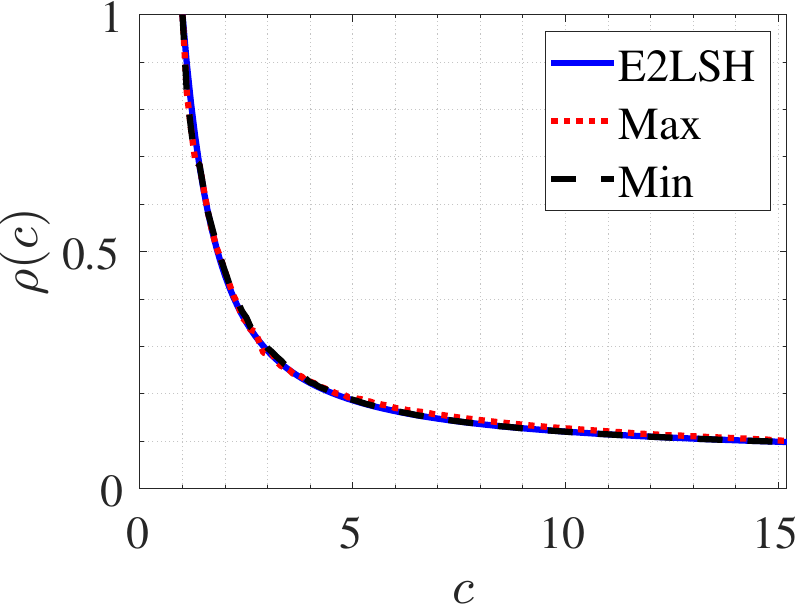}}
        \centerline{\scriptsize (c-2) Sift: $\tilde{w}$ = 1.9 and $w$ = 4}
	\end{minipage}
    \hfill
	\begin{minipage}{0.3\textwidth}
        \centering
		\centerline{\includegraphics[width=\textwidth]{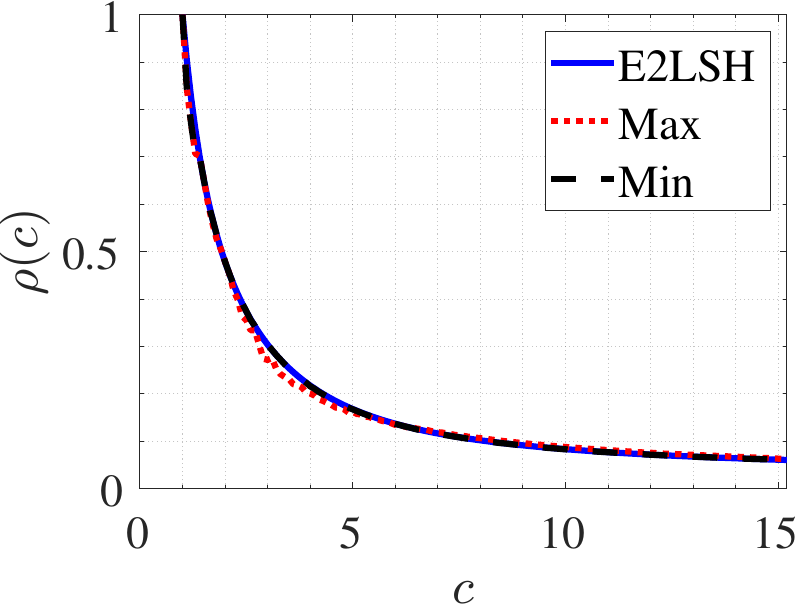}}
        \centerline{\scriptsize (c-3) Sift: $\tilde{w}$ = 4.75 and $w$ = 10}
	\end{minipage}
    \hfill
	\begin{minipage}{0.3\textwidth}
        \centering
		\centerline{\includegraphics[width=\textwidth]{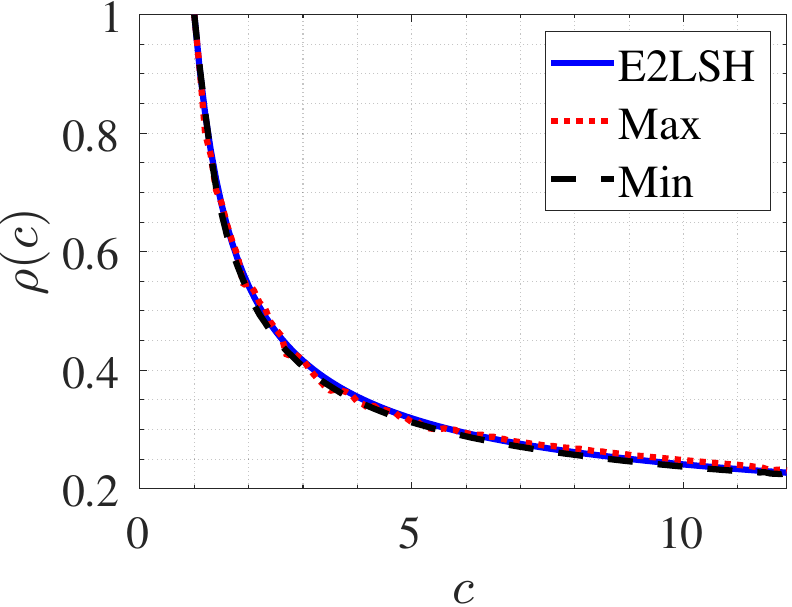}}
        \centerline{\scriptsize (d-1) Sun: $\tilde{w}$ = 0.343 and $w$ = 1.5}
	\end{minipage}
    \hfill
	\begin{minipage}{0.3\textwidth}
        \centering
		\centerline{\includegraphics[width=\textwidth]{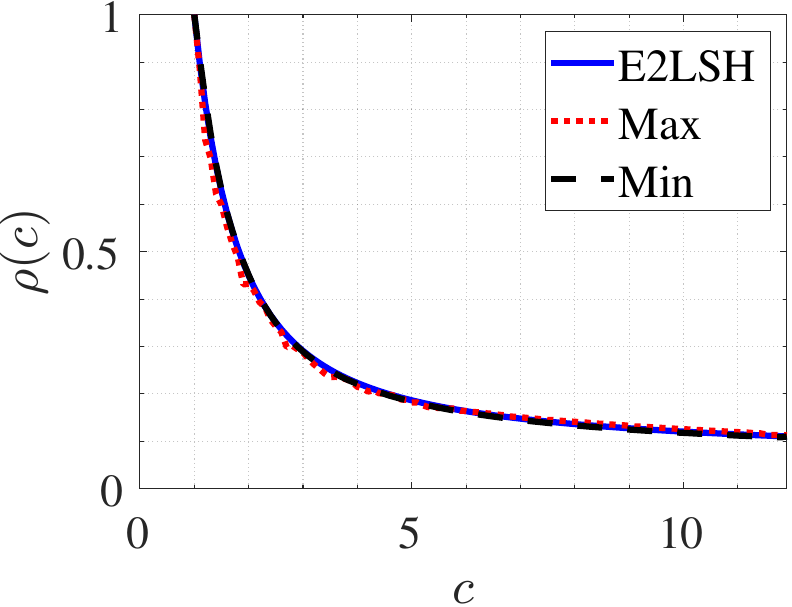}}
        \centerline{\scriptsize (d-2) Sun: $\tilde{w}$ = 0.98 and $w$ = 4}
	\end{minipage}
    \hfill
	\begin{minipage}{0.3\textwidth}
        \centering
		\centerline{\includegraphics[width=\textwidth]{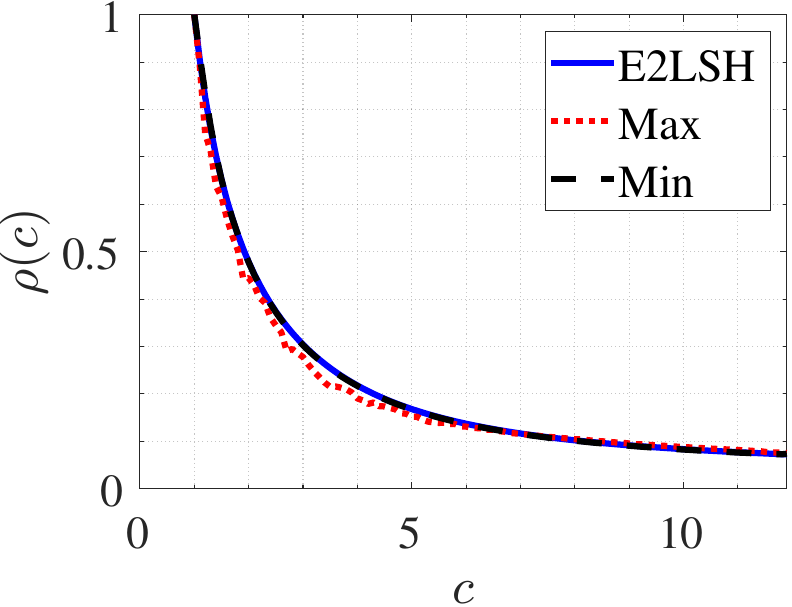}}
        \centerline{\scriptsize (d-3) Sun: $\tilde{w}$ = 2.4 and $w$ = 10}
	\end{minipage}
    \hfill
	\begin{minipage}{0.3\textwidth}
        \centering
		\centerline{\includegraphics[width=\textwidth]{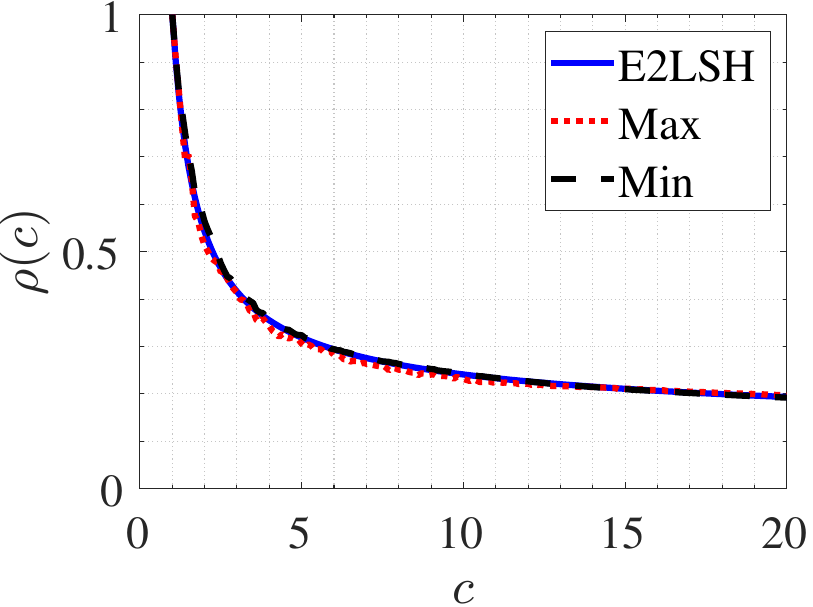}}
        \centerline{\scriptsize (e-1) Ukbench: $\tilde{w}$ = 0.7 and $w$ = 1.5}
	\end{minipage}
    \hfill
	\begin{minipage}{0.3\textwidth}
        \centering
		\centerline{\includegraphics[width=\textwidth]{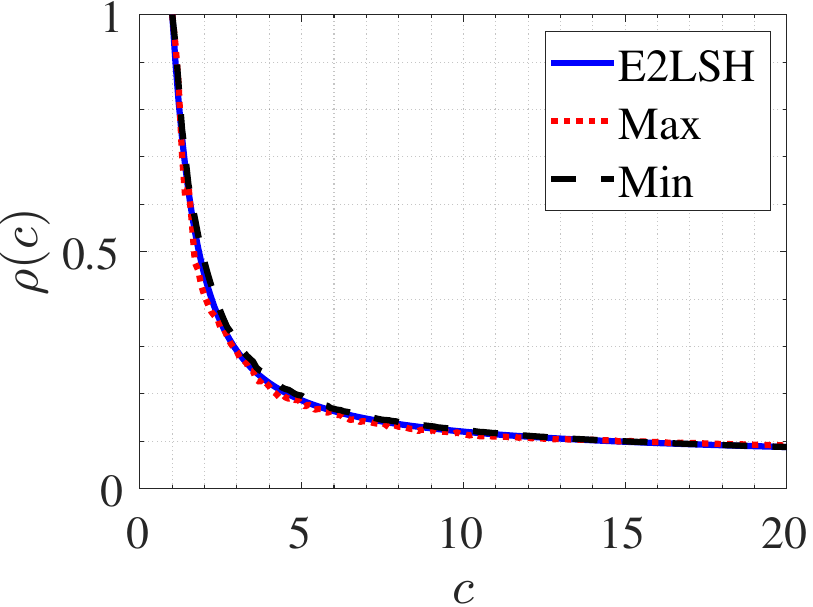}}
        \centerline{\scriptsize (e-2) Ukbench: $\tilde{w}$ = 2 and $w$ = 4}
	\end{minipage}
    \hfill
	\begin{minipage}{0.3\textwidth}
        \centering
		\centerline{\includegraphics[width=\textwidth]{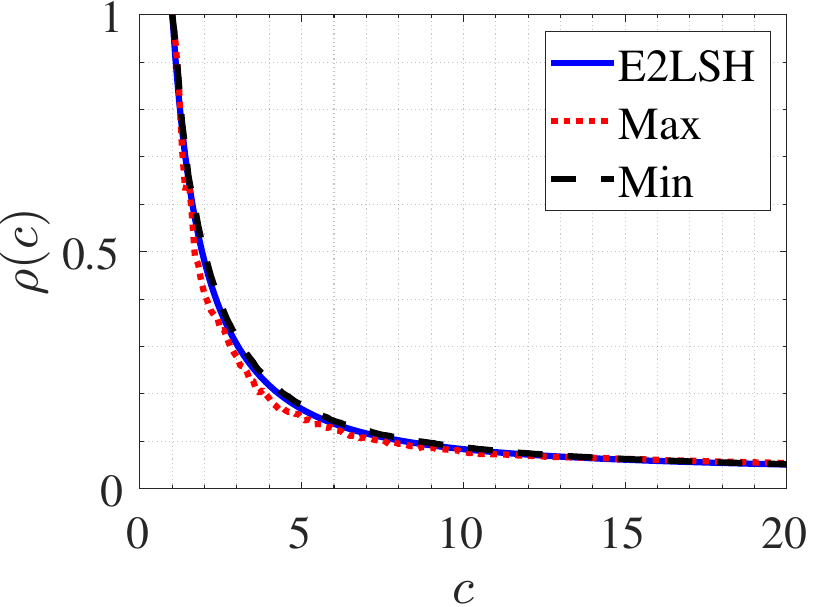}}
        \centerline{\scriptsize (e-3) Ukbench: $\tilde{w}$ = 4.95 and $w$ = 10}
	\end{minipage}
    \caption{$\rho$ curves under different bucket widths over datasets \emph{ImageNet}, \emph{Notre}, \emph{Sift}, \emph{Sun} and \emph{Ukbench}.}
    \label{fig:rho-curve-for-image-notre-sift-sun-ukbench}
\end{figure}

\begin{figure}[t]
	\centering
	\begin{minipage}{0.32\textwidth}
		\centering
		\centerline{\includegraphics[width=\textwidth]{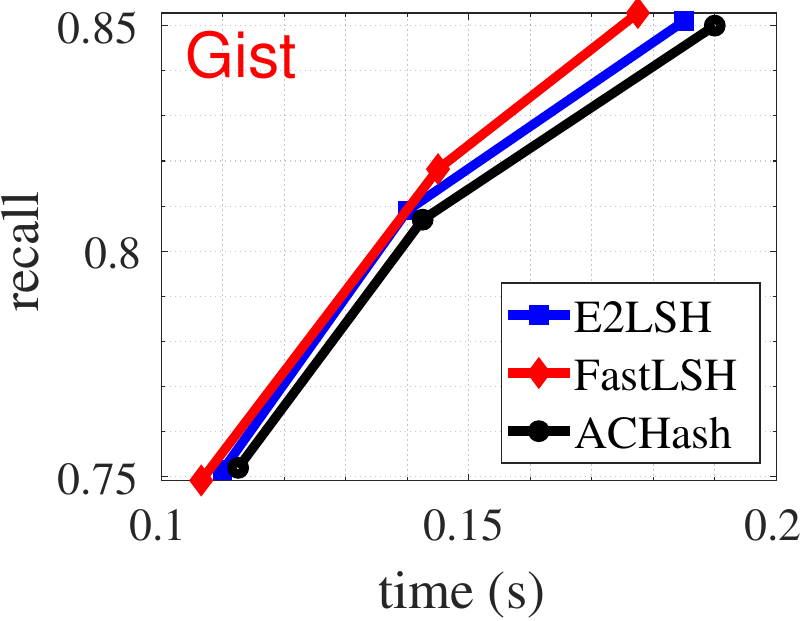}}
	\end{minipage}
    \hfill
	\begin{minipage}{0.32\textwidth}
        \centering
		\centerline{\includegraphics[width=\textwidth]{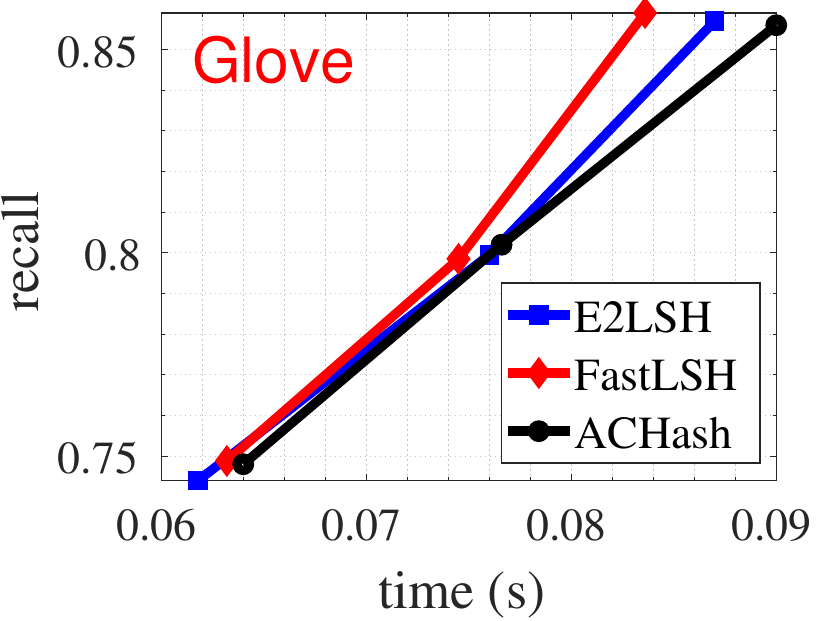}}
	\end{minipage}
    \hfill
    \begin{minipage}{0.32\textwidth}
        \centering
		\centerline{\includegraphics[width=\textwidth]{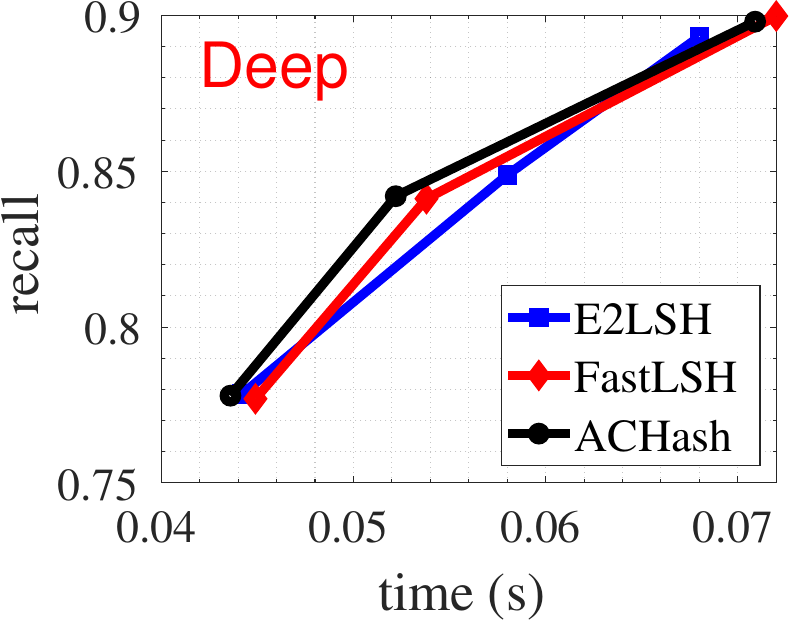}}
	\end{minipage}
    \hfill
	\begin{minipage}{0.32\textwidth}
        \centering
		\centerline{\includegraphics[width=\textwidth]{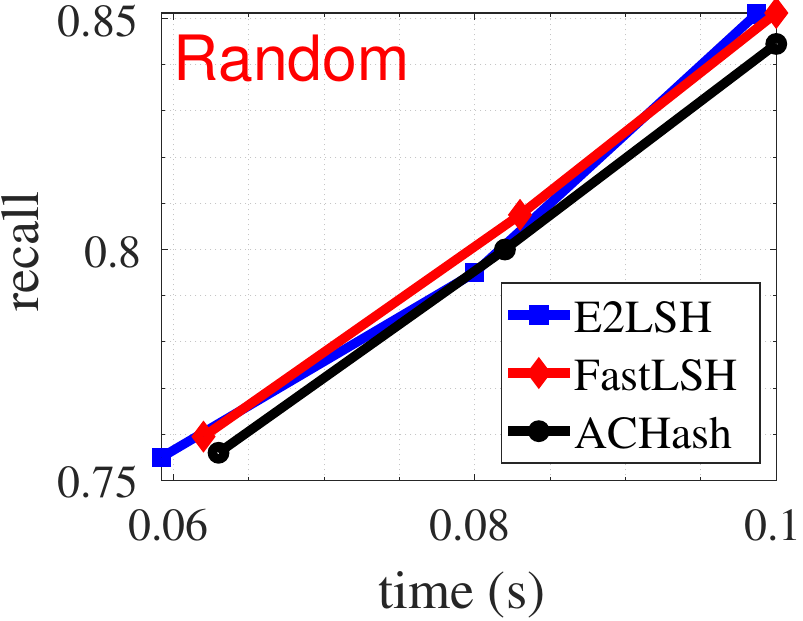}}
	\end{minipage}
   \hfill
	\begin{minipage}{0.32\textwidth}
        \centering
		\centerline{\includegraphics[width=\textwidth]{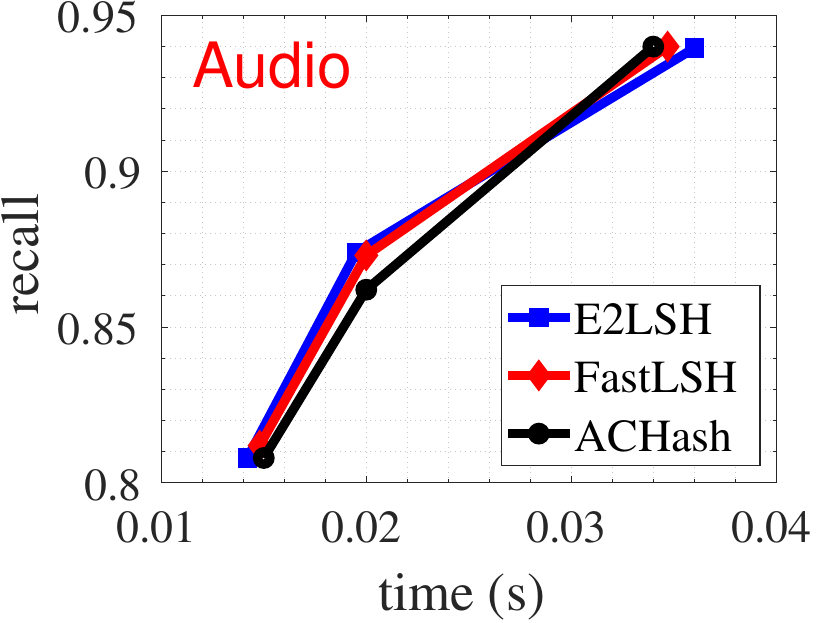}}
	\end{minipage}
    \hfill
	\begin{minipage}{0.32\textwidth}
        \centering
		\centerline{\includegraphics[width=\textwidth]{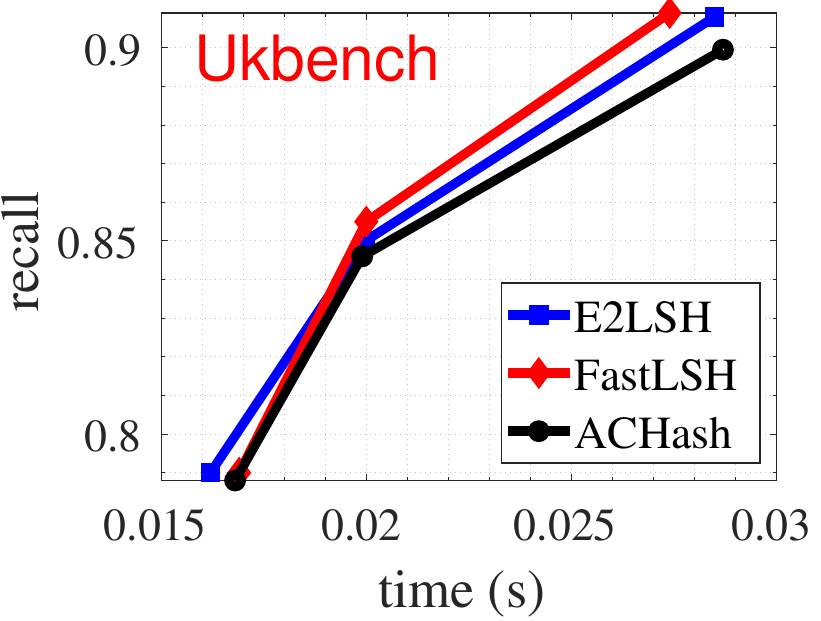}}
	\end{minipage}
    \hfill
	\begin{minipage}{0.32\textwidth}
        \centering
		\centerline{\includegraphics[width=\textwidth]{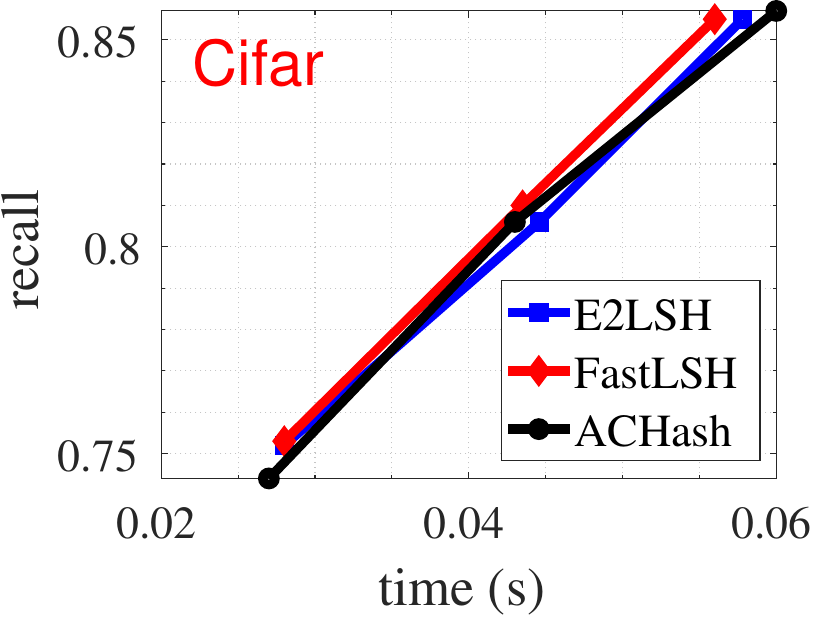}}
	\end{minipage}
    \hfill
    \begin{minipage}{0.32\textwidth}
        \centering
		\centerline{\includegraphics[width=\textwidth]{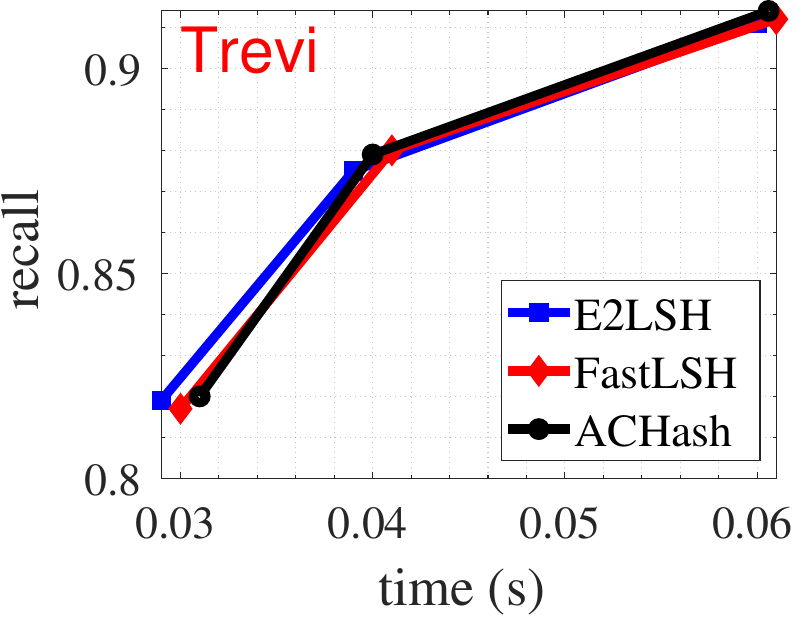}}
	\end{minipage}
    \hfill
    \begin{minipage}{0.32\textwidth}
        \centering
		\centerline{\includegraphics[width=\textwidth]{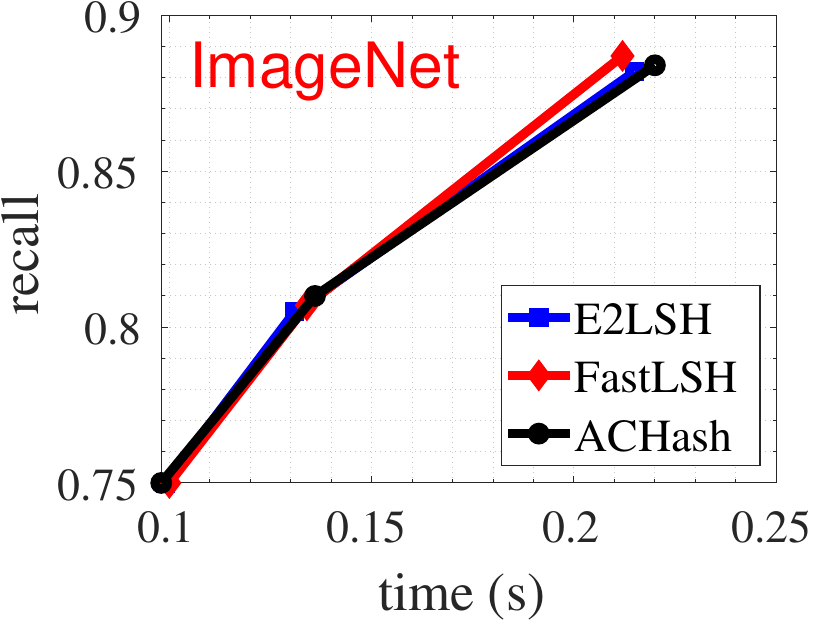}}
	\end{minipage}
    \hfill
    \begin{minipage}{0.32\textwidth}
        \centering
		\centerline{\includegraphics[width=\textwidth]{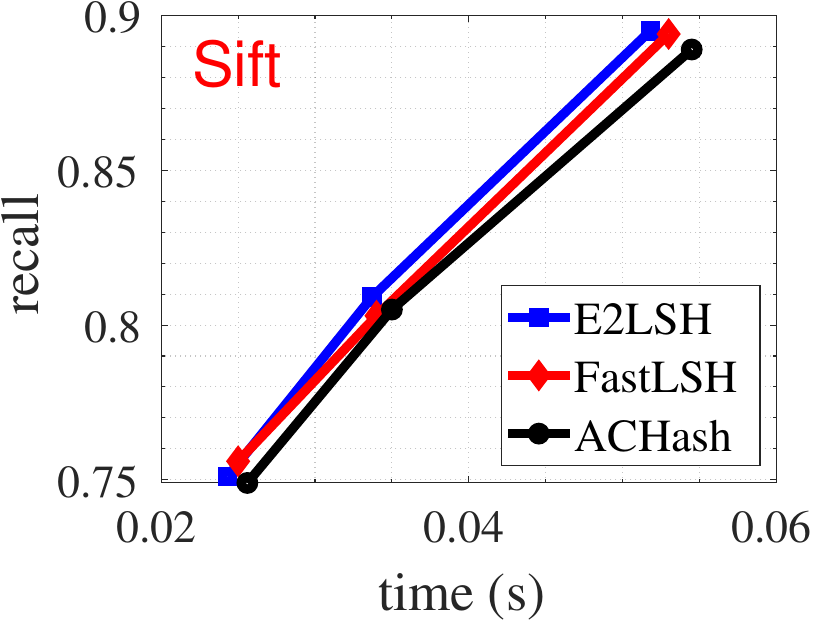}}
	\end{minipage}
    \hfill
    \begin{minipage}{0.32\textwidth}
        \centering
		\centerline{\includegraphics[width=\textwidth]{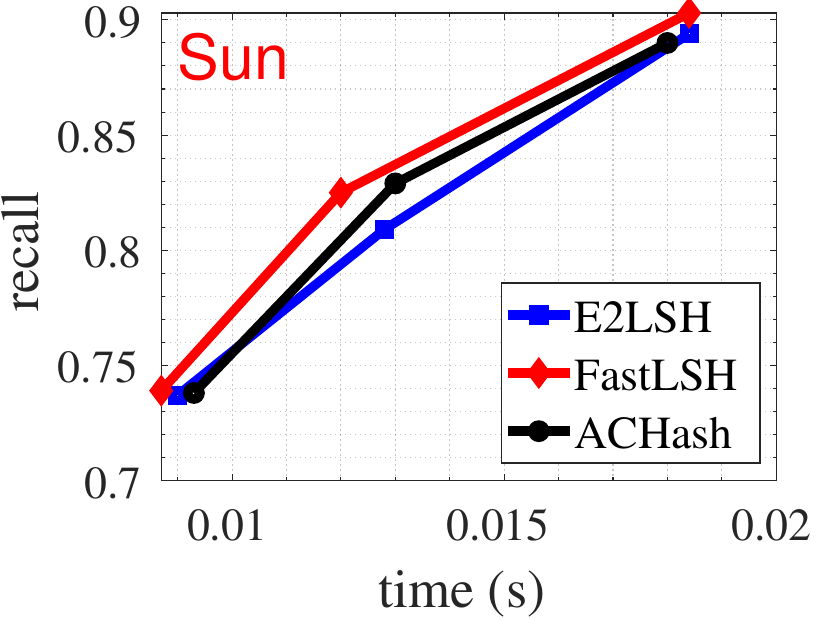}}
	\end{minipage}
    \hfill
    \begin{minipage}{0.32\textwidth}
        \centering
		\centerline{\includegraphics[width=\textwidth]{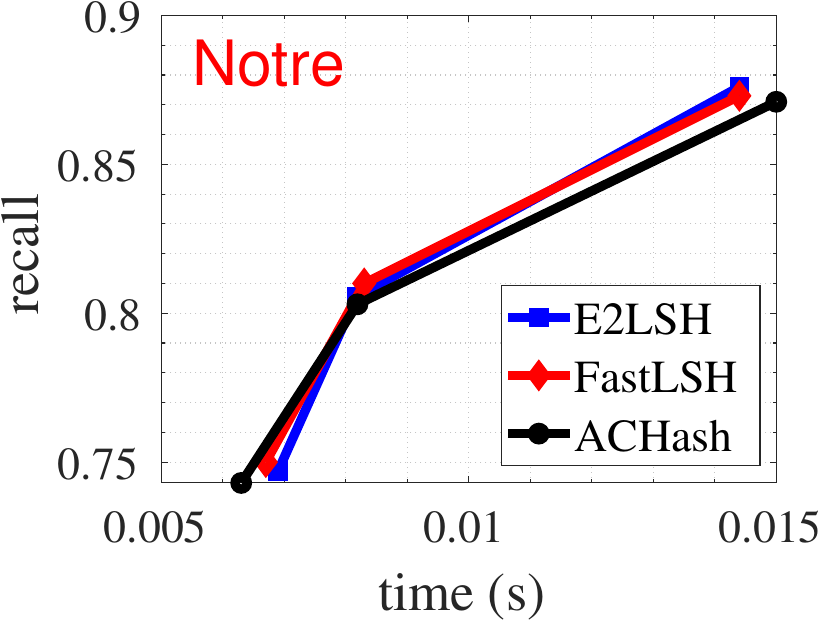}}
	\end{minipage}
    \caption{Recall v.s. average query time}
    \label{fig:time-recall-curve-in-appendix}
\end{figure}

\end{document}